\documentclass[article]{article}
\pdfoutput=1
\usepackage[utf8]{inputenc} 
\usepackage[T1]{fontenc}    
\usepackage{url}            

\usepackage{fullpage}
\usepackage[round]{natbib}

\usepackage{booktabs}       
\usepackage{amsfonts}       
\usepackage{nicefrac}       
\usepackage{microtype} 
\usepackage[dvipsnames]{xcolor}
\usepackage[backref=page,colorlinks=true,linkcolor=magenta, citecolor=NavyBlue]{hyperref}%
\usepackage{epsf}
\usepackage{graphics}
\usepackage{pifont}
\usepackage{subcaption}
\usepackage{psfrag}

\usepackage{color}

\usepackage{mathtools}
\usepackage{amsfonts}
\usepackage{amsthm}
\usepackage{amsmath}
\usepackage{amssymb}


\newtheorem{theorem}{Theorem}

\newtheorem{lemma}{Lemma}
\newtheorem{corollary}{Corollary}
\newtheorem{definition}{Definition}
\newtheorem{remark}{Remark}



\long\def\comment#1{}

\newcommand{\bm}[1]{\mathbf{#1}}









%










\DeclareMathOperator{\trace}{\mathsf{tr}}








\usepackage{marvosym} 

\usepackage[english]{babel}
\usepackage[ruled,algo2e]{algorithm2e}

\SetCommentSty{mycommfont}
\usepackage{appendix}

\newcommand{\order}{\ensuremath{\mathcal{O}}}

\usepackage{amsfonts}
\definecolor{aoenglish}{rgb}{0.0, 0.5, 0.0}
\definecolor{burgundy}{rgb}{0.5, 0.0, 0.13}
\begin{document}

\begin{center}

{\bf{\LARGE{Efficient Fair Principal Component Analysis }}}

\vspace{.2in}

{\large{
\begin{tabular}{cc}
Mohammad\,Mahdi Kamani$^\dagger$~~Farzin Haddadpour$^\ddagger$~~Rana Forsati$^\ast$~~Mehrdad Mahdavi$^\ddagger$ \\
\end{tabular}
}}
\vspace{.2in}

\begin{tabular}{c}
$^\dagger$College of Information Sciences and Technology \\
$^\ddagger$School of  Electrical Engineering and Computer Science\\
The Pennsylvania State University\\
University Park, PA, USA \\
\texttt{\{mqk5591, fxh18, mzm616\}@psu.edu}
\\
\\
$^\ast$Microsoft Bing \\
Bellevue, WA, USA\\
\texttt{raforsat@microsoft.com}
\end{tabular}
\vspace*{.2in}

\date{\today}

\end{center}

\begin{abstract}
It has been shown that dimension reduction methods such as PCA may be inherently prone to unfairness and treat data from different sensitive groups such as race, color, sex, etc., unfairly. In pursuit of fairness-enhancing dimensionality reduction, using the notion of Pareto optimality, we propose an adaptive first-order algorithm to learn a  subspace that preserves fairness, while slightly compromising the reconstruction loss. Theoretically, we provide sufficient conditions that the solution of the proposed algorithm belongs to the Pareto frontier for all sensitive groups; thereby, the optimal trade-off between overall reconstruction loss and fairness constraints is guaranteed. We also provide the convergence analysis of our algorithm and show its efficacy through empirical studies on different datasets, which demonstrates superior performance in comparison with state-of-the-art algorithms. The proposed fairness-aware PCA algorithm can be efficiently generalized to multiple group sensitive features and effectively reduce the unfairness decisions in downstream tasks such as classification. 
\end{abstract}
\section{Introduction}\label{sec:intro}
Recent advances in machine learning (ML) have vastly improved the capabilities of computational reasoning in complex domains. From tasks like image and video processing, game playing, text classification, to complex data analysis, machine learning is continually finding new applications and exceeding human-level performance in some cases. Nevertheless,  when machine learning models are trained on real data, the existing societal inequalities in data are manifested on the systems built upon them that could mislead models in ways that can have profound fairness implications such as being biased to sensitive features like race or gender. As more critical systems employ ML, such as financial systems, hiring and admissions, healthcare, and law, it is vitally important that we develop rigorous fair algorithms that are as accurate as possible.  

Recently, the growing attention to the fairness problem in algorithmic decision-making systems has led to unprecedented attempts to revisit machine learning models for supervised and unsupervised tasks to satisfy fairness constraints~\citep{executive2016big}.
An expanding line of work dedicated to define different metrics for fairness problems and mechanisms to satisfy those measures in learning tasks such as~\cite{hardt2016equality,zafar2017parity,calders2009building,calders2010three,kamishima2011fairness, agarwal2018reductions,zafar2017fairness,zafar2015fairness}. The work on this realm is focused on biased data or biased algorithms; however, using these biased algorithms in decision-making systems would lead to generating more biased data. This makes the causality of the fairness problem more complicated that exacerbates the problem even further~\citep{barocas2017fairness,ghili2019eliminating}. 

Notwithstanding these flourishing efforts for fairness problem in supervised learning, fairness in unsupervised learning tasks has not been explored thoroughly. This is despite the fact that unsupervised learning tasks such as dimension reductions are mostly preceding those supervised ones, in the training procedures. Hence, having fair unsupervised learning models is as crucial as supervised ones. For instance, Principal Component Analysis (PCA) is widely used to reduce the dimension of the data before applying classification models. In addition to that, these unsupervised methods such as dimension reductions or clustering methods are commonly used for data visualizations, identifying common behaviors or trends, reduce the size of data, to name but a few. This ubiquitousness of unsupervised methods in machine learning models can affect decision-making systems if they unfairly treat different groups in data.

In this paper, we aim at defining a fairness measure for dimension reduction algorithms like PCA and propose an algorithm to enforce these criteria in finding the subspace with minimum reconstruction loss. It is important to note that, despite supervised learning that fairness metrics are mostly focused on the beneficial outcome (usually the positive label), in an unsupervised task, there is no label to be used. Hence, we seek to find a subspace that is ``good'' enough for each protected group in the data. Indeed, when we apply PCA on a dataset, the resulting subspace found by a standard algorithm is different from what we achieve when only using the data of each group individually. This difference can be reflected as the difference between the reconstruction error of each group's data on both subspaces. Thus, when a dimension reduction algorithm is applied to the joint data, the reconstruction loss of some of the groups is degraded  (from what they can achieve with their data only), while others are benefiting from joint learning. Here, a fair algorithm is the one that can find a subspace with optimal trade-offs between these degradations and benefits.

\begin{table}[t]
\centering
\begin{tabular}{|c|c|c|c|}
\hline
                                  Scheme            & Time Complexity     & Fair Dimension   & Algorithm                                       \\ \hline
\multicolumn{1}{|c|}{\cite{samadi2018price}}      & $\order\left(d^{6.5}\log\left(\frac{1}{\epsilon}\right)\right)$ & $r+k-1$   & SDP + LP              \\ \hline
\multicolumn{1}{|c|}{\cite{samadi2018price}}      & $\order(\frac{d^3}{\epsilon^2})$ & $r+k-1$ & SDP via MW  + LP              \\ \hline
\multicolumn{1}{|c|}{\cite{morgenstern2019fair}} & $\order\left(d^{6.5}\log\left(\frac{1}{\epsilon}\right)\right)$ & $r+\lfloor \sqrt{2k+\frac{1}{4}} - \frac{3}{2}\rfloor$ & SDP + LP    \\ \hline
\multicolumn{1}{|c|}{\cite{morgenstern2019fair}} & $\order(\frac{d^3}{\epsilon^2})$ & $r+\lfloor \sqrt{2k+\frac{1}{4}} - \frac{3}{2}\rfloor$ & SDP via MW + LP   \\ \hline
\multicolumn{1}{|c|}{\textbf{This Work}}                    & \boldmath{$\order(\frac{r^2d}{\epsilon^2})$}            & \boldmath{$r$}     & GD  \\ \hline

\end{tabular}
\caption{\sffamily{Comparison of time complexity of different fair PCA algorithms to achieve an $\epsilon$-fair subspace (please see Section~\ref{sec:form} for definition). Here $d$ denote the dimension of the original data, $r$ is the target dimension, and $k$ is the number of sensitive groups. Note that unlike previous studies that necessitates learning a  subspace with larger dimension to   guarantee fairness, our solution learns an exact $r$ dimensional subspace by imposing additional constrains captured by a new notion of fairness proposed in this work to distinguish between local optimal fair subspaces (we used the following abbreviations  above, SDP: Semi-Definite Programming, MW: Multiplicative Weight Algorithm, LP: Linear Programming, GD: Gradient Descent). Note that all the algorithms have an initial step of finding the optimal rank $r$ subspace for each group, in which its time complexity is not included here.}}\label{tab:comp}
\vspace{-0.5cm}
\end{table}

An attempt to impose the fairness constraint on learning the optimal subspace for two protected groups has been made recently in~\cite{samadi2018price,olfat2018convex,morgenstern2019fair}, where the fair subspace learning is sought by minimizing the maximum \textit{deviation of reconstruction error}  suffered by any protected group (i.e., the difference of per group reconstruction error and joint reconstruction error). Interestingly, it has been shown that at any optimal local solution of the optimization problem associated with learning such a fair subspace, all the groups suffer the same loss. Motivated by this observation,  a  semi-definite programming relaxation followed by linear programming is proposed to find a fair subspace~\citep{samadi2018price,olfat2018convex,morgenstern2019fair}. In addition to the computational inefficiency of algorithms proposed by these works, the generalization of them to multiple group sensitive features is not conspicuous. Furthermore, since all optimal solutions do not assign same loss to all groups, extra dimensions are needed to ensure that the total loss of the projection remains at most the optimal objective in the original target dimension (in particular,  $k -1$ extra dimensions are needed for $k$ groups in~\cite{samadi2018price} which is further tightened to $\sqrt{k}$ in a followup work~\citep{morgenstern2019fair}). 

The overarching goal of this paper is to define a fairness metric for dimension reduction, dubbed as pairwise disparity error, and propose a computationally efficient dimensional reduction algorithm to learn a fair subspace from multiple group sensitive features. Towards this end, we cast the problem of fairness in the PCA dimension reduction algorithm as a multi-objective optimization problem and propose an adaptive gradient descent based approach to find the optimal trade-offs with provable convergence rates. Interestingly, the proposed framework is not bounded to any specific notion of fairness metric and can be effortlessly applied to other metrics as well. Moreover, unlike the aforementioned prior works, no extra dimension is needed to ensure the loss suffered by each group matches the optimal fairness loss. The comparison of time complexity of exiting algorithms and current work is summarized in Table~\ref{tab:comp}.

\paragraph{Contributions}
The main contributions of this paper can be summed up as follows: 
\begin{itemize}
    \item We introduce the notion of Pareto fair PCA to ponder  conflicting objectives and achieve optimal trade-offs between them. Also, we introduce the notion of \textbf{pairwise disparity error} as a more efficient objective to learn fair subspaces. In addition, we provide conditions, under which a Pareto optimal solution exists.
    \item We propose a gradient descent algorithm to efficiently solve the obtained multi-objective optimization problem which is interesting by its own right, and provide theoretical  guarantees on  its convergence to optimal compromises or a Pareto stationary point.
    \item We empirically develop this algorithm and compare it to the state-of-the-art algorithm on two real-world datasets to demonstrate its efficacy that complements our theoretical results.
    \item We investigate the effect of fair projection  on supervised tasks such as classification  empirically and show that it  can significantly eliminate the unfairness in downstream tasks.
\end{itemize}

\section{Related Work}
The efforts to address fairness in algorithmic decision-making systems have roughly fallen into three different categories. Some scholars believe data itself could be biased, leading to unfair results; thus, they seek to solve this problem on data level and as a preprocessing step to the main learning task~\citep{dwork2012fairness,feldman2015certifying, kamiran2009classifying,calders2009building}. The goal is achieved by either changing the value of sensitive feature or label data or find a subspace, where labels and sensitive features are independent. However, since the main objective of the learning is not involved in this process, the optimal solution for the main objective is not guaranteed. The second category includes methods that try to impose the fairness criteria after the learning, in order to attain a fair model~\citep{hardt2016equality,kamishima2011fairness, goh2016satisfying,calders2010three}. The third approach, includes methods that try to satisfy fairness constraint during the training procedure, usually by imposing them as a constraint to the main learning objective~\citep{donini2018empirical,morgenstern2019fair,zafar2015fairness,samadi2018price,pleiss2017fairness}. Some of these approaches treat the fairness problem similar to imbalanced data or rare event prediction~\citep{yao2017beyond,kamani2019targeted,kamani2018skeleton,kamani2016shape,nikbakht2019direct}. While these approaches can achieve the state-of-the-art results in some problems, they still suffer from several issues. Solving a constrained optimization could be a very hard non-convex problem; hence, relaxation is needed to solve the problem that leads to sub-optimal solutions efficiently. Moreover, finding the optimal {penalization parameter} could be a difficult task, as discussed in~\cite{donini2018empirical}. Our approach belongs to the third category, yet, it differs from the prevailing trend of formulating the fairness problem as a constrained optimization. We will cast the fairness problem as a multi-objective optimization that can efficiently satisfy fairness objectives as well as the main learning objective and converge to a point with optimal compromises between objectives.

Fairness in dimension reduction algorithms is recently being vetted by~\cite{samadi2018price}, through which they propose a semi-definite programming and prove that its solution satisfies the proposed notion of fairness. Aside from the inefficiency of solving the SDP, their approach is developed for binary sensitive features and requires one extra dimension to guarantee fairness. To generalize it for multiple group sensitive features with $k$ groups, they propose to add  $k-1$ dimensions, which is impractical. The follow-up studies by~\cite{olfat2018convex,morgenstern2019fair} are still in line with the previous one, trying to relax and solve an SDP. We, on the other hand, propose an efficient gradient-based method to solve the aforementioned multi-objective optimization, with the capability of generalizing to multiple group sensitive features smoothly. 

Although it has been asserted that fairness problems are multi-objective problems in nature~\citep{kearns2019ethical, lipton2017does,morgenstern2019fair},  as mentioned before, most of the existing works apply different forms of relaxations and approximations to reduce the problem into a scalar-valued optimization problem. In this paper, we design the fairness problem at hand as a multi-objective optimization and solve it directly. Multi-objective or vector optimization is a well-studied problem in different domains for many years. The goal in this optimization is to achieve an optimal trade-off point between different objectives, known as Pareto optimal, named after Italian economist Vilfredo Pareto. We refer the reader to \citet{miettinen2012nonlinear} and the references therein as a rich resource on multi-objective optimization. We will elaborate that directly solving the vector-valued problem associated with fair learning is appealing to reduction based counterparts~\citep{ehrgott2006discussion, mahdavi2013stochastic} by being computationally efficient and providing provable guarantees on the fairness metric.

Beyond achieving fairness in unsupervised tasks such as PCA, the main goal of fairness in machine learning is to design a fair system as a whole. As it is noted by~\cite{dwork2018decoupled}, these machine learning models in isolation do not necessarily result in a fair system together and should be considered in composition with each other. Hence, in addition to what introduced by~\cite{dwork2018fairness} as compositions in fairness, we advocate for considering the composition of a stream of machine learning models together. Thus, we should investigate the effect of imposing fairness constraints on a machine learning model on downstream tasks using its output. For instance, the goal of defining such a metric for fairness, in our paper and other related works, is that having a fair loss in reducing the dimension would have a fair reduction in the quality of different groups in the new projection; then, it can have a balanced impact on the quality of a subsequent classifier learned on that projection. We empirically investigate the effect of this composition and leave its theoretical understanding to the future work.

\section{Problem Formulation}\label{sec:form}
We start by mathematically defining the problem we ought to solve, and then discuss what is the notion of fairness in PCA algorithm, which could be quite different from what is known as fairness measures in supervised learning. In what follows  we adapt the following notation. We use bold face upper case letters such as $\bm{X}$ to denote matrices and bold face lower case to denote vectors such as $\bm{f}$. The Frobenius norm and trace of  a matrix  $\bm{X}$ are denoted by $\|\bm{X}\|_{\mathrm{F}}$ and $\trace\left({\bm{X}}\right)$, respectively. The eigenvalues of a positive semi-definite matrix $\bm{\Sigma} \in \mathbb{R}^{d \times d}$ are denoted by $\gamma_{\max}(\bm{\Sigma}) = \gamma_1(\bm{\Sigma}) \geq \gamma_2(\bm{\Sigma}) \geq \ldots \geq \gamma_d(\bm{\Sigma}) = \gamma_{\min}(\bm{\Sigma})$. The set of  integers, $\{1,2,\ldots,m\}$, is represented by $\left[m\right]$. 

\subsection{PCA}\label{sec:pca}
The main objective of the PCA is to find the best representation of  the data $\bm{X} \in \mathbb{R}^{n\times d}$ with $n$ data points in $d$-dimensional space, in a lower dimension $r \leq d$ using a linear transformation, in order to have the minimum reconstruction error. This linear transformation can be represented by a projection matrix $\bm{U} \in \mathbb{R}^{d \times r}$. Thus, the objective of PCA is to find a projection matrix $\bm{U}$ and a recovery matrix $\bm{W} \in \mathbb{R}^{r \times d}$ to minimize this reconstruction error similar to~\cite{shalev2014understanding}:
\begin{equation}\label{eq:pca-recons1}
    \underset{\bm{U} \in \mathbb{R}^{d \times r}, \bm{W} \in \mathbb{R}^{r \times d} }{\arg \min} \left\|\bm{X} - \bm{X}\bm{U}\bm{W} \right\|_{\text{F}}^2
\end{equation}
It can be proved that in the solution of (\ref{eq:pca-recons1}), we have $\bm{W}=\bm{U}^\top$, and columns of $\bm{U}$ are orthonormal (i.e. $\bm{U}^\top\bm{U} = \bm{I}_{r \times r}$). Therefore we can define the reconstruction loss for any PCA projection as follows:
\begin{definition}[Reconstruction Loss]
For any given dataset $\bm{X}$ and any projection matrix $\bm{U}$, the total reconstruction loss of $\bm{X}$ using $\bm{U}$ is defined as:
\begin{equation}\label{eq:pca-reconst2}
    \mathcal{L}(\bm{U}) \triangleq \ell(\bm{X};\bm{U}) =  \left\|\bm{X} - \bm{X}\bm{U}\bm{U}^\top \right\|_{\mathrm{F}}^2
\end{equation}
\end{definition}
The optimal subspace with minimum reconstruction loss given $\bm{X}$ can be found by solving the above non-convex optimization problem.

\subsection{Fair PCA}\label{sec:fair-pca}
In this section, we will formally define the notion of fairness in dimension reduction algorithms such as PCA. As it was discussed before, the problem arises from having different reconstruction losses on different sensitive groups in a dataset. This means that finding an optimal projection matrix $\bm{U}^*$ by solving the minimization problem in (\ref{eq:pca-reconst2}), would have different reconstruction loss on data partitions from each sensitive group. However, in this problem, unlike supervised problems previously discussed, we are not able to reach equality between these reconstruction losses for different groups. The reason for that is the subspace for each group's data is different, and so is the reconstruction error of that data for that projection. We note that while learning a separate (local) subspace for each individual group has the optimal reconstruction error, our focus here is to learn a single global subspace for all groups due to statistical and ethical concerns. In particular, from a statistical standpoint, since the number of training samples for some groups might be small for skewed data sets, joint learning to have more samples to learn a subspace is preferable. Ethically, as elaborated in~\cite{lipton2017does} and \cite{kannan2019downstream}, learning separate subspaces (having disparate treatment like in affirmative action) constructs no trade-offs, and it poses several ethical and legal concerns. We note that the case of fairness with decoupled model representations has been investigated by several other works~\citep{dwork2018decoupled,ustun2019fairness,creager2019flexibly}.

In order to quantify to what extent each group suffers or benefits from  joint subspace learning,  we should compare the subspaces learned from each group's data alone and  the one with other groups' data included. Then, the idea of fairness is to reach a balance between these sacrifices and benefits of different groups. Formally, consider one of the $d$ features of $\bm{X}$ as a sensitive feature with $k$ different groups, $\mathcal{S} = \left\{ s_1, \ldots,s_k\right\}$. We denote the matrix of each group's data points as $\bm{X}_i \in \mathbb{R}^{n_i \times d}$, where $n_i$ is the number of samples belonging to the sensitive group $s_i$. Hence for any projection matrix $\bm{U} \in \mathbb{R}^{d \times r}$, the reconstruction loss for each group is defined as:
\begin{equation}\label{eq:pca-reconst-group}
    \mathcal{L}_i(\bm{U}) \triangleq \ell(\bm{X}_i;\bm{U}) =  \left\|\bm{X}_i - \bm{X}_i\bm{U}\bm{U}^\top \right\|_{\text{F}}^2, \;\; 1\leq i \leq k.
\end{equation}
Then, if we only use the dataset $\bm{X}_i$ to learn the projection matrix, we can find the subspace represented by $\bm{U}_i^*$ that has the optimal reconstruction loss on that dataset, denoted by $\mathcal{L}_i(\bm{U}_i^*)$. Therefore, a fair dimension reduction algorithm is the one that can learn a global projection matrix $\bm{U}^*$ on all data points with having equal distance between each group's reconstruction loss on the subspace learned by the whole data with the subspace learned only by its own data. To formally  define these fairness criteria, we introduce the notion of \textbf{disparity error} as follows:
\begin{definition}[Disparity Error]
Consider a dataset $\bm{X} \in \mathbb{R}^{n \times d}$ with $k$ sensitive groups with data matrix $\bm{X}_i, i=1,2,\ldots, k$ representing each sensitive group's data samples. Let $\bm{U}_i^* = \arg\min_{\bm{U}}\mathcal{L}_i(\bm{U})$ denote the projection matrix learned only based on $\bm{X}_i$. Then for any projection matrix $\bm{U}$  the disparity error for each sensitive group is defined as:
\begin{equation}\label{eq:de}
    \mathcal{E}_i\left(\bm{U}\right) = \mathcal{L}_i(\bm{U}) - \mathcal{L}_i(\bm{U}_i^*), \;\; 1\leq i\leq k.
\end{equation}
\label{def:de}\vspace{-0.5cm}
\end{definition}
This measure shows that how much reconstruction loss we are suffering or enjoying for any global projection matrix $\bm{U}$, with respect to the reconstruction loss of optimal projection matrix, we can learn locally based on data points $\bm{X}_i$. Note that calculating the optimal rank $r$ subspace for each group in $\mathcal{L}_i(\bm{U}_i^*)$ has a one-time overhead to the algorithm's time complexity overall. However, we ignore this overhead, as did other algorithms we are comparing to and leave the joint learning of both local and global subspaces as future work.

Using the Definition~\ref{def:de}, we can define a fair PCA algorithm as follows:
\begin{definition}[Fair PCA]
A PCA algorithm with projection matrix $\bm{U}^*$ is called \textbf{fair}, if the disparity error among different groups are equal. That is:
\begin{equation}\label{eq:fair-pca}
    \mathcal{E}_1\left(\bm{U}^*\right) = \mathcal{E}_2\left(\bm{U}^*\right) = \ldots = \mathcal{E}_k\left(\bm{U}^*\right).
\end{equation}
\label{def:fair-pca}
A subspace $\bm{U}^*$ that archives the same disparity error for all groups is called a fair subspace.
\end{definition}
\section{Pareto Fair Subspace}
In this section, we discuss the key challenges in finding a fair subspace using relaxation methods and motivate our formulation of Pareto fair subspace followed by   providing conditions sufficient to  guarantee the existence of such subspaces.

\subsection{Relaxation methods and their limitations}
A major challenge to find a fair subspace as defined in Definition~\ref{def:fair-pca} is to solve the optimization problem that satisfies~(\ref{eq:fair-pca}), which is essentially a multiple objective optimization problem by nature. To illustrate this and for ease of exposition, let us focus on the binary sensitive feature ($k = 2$), i.e., there are only  two groups in the sensitive feature of the data (e.g., male and female), in which the goal of fair PCA is to satisfy:
\begin{equation}\label{eq:fair-pca-binray}
    \mathcal{E}_1\left(\bm{U}^*\right) = \mathcal{E}_2\left(\bm{U}^*\right),
\end{equation}

In~\cite{samadi2018price}, it has been shown that by casting the multiobjective optimization problem as a minmax problem of the form
\begin{equation}\label{eq:fair:minimax}
    \underset{\bm{U} \in \mathbb{R}^{d \times r}, \texttt{rank}(\bm{U})\leq r}{\min} \max \left\{\mathcal{E}_1\left(\bm{U}\right), \mathcal{E}_2\left(\bm{U}\right) \right\},
\end{equation}
and using an additional dimension for the projection, the optimal solution of minmax problem results in the same loss for both groups (i.e., $\mathcal{E}_1\left(\bm{U}^*\right) = \mathcal{E}_2\left(\bm{U}^*\right))$.

Motivated by this observation, a  semi-definite relaxation to solve the optimization problem is proposed, which is not efficient for a large number of training samples. Also, to achieve their fairness criteria and ensure that the obtained local solution achieves the optimal fairness objective for all groups, the proposed solution requires adding an extra dimension for a binary sensitive feature and $k-1$ additional dimensions for a $k$-group sensitive feature, which is not reasonable for a large $k$. We note that in~\cite{morgenstern2019fair}, the requirement of  extra dimension is improved to $\lfloor \sqrt{2k+\frac{1}{4}} - \frac{3}{2}\rfloor$, but it still requires extra projection dimensions to satisfy the fairness constraint. Finally, the optimal trade-offs between fairness objectives and total reconstruction loss, in the case of the same target dimension, is not guaranteed, which would lead to a solution that sacrifices too much of the total reconstruction loss to achieve the fairness criteria.  In fact, in order to guarantee that the solution of minmax optimization results in a  rank $r$ subspace with \textit{optimal fairness} objective, one could choose the  target dimension to be $r-s$, where $s$ is the extra dimensions needed with $\order\left(\sqrt{k}\right)$ followed by a rounding to reach to the target $r$ dimensional subspace as proposed in~\cite{morgenstern2019fair}; but this remedy hurts the \textit{optimal objective} value by a multiplicative factor of $s/r$. This issue becomes more concerning as the number of groups  $k$, and hence $s$ increases due to the fact that all local optimal solutions might not achieve the same loss for all groups.  Consequently, any solution to fair PCA necessitates jointly minimizing the main objective, which is total reconstruction loss in (\ref{eq:pca-reconst2}), and fairness criteria to balance the trade-off between them. 

An alternative solution to alleviate aforementioned issues which is explored in~\cite{donini2018empirical} is to impose fairness constraints in minimizing the reconstruction loss in~(\ref{eq:pca-recons1}) as additional constraints, i.e., 
\begin{equation}
\begin{aligned}
& \underset{\bm{U}}{\min}
& & \mathcal{L}(\bm{U}) \\
& \text{subject to}
& & \mathcal{E}_i\left(\bm{U}\right) \leq \epsilon, \; i \in \left[k\right].
\label{eq:const-opt}
\end{aligned}
\end{equation}
which reduces the problem into an instance of \textit{non-convex constrained} optimization problem to find a fair subspace to all sensitive groups. Relaxing the problem of finding the fair subspace as a constrained optimization similar to (\ref{eq:const-opt}), apart from being a  hard non-convex problem which is not evident to solve due to presence of non-convex constraints, requires the optimal constraint violation parameter, $\boldsymbol{\epsilon}$, to be decided heuristically which is a burden on the use and makes the problem even harder. Although using the Lagrangian method we can turn the problem into an unconstrained non-convex optimization problem-- a method known as scalarization relaxation for multi-objective optimization counterpart (e.g., please see~\cite{ehrgott2006discussion}), deciding the Lagrangian multipliers is as hard as solving the original problem and does not guarantee the optimally of the obtained solution. Also, since the scale of the objectives might be different, it could lead to infeasibility issues in the optimization problem, or some points from the Pareto frontier could not be attained.

To address challenges arising from the above reduction methods, and in order to achieve the optimal trade-offs between objectives and satisfy equality between disparity errors, we aim at directly solving the multi-objective programming ~\citep{miettinen2012nonlinear}. Towards this end, we note that  the optimization problem in (\ref{eq:const-opt}) is a relaxation of the following generalized multi-objective optimization problem:
\begin{equation}
\label{eq:multi-single}
     \arg\min_{\bm{U}}
    \left[\mathcal{L}(\bm{U}), \psi\left(\mathcal{E}_1\left(\bm{U}\right)\right) , \ldots , \psi\left(\mathcal{E}_k\left(\bm{U}\right)\right) \right]
\end{equation}
where $\psi(\cdot):\mathbb{R}\to\mathbb{R}_{+}$ is any penalization function, such as $\psi(z)=|z|$, $\psi(z)=\frac{1}{2}z^2$, or $\psi(z)=e^{-z}$, however, for convergence analysis we will stick to squared or exponential penalization due to their smoothness.  We will define the optimization problem in more detail in the next section and then will introduce an adaptive gradient descent approach to solve it. 

\subsection{Pareto fair subspace}
To characterize the solutions obtained by directly solving the multi-objective optimization problem in~(\ref{eq:multi-single}), we have to compare the objective vector of different solutions with each other, analogous to the what we do in a scalar or single-objective optimization problem. If we only have a single objective function $f(\bm{U})$, we can say the solution $\bm{U}_1$ is better than $\bm{U}_2$ if $f(\bm{U}_1) < f(\bm{U}_2)$.
Similarly, in multi-objective programming, we define the notion of dominance as follows:
\begin{definition}[Dominance]
Let  $\mathrm{\bm{f}}(\bm{U}) = \left[{f}_1(\bm{U}), \ldots, f_m(\bm{U})\right]^\top$ denote  a vector-valued objective function with $m$ objectives. We say the solution $\bm{U}_1$ \textit{dominates} the solution $\bm{U}_2$ if ${f}_i(\bm{U}_1) \leq {f}_i(\bm{U}_2)$ for all $i \in \left[m\right]$, and ${f}_j(\bm{U}_1) < {f}_j(\bm{U}_2)$ for at least one $j \in \left[m\right]$. We denote this  dominance as:
\begin{equation}\label{eq:dominance}
    \mathrm{\bm{f}}(\bm{U}_1) \prec_{p} \mathrm{\bm{f}}(\bm{U}_2).
\end{equation}\vspace{-0.5cm}
\label{def:dom}
\end{definition}
The definition of dominance implies that when a solution cannot be dominated by any other solution in the search space, we cannot find any direction, to move to, from this solution without at least hurting one objective in the objective vector. Using this, now, we can define our notion of Pareto fair subspace as follows:
\begin{definition}[Pareto Fair Subspace]
Let  $\mathrm{\bm{f}}(\bm{U}) = \left[\mathcal{L}(\bm{U}), {f}_1(\bm{U}), \ldots, f_{m-1}(\bm{U})\right]^\top$ denote  a vector-valued objective function with $m$ objectives in the Fair PCA problem. Then, consider a set of fairness trade-off objectives ${f}_{i}(\bm{U}), i \in \left[m-1\right]$, (e.g. $\psi\left(\mathcal{E}_i\left(\bm{U}\right)\right)$ as in~(\ref{eq:multi-single})) that ought to be minimized in addition to the main objective, $\mathcal{L}(\bm{U})$. The solution $\bm{U}^*$ is called Pareto fair subspace, if it is not dominated by any other feasible solution.
\label{def:pef}
\end{definition}
The Pareto fair subspace is not unique, and the set of Pareto optimal solutions is called Pareto frontier~\citep{miettinen2012nonlinear}. Thereupon, the ultimate goal of a fair PCA reduces to finding a Pareto optimal solution via solving the problem~(\ref{eq:multi-single}).

\subsection{Proof of existence}\label{sec:existence}
The following theorem establishes the conditions under which the set of Pareto optimal solutions exists and is non-empty. We emphasize that compared to methods that optimize Lagrangian function or other scalarization approaches, we aim at finding this Pareto fair frontier completely without any prior information such as weight for each objective.
\begin{theorem}[Existence]\label{thrm:existence}
Consider the vector-valued optimization problem in~(\ref{eq:multi-single}). If the individual functions are convex and bounded, then the set of Pareto optimal solutions is non-empty. 
\end{theorem}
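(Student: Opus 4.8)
The plan is to prove non-emptiness by \emph{linear scalarization}: I would exhibit one concrete Pareto fair subspace as the minimizer of a single positively-weighted combination of the objectives, and then argue that such a minimizer exists under the convexity and boundedness hypotheses. Writing $\bm{f}(\bm{U}) = [f_1(\bm{U}), \ldots, f_m(\bm{U})]^\top$ for the $m$-dimensional objective vector of~(\ref{eq:multi-single}) --- with $f_1 = \mathcal{L}$ and the remaining entries the penalized disparities $\psi(\mathcal{E}_i)$ --- the whole argument rests on the elementary fact that minimizing a strictly positive weighted sum can never land on a dominated point.

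Concretely, I would fix weights $\lambda_1, \ldots, \lambda_m$ with $\lambda_i > 0$ and $\sum_{i=1}^m \lambda_i = 1$, and define the scalar surrogate $g(\bm{U}) = \sum_{i=1}^m \lambda_i f_i(\bm{U})$. The key step is the claim that if $\bm{U}^*$ minimizes $g$, then $\bm{U}^*$ is a Pareto fair subspace in the sense of Definition~\ref{def:pef}. This follows by contradiction: if some feasible $\bm{U}'$ satisfied $\bm{f}(\bm{U}') \prec_p \bm{f}(\bm{U}^*)$, then $f_i(\bm{U}') \leq f_i(\bm{U}^*)$ for every $i$ with strict inequality for at least one index, and multiplying by $\lambda_i > 0$ and summing would give $g(\bm{U}') < g(\bm{U}^*)$, contradicting the optimality of $\bm{U}^*$. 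Note that this direction uses only positivity of the weights, not convexity.

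It remains to guarantee that $g$ actually attains its minimum, and this is where I expect the real work to lie. Boundedness of each $f_i$ from below makes $g$ bounded below, so $\inf_{\bm{U}} g(\bm{U})$ is finite, while convexity of each $f_i$ makes $g$ convex, hence continuous and endowed with convex sublevel sets. Finiteness of the infimum alone, however, does not force attainment for a convex function on an unbounded domain --- the exponential penalty $\psi(z) = e^{-z}$ is exactly the cautionary example, being convex and bounded below yet never reaching its infimum. I would therefore close the gap using compactness of the admissible set of projection matrices: the orthonormality constraint $\bm{U}^\top \bm{U} = \bm{I}$ confines $\bm{U}$ to a compact (Stiefel) set, so a minimizing sequence admits a convergent subsequence whose limit, by continuity of $g$, attains the minimum; this minimizer is Pareto fair by the claim above, which establishes that the set of Pareto optimal solutions is non-empty. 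The main obstacle is precisely this attainment argument: convexity and boundedness yield a finite, well-behaved scalarized objective, but the existence of an actual minimizer must be secured separately, either from compactness of the feasible set or from coercivity/closedness of the sublevel sets of $g$.
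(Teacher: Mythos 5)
Your proof is correct, but it takes a genuinely different route from the paper's. You construct a Pareto point directly: minimize a strictly positively weighted sum $g(\bm{U}) = \sum_{i=1}^m \lambda_i f_i(\bm{U})$ over the feasible set, observe that any point dominating the minimizer would have strictly smaller $g$-value, and secure attainment of the minimum via compactness of the Stiefel set $\{\bm{U} : \bm{U}^\top\bm{U} = \bm{I}_r\}$. The paper instead works with a Benson-type auxiliary problem (maximizing the total slack $\sum_i \epsilon_i$ subject to $f_i(\bm{U}) + \epsilon_i = f_i(\widetilde{\bm{U}})$, as in~(\ref{eq:const-opt-exist})) and argues by contradiction that if this supremum is infinite then no properly Pareto optimal solution can exist. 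Note that the paper's argument, as written, establishes ``unbounded supremum $\Rightarrow$ empty proper Pareto set,'' which is the contrapositive of ``non-empty $\Rightarrow$ bounded''; the implication the theorem actually needs (``bounded $\Rightarrow$ non-empty'') is its converse, and the paper asserts it as an ``immediate implication'' rather than deriving it. Your scalarization argument proves the needed direction constructively, so in that sense it is the more complete of the two. Your diagnosis of where the real work lies is also exactly right: convexity plus boundedness below does not force attainment on an unbounded domain (your $e^{-z}$ example), so some compactness or coercivity input is indispensable, and restricting to the orthonormal feasible set supplies it for free --- without this, the statement read literally over all of $\mathbb{R}^{d\times r}$ would fail (e.g., $f_1 = e^{-x}$, $f_2 = e^{-y}$ has an empty Pareto set). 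Two further remarks: (i) your argument never actually uses convexity --- positivity of the weights gives the dominance step, and continuity plus compactness gives attainment --- so your proof is strictly more general than the statement; (ii) by Geoffrion's classical result, minimizers of strictly positive scalarizations are in fact \emph{properly} Pareto optimal, so you obtain a slightly stronger conclusion than non-emptiness of the Pareto set, matching the proper-Pareto notion the paper's own proof implicitly targets.
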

\begin{proof}
The proof is deferred to Appendix~\ref{sec:thrm:existence}.
\end{proof}

To guarantee the existence of a Pareto optimal solution, in Section~\ref{sec:Alg}, we convexify the objectives by properly regularizing them. Thereafter, we propose an efficient gradient-based algorithm to find a subspace that is a Pareto stationary point of the fair PCA problem. 

Although solving the optimization problem in (\ref{eq:multi-single}) results in an efficient trade-off between different objectives, this does not reflect on balanced disparity errors among different groups, which is the ultimate goal of the fair PCA problem. As been asserted by~\cite{samadi2018price}, this issue would be exacerbated in problems with $k > 2$, that having a balanced disparity error among all groups is not always possible due to the fact that all optimal solutions will not assign the same loss to all groups. To alleviate this issue and ensure that the loss of each group remains at most the optimal fairness objective in the original target dimension $r$, we introduce the notion of \textbf{pairwise disparity error}, that would address this issue.
\begin{definition}[Pairwise Disparity Error]
Consider the disparity errors for any projection matrix $\bm{U}$ and sensitive groups of $i$ and $j$ among $k$ different groups, then the pairwise disparity error between these two groups is defined as:
\begin{equation}\label{eq:pde}
    \Delta_{i,j} = \mathcal{E}_i\left(\bm{U}\right) - \mathcal{E}_j\left(\bm{U}\right), \;\; i,j \in \left[k\right], i\neq j.
\end{equation}
\label{def:pde}\vspace{-0.5cm}
\end{definition}
Thus, the optimization in (\ref{eq:multi-single}) becomes:
\begin{equation}
\label{eq:multi-pairwise}
     \arg\min_{\bm{U}}
    \left[\mathcal{L}(\bm{U}), \psi\left(\Delta_{1,2}\left(\bm{U}\right)\right), \ldots, \psi\left(\Delta_{k-1,k}\left(\bm{U}\right)\right)  \right],
\end{equation}
where we have $k\choose 2$ objectives in addition to the main objective. We will show the efficacy of pairwise disparity error over single disparity error in practice in Section~\ref{sec:exp}.

\section{Adaptive Optimization}\label{sec:Alg}
In this section, we will develop a gradient descent (GD) based algorithm to solve the optimization problems in (\ref{eq:multi-single}) or  (\ref{eq:multi-pairwise}). To lay the groundwork for this algorithm, we review how to solve the original PCA problem using gradient descent, and then we propose our proposed algorithm to solve the aforementioned multi-objective problem.

\subsection{Gradient descent for PCA}\label{sec:gd-pca}
To solve the PCA problem using the gradient descent approach, we need to iteratively update the projection matrix $\bm{U}$, based on the gradient of the total reconstruction loss with respect to it. Expanding the total reconstruction loss in (\ref{eq:pca-reconst2})  and removing the constant terms that will not affect the optimization, following~\cite{shalev2014understanding}, we can write the optimization problem:
\begin{equation}\label{eq:pca-trace}
    \underset{\bm{U} \in \mathbb{R}^{d \times r}, \bm{U}^\top\bm{U}=\bm{I} }{\arg \min} -\trace\left( \bm{U}^\top\bm{X}^\top\bm{X}\bm{U}\right),
\end{equation}
Using~(\ref{eq:pca-trace}), we can calculate the gradient of the total reconstruction loss with respect to $\bm{U}$ as follows:
\begin{equation}\label{eq:pca-grad}
    {\mathcal{G}}(\bm{U}) = \frac{\partial \mathcal{L}(\bm{U}) }{\partial \bm{U}} = -2\bm{X}^\top\bm{X}\bm{U}.
\end{equation}
The projection can be learned using the gradient descent by iteratively updating an initial solution by:
\begin{equation}
   \bm{U}_{t+1} =  \Pi_{\mathcal{P}_r}\left(\bm{U}_{t}  - \eta_t \mathcal{G}(\bm{U}_t)\right),
\end{equation}
where $\eta_t$ is the learning rate and $\Pi_{\mathcal{P}_r}\left(.\right)$ is the projection operator onto ${\mathcal{P}_r} = \left\{\bm{U}\in\mathbb{R}^{d\times r} \,\big|\,  \bm{U}^\top\bm{U}=\bm{I}_r \right\}$.

For a single-objective optimization like normal PCA, at each iteration, we take a step toward the negative of the gradient at that point. However, when we are dealing with multiple objectives, the key question is what would be the best direction at each iteration to take, in order to decrease all the objectives. We answer this question in the next section by proposing an optimization problem to find such a descent direction.

\subsection{Pareto fair PCA}\label{sec:MOP}
\begin{algorithm2e}[t]
\DontPrintSemicolon
\caption{Pareto Fair PCA}
\label{alg:fairpca}
\SetNoFillComment
 \SetKwFunction{PFP}{ParetoFairPCA}
 \SetKwProg{Fn}{function}{:}{\KwRet $\bm{U}_{T+1}$}
\textbf{Input:} $r$, $\bm{X} = \bm{X}_1 \cup \bm{X}_1 \cup \ldots \cup \bm{X}_k$, $\bm{U}_0 \in \mathbb{R}^{d \times r}$, $T$\\
Find the optimal $r$-rank subspace for each group, $\mathcal{U}^*=\left\{ \bm{U}_1^*,\ldots,\bm{U}_k^*\right\}$ \tcp*{e.g., using SVD or SGD~\cite{shamir2015stochastic}}
\Fn{\PFP{${r}, {\bm{X}}, {\mathcal{U}^*},  {\bm{U}_0}, {T}$}}{
Form the objective vector $\mathrm{\bm{f}}(\bm{U})$ using $\mathcal{U}^*$ and $\bm{X}$ from~(\ref{eq:multi-single}) or~(\ref{eq:multi-pairwise})\\
\For{$t=1,\ldots,T$}{
    Calculate the gradient of each objective, $\bm{G}_i^{(t)}$\\
    Find the descent direction $\bm{{D}}^{(t)}$ using (\ref{eq:qop})\\
    \If{$\bm{{D}}_t = \mathbf{0}$}{ 
      \textbf{return} $\bm{U}_{t}$ \tcp*{Pareto stationary point}
    }
    Find minimum $p\in\mathbb{N}$ such that for $\eta_t=\frac{1}{2^p}$: \tcp*{Backtracking line search~(\texttt{optional})}
    $$
     f_i\left(\bm{U}_t + \eta_t\bm{{D}}_t\right) \leq f_i(\bm{U}_t) + \beta \eta_t \trace \left( \bm{{D}}_t^\top\bm{{G}}_i^{(t)}\right),\; i \in [m].
     \refstepcounter{equation}
    \label{eq:back}
    \eqno{(\theequation)}\hspace{4cm}
     $$
    $\bm{U}_{t+1} = \Pi_{\mathcal{P}_r}\left(\bm{U}_{t} + \eta_t\bm{{D}}_t\right)$
}
}
\end{algorithm2e}
In order to efficiently solve the multi-objective optimization problem in (\ref{eq:multi-single}) or (\ref{eq:multi-pairwise}), we propose a gradient descent approach, that can guarantee convergence to a Pareto stationary point. For the ease of exposition,  we consider the following general multi-objective problem with $m$ objectives:
$$\mathrm{\bm{f}}(\bm{U}) = \left[ f_1(\bm{U}), \ldots, f_m(\bm{U})\right]$$
In a single-objective problem with gradient descent method, we always choose the opposite direction of the gradient on that point as the descent direction to decrease the objective function for the next iteration point. However, this notion in multi-objective programming is more complicated, as we have to find the direction that is a descent direction for all objectives based on their gradients on that point. In order to find a descent direction,  let   $\left[ \bm{G}_1^{(t)}, \ldots, \bm{G}_m^{(t)}\right]$ denote the gradient of individual objectives at point  $\bm{U}_t$. To  find a descent direction with respect to all of the objectives at point $\bm{U}_t$, we solve  the following minmax optimization problem~\citep{fliege2000steepest}:
\begin{equation}\label{eq:qop}
    \bm{D}_t = \arg\underset{\bm{D}\in \mathbb{R}^{d\times r}}{\min} \left\{ \underset{i=1,\ldots,m}{\max} \trace\left(\bm{D}^\top\bm{G}_i^{(t)}\right) + \frac{1}{2} \big\|\bm{D} \big\|_{\text{F}}^2\right\}.
\end{equation}
We note that for a single objective case, that is $m=1$, the solution of above minimax is the opposite of the gradient, i.e. $\bm{D}_t = -\bm{G}_1^{(t)}$. Using the KKT optimally conditions, it is easy to show that the dual problem becomes a quadratic programming and can be efficiently solved to identify a descent direction $\bm{D}_t$, for which all the objectives are non-increasing. The following lemma states this characteristic of the descent direction:
\begin{lemma}[Descent Direction]
The solution found in the optimization problem (\ref{eq:qop}) has one of the following two conditions. Either $\bm{D}_t = \bm{0}$, which means the point $\bm{U}_t$ is a Pareto stationary point, or $\bm{D}_t$ is a descent direction to all objectives, that is:
\begin{equation}\label{eq:descent}
    \trace\left(\bm{D}_t^\top \bm{G}_i^{(t)} \right) \leq 0, \;\; \forall \; 1 \leq i \leq m
\end{equation}
Then, the obtained descent direction is in the form of $\bm{D}_t = -\sum_{i=1}^m \lambda_i^{(t)} \bm{G}_i^{(t)} $, where $\sum_{i=1}^m \lambda_i^{(t)} = 1$ and  $\lambda_i^{(t)} \geq 0$ for $1 \leq i \leq m$.
\label{lemma:descent}
\end{lemma}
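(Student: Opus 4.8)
The plan is to handle the inner maximum in (\ref{eq:qop}) by introducing an epigraph variable, turning the problem into a smooth convex quadratic program, and then reading off both the claimed form of $\bm{D}_t$ and the descent inequality (\ref{eq:descent}) directly from its KKT system. Concretely, I would first rewrite (\ref{eq:qop}) as minimizing $\alpha + \tfrac{1}{2}\|\bm{D}\|_{\mathrm{F}}^2$ over $(\bm{D},\alpha)\in\mathbb{R}^{d\times r}\times\mathbb{R}$ subject to $\trace(\bm{D}^\top\bm{G}_i^{(t)})\le\alpha$ for every $i\in[m]$. The objective is strongly convex in $\bm{D}$ and linear in $\alpha$, and the constraints are affine, so the problem is a convex program whose minimizer in $\bm{D}$ is unique and for which the KKT conditions are necessary and sufficient (the affine constraints automatically supply a constraint qualification).

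Next I would form the Lagrangian with multipliers $\lambda_i\ge 0$ attached to the $m$ constraints and impose stationarity. Differentiating in $\bm{D}$ yields $\bm{D}_t=-\sum_{i=1}^m\lambda_i^{(t)}\bm{G}_i^{(t)}$, and differentiating in $\alpha$ yields $\sum_{i=1}^m\lambda_i^{(t)}=1$; together with dual feasibility $\lambda_i^{(t)}\ge 0$, this already establishes the asserted representation of the descent direction as a convex combination of the objective gradients.

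The crux is then to pin down the optimal value of the inner max at $\bm{D}_t$. Writing $\alpha^\star=\max_i \trace(\bm{D}_t^\top\bm{G}_i^{(t)})$ for the optimal epigraph value, I would multiply each constraint by $\lambda_i^{(t)}$, sum over $i$, and invoke complementary slackness (which forces $\trace(\bm{D}_t^\top\bm{G}_i^{(t)})=\alpha^\star$ whenever $\lambda_i^{(t)}>0$) to obtain $\sum_i\lambda_i^{(t)}\trace(\bm{D}_t^\top\bm{G}_i^{(t)})=\alpha^\star\sum_i\lambda_i^{(t)}=\alpha^\star$. On the other hand, substituting $\bm{D}_t=-\sum_i\lambda_i^{(t)}\bm{G}_i^{(t)}$ shows the same sum equals $\trace(\bm{D}_t^\top(-\bm{D}_t))=-\|\bm{D}_t\|_{\mathrm{F}}^2$. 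Hence $\alpha^\star=-\|\bm{D}_t\|_{\mathrm{F}}^2$, and since $\trace(\bm{D}_t^\top\bm{G}_i^{(t)})\le\alpha^\star$ for every $i$, we conclude $\trace(\bm{D}_t^\top\bm{G}_i^{(t)})\le-\|\bm{D}_t\|_{\mathrm{F}}^2\le 0$, which is exactly the descent inequality (\ref{eq:descent}).

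Finally I would extract the dichotomy. If $\bm{D}_t=\bm{0}$, then $\sum_i\lambda_i^{(t)}\bm{G}_i^{(t)}=\bm{0}$ with $\lambda^{(t)}$ lying in the probability simplex, i.e.\ $\bm{0}$ belongs to the convex hull of the gradients, which is precisely the first-order condition defining a Pareto stationary point (no common descent direction exists). Otherwise $\|\bm{D}_t\|_{\mathrm{F}}^2>0$, and the inequality above is strict, so $\bm{D}_t$ is a genuine descent direction that is nonincreasing for all objectives simultaneously. I expect the one genuinely delicate step to be the complementary-slackness computation giving $\alpha^\star=-\|\bm{D}_t\|_{\mathrm{F}}^2$; the remaining ingredients (strong convexity, uniqueness, the constraint qualification, and the KKT stationarity equations) are routine once the epigraph reformulation is in place.
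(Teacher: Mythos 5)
Your proposal is correct, and it reproduces the paper's first half essentially verbatim: the paper also recasts (\ref{eq:qop}) as the epigraph-constrained program (\ref{eq:qop_constrained}), forms the Lagrangian, and reads off $\bm{D}_t=-\sum_{i=1}^m\lambda_i^{(t)}\bm{G}_i^{(t)}$ with $\sum_i\lambda_i^{(t)}=1$, $\lambda_i^{(t)}\ge 0$ from stationarity in $\bm{D}$ and in the epigraph variable. Where you genuinely diverge is in the step you correctly identified as the crux, namely establishing $\trace\bigl(\bm{D}_t^\top\bm{G}_i^{(t)}\bigr)\le-\|\bm{D}_t\|_{\mathrm{F}}^2$. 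The paper does \emph{not} use complementary slackness: instead it passes to the dual problem (\ref{eq:qop2}) of minimizing $\tfrac12\bigl\|\sum_i\lambda_i\bm{G}_i^{(t)}\bigr\|_{\mathrm{F}}^2$ over the simplex, restricts that problem to the line segment joining $\hat{\boldsymbol{\lambda}}^{(t)}$ to the $j$-th vertex (the one-dimensional problem (\ref{eq:qop3})), observes the restricted optimum sits at the endpoint $\beta=1$, and extracts the inequality from the first-order optimality condition there. Your route --- multiplying each constraint by $\lambda_i^{(t)}$, summing, and using complementary slackness to identify the optimal epigraph value with $-\|\bm{D}_t\|_{\mathrm{F}}^2$ --- reaches the identical quantitative bound in two lines, without introducing the auxiliary parametrization $\lambda_j = 1-\beta(1-\hat{\lambda}_j)$, $\lambda_i=\beta\hat{\lambda}_i$ or verifying its consistency with the simplex. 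In that sense your argument is cleaner and less error-prone; what the paper's line-segment argument buys in exchange is an explicit view of $\bm{D}_t$ as the minimum-norm point in the convex hull of the gradients (the geometric fact quoted after the lemma and reused in the corollary), which falls out of working directly in the dual. Both derivations yield the stronger bound $\trace\bigl(\bm{D}_t^\top\bm{G}_i^{(t)}\bigr)\le-\|\bm{D}_t\|_{\mathrm{F}}^2$ rather than mere nonpositivity, which is what the nonconvex convergence proof (Theorem~\ref{thm:converg}) actually needs, so your proof would slot into the rest of the paper without modification.
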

\begin{proof}
The proof is provided in Appendix~\ref{app:lemma}.
\end{proof}
As elaborated in the proof in Appendix~\ref{app:lemma}, the theorem implies that the descent direction is the minimum norm matrix in the convex hull of the gradients of all objectives and is the non-increasing direction with respect to each objective. Understanding this, the following corollary is palpable:
\begin{corollary}
The first order Pareto stationary point holds for a solution $\bm{U}$ when the mentioned minimum norm is zero, i.e., there is no descent direction that is non-increasing for all objectives. In other words,  there exists a  ${\boldsymbol{\lambda}}\in\Delta_m$ such that $\bm{D} = -\sum_{i=1}^m {\lambda}_i \bm{G}_i = \bm{0}$ where $\bm{G}_i = \nabla f_i(\bm{U})$. 
\end{corollary}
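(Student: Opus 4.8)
The plan is to treat (\ref{eq:qop}) as a strongly convex program and read off the claimed structure and sign conditions from its KKT system via Lagrangian duality. First I would note that the inner maximum $\max_i \trace(\bm{D}^\top \bm{G}_i^{(t)})$ is a pointwise maximum of finitely many linear functions of $\bm{D}$, hence convex, and adding $\tfrac12\|\bm{D}\|_{\mathrm{F}}^2$ makes the whole objective strongly convex and coercive; therefore the minimizer $\bm{D}_t$ exists and is unique. To handle the nonsmooth $\max$, I would introduce an epigraph variable $\alpha \in \mathbb{R}$ and rewrite (\ref{eq:qop}) as the smooth convex quadratic program
\begin{equation*}
\min_{\bm{D} \in \mathbb{R}^{d\times r}, \, \alpha \in \mathbb{R}} \;\; \alpha + \tfrac12 \|\bm{D}\|_{\mathrm{F}}^2 \quad \text{subject to} \quad \trace(\bm{D}^\top \bm{G}_i^{(t)}) \leq \alpha, \;\; i \in [m].
\end{equation*}

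Next I would form the Lagrangian with multipliers $\lambda_i \geq 0$ and write the stationarity conditions. Stationarity in $\alpha$ yields $\sum_{i=1}^m \lambda_i = 1$, and stationarity in $\bm{D}$ yields $\bm{D}_t = -\sum_{i=1}^m \lambda_i^{(t)} \bm{G}_i^{(t)}$ with $\lambda_i^{(t)} \geq 0$; this is exactly the asserted form with $\boldsymbol{\lambda}^{(t)} \in \Delta_m$. Substituting this expression back into the Lagrangian (the $\alpha$ terms cancel because $\sum_i \lambda_i = 1$) collapses the dual to $\max_{\boldsymbol{\lambda}\in\Delta_m} -\tfrac12\|\sum_i \lambda_i \bm{G}_i^{(t)}\|_{\mathrm{F}}^2$, which identifies $-\bm{D}_t$ as the minimum-norm element of the convex hull $\mathrm{conv}\{\bm{G}_1^{(t)},\ldots,\bm{G}_m^{(t)}\}$ — precisely the geometric picture quoted after the lemma.

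The descent property then follows from strong duality, which holds because the program is a convex QP with affine constraints (Slater is trivial, since $\alpha$ can be taken arbitrarily large). Equating primal and dual optimal values gives $\max_i \trace(\bm{D}_t^\top \bm{G}_i^{(t)}) + \tfrac12\|\bm{D}_t\|_{\mathrm{F}}^2 = -\tfrac12\|\bm{D}_t\|_{\mathrm{F}}^2$, i.e. $\max_i \trace(\bm{D}_t^\top \bm{G}_i^{(t)}) = -\|\bm{D}_t\|_{\mathrm{F}}^2$. Since each $\trace(\bm{D}_t^\top \bm{G}_i^{(t)})$ is bounded above by this maximum, I obtain $\trace(\bm{D}_t^\top \bm{G}_i^{(t)}) \leq -\|\bm{D}_t\|_{\mathrm{F}}^2 \leq 0$ for every $i$, which is (\ref{eq:descent}). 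The dichotomy is then immediate: if $\bm{D}_t = \bm{0}$ then $\bm{0} = \sum_i \lambda_i^{(t)} \bm{G}_i^{(t)}$ with $\boldsymbol{\lambda}^{(t)}\in\Delta_m$, which is exactly the Pareto-stationarity condition; otherwise $\|\bm{D}_t\|_{\mathrm{F}}^2 > 0$ and the bound above makes $\bm{D}_t$ a strict common descent direction. I expect the only genuinely delicate step to be the duality bookkeeping — reformulating the $\max$ via the epigraph variable and verifying strong duality so that the optimal value can be pinned to $-\tfrac12\|\bm{D}_t\|_{\mathrm{F}}^2$; once that identity is in hand, both the sign conditions and the dichotomy come essentially for free.
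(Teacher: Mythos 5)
Your proposal is correct, and its skeleton coincides with the paper's proof of Lemma~\ref{lemma:descent}, from which the corollary is read off: your epigraph reformulation is exactly (\ref{eq:qop_constrained}), your stationarity conditions are the paper's KKT system (\ref{eq:kkt1})--(\ref{eq:kkt5}), and your collapsed dual is the minimum-norm problem over the simplex (\ref{eq:qop2}). Where you genuinely diverge is in establishing the key inequality $\trace\left(\bm{D}_t^\top\bm{G}_i^{(t)}\right) \le -\|\bm{D}_t\|_{\mathrm{F}}^2$: you obtain it by strong duality, equating the primal optimal value $\max_i \trace\left(\bm{D}_t^\top\bm{G}_i^{(t)}\right) + \tfrac{1}{2}\|\bm{D}_t\|_{\mathrm{F}}^2$ with the dual optimal value $-\tfrac{1}{2}\|\bm{D}_t\|_{\mathrm{F}}^2$, whereas the paper instead restricts the dual objective to the segment joining the $j$th vertex of the simplex to $\hat{\boldsymbol{\lambda}}^{(t)}$, observes that the resulting scalar quadratic (\ref{eq:qop3}) is minimized at $\beta=1$, and extracts the same bound (\ref{eq:bound_trace}) from first-order optimality there. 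Your route is cleaner and entirely standard (zero duality gap is immediate here since the constraints are affine and the problem is a feasible convex QP), and it sidesteps the reparameterization bookkeeping that makes (\ref{eq:qop3}) the most opaque step of the paper's argument; the paper's route, for its part, never needs to invoke strong duality, only optimality of $\hat{\boldsymbol{\lambda}}^{(t)}$ in the dual. As a further simplification, note that complementary slackness ((\ref{eq:kkt4}) together with $\sum_i \lambda_i^{(t)} = 1$) yields your value identity in one line: summing (\ref{eq:kkt4}) over $i$ gives $\epsilon_t = \sum_i \lambda_i^{(t)}\trace\left(\bm{D}_t^\top\bm{G}_i^{(t)}\right) = -\|\bm{D}_t\|_{\mathrm{F}}^2$, and feasibility then gives the bound for every $i$, so even the duality-gap computation can be bypassed.
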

Having a descent direction at hand, we can use it to decrease all the objectives in every iteration, similar to the procedure defined in Algorithm~\ref{alg:fairpca}. Based on the first-order optimality condition of this problem, we know that at a Pareto optimal solution, the direction found in~(\ref{eq:qop}) should be $\bm{0}$, meaning, that it cannot further improve any objective without hurting others. Equipped with this descent direction and first-order optimality condition, we can iteratively update the initial solution in the direction of the descent direction, until it converges to a Pareto stationary point. 
\begin{remark}\label{rem:norm}
One crucial step before finding the descent direction is to balance out the scale of different gradients. Since they are calculated based on very different and possibly contradictory objective functions, their Frobenius norm would vary a lot; hence, by a normalization step, we can avoid the dominance of the descent direction by some gradients with high Frobenius norm. 
\end{remark}
 Since the disparity errors, as well as the main PCA objective, are weakly convex functions, following Theorem~\ref{thrm:existence}, to guarantee the existence of Pareto optimal subspace, we add a regularization term to each objective to make them convex functions-- with which we also stabilize the solutions and guarantee convergence. As a result,  the optimization in (\ref{eq:multi-pairwise}) becomes:
\begin{equation}\label{eq:multi-multi-opt}
    \arg\min_{\bm{U}}
    \begin{bmatrix}
    \mathcal{L}(\bm{U}) +  \alpha \|\bm{U}\|_{\text{F}}^2\\  \psi\left(\Delta_{1,2}\left(\bm{U}\right)\right)  + \alpha \|\bm{U}\|_{\text{F}}^2 \\ 
    \vdots\\
    \psi\left(\Delta_{k-1,k}\left(\bm{U}\right)\right) + \alpha \|\bm{U}\|_{\text{F}}^2
    \end{bmatrix},
\end{equation}
where $\alpha$ is the regularization parameter to make the Hessian matrices of objectives positive semi-definite and needs to  be decided based on the maximum eigen-gap between covariance matrices of each pair of sensitive groups. Having $k$ different groups, each with data matrix of $\bm{X}_i$, $i \in [k]$, we set $\gamma=\underset{i,j \in [k]}{\max} \gamma_d\left(\bm{X}_i^\top\bm{X}_i \right) - \gamma_1\left(\bm{X}_j^\top\bm{X}_j\right)$ to denote the maximum eigen-gap. Then, we should have $\alpha \geq \gamma$.
We now turn to prove the convergence rate of Algorithm~\ref{alg:fairpca} for convex objectives, as stated in the following theorem.
\begin{theorem}[Convex Convergence]  Let $\mathrm{\bm{f}} = \left[f_1(\bm{U}),\ldots,f_m(\bm{U}) \right]$ be convex component-wise Lipchitz continuous with constants $L_1, L_2, \ldots, L_m$. Then, for the sequence of the solutions $\bm{U}_1,\ldots,\bm{U}_T$ generated iteratively by Algorithm~\ref{alg:fairpca}, and the sequence of $\hat{\boldsymbol{\lambda}}^{(1)},\ldots,\hat{\boldsymbol{\lambda}}^{(T)}$ generated by~(\ref{eq:qop2}) during $T$ iterations, by setting $\eta=\frac{R}{L\sqrt{T}}$ and $\beta = \sqrt{T}/R$, we have:
\begin{equation}\label{eq:convex-bound}
    \sum_{i=1}^{m} \bar{\lambda}_i \left(f_i(\bm{U}_T) - f_i(\bm{U}^*)\right) \leq \frac{R L}{2\sqrt{T}}, 
\end{equation}
where $R^2 = \lVert \bm{U}_1 - \bm{U}^* \rVert^2_{\mathrm{F}}$,  $L = \max_{i=1, \ldots, m} L_i$,  $\bar{\lambda}_i = \frac{1}{T}\sum_{t=1}^T \hat{\lambda}_i^{(t)}$, and $\bm{U}^*$ is a Pareto efficient solution.
\label{theorm:convex}
\end{theorem}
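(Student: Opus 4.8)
The plan is to run the classical projected-(sub)gradient / online-convex-optimization potential argument, but with the multi-objective descent direction playing the role of a single gradient and the simplex weights $\hat{\boldsymbol{\lambda}}^{(t)}$ acting as a time-varying scalarization. First I would invoke Lemma~\ref{lemma:descent} to write the direction chosen at step $t$ as $\bm{D}_t = -\sum_{i=1}^m \hat{\lambda}_i^{(t)}\bm{G}_i^{(t)}$ with $\hat{\boldsymbol{\lambda}}^{(t)}$ in the probability simplex, so that $-\bm{D}_t$ is exactly the gradient at $\bm{U}_t$ of the scalarized objective $g_t(\bm{U})=\sum_i \hat{\lambda}_i^{(t)} f_i(\bm{U})$, which is convex as a convex combination of convex functions. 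I would also record from the same lemma that $\trace\!\left(\bm{D}_t^\top \bm{G}_i^{(t)}\right)\le 0$ for every $i$.

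Next I would track the potential $\|\bm{U}_t - \bm{U}^*\|_{\mathrm{F}}^2$. Using that $\bm{U}^*\in\mathcal{P}_r$ is feasible and that $\Pi_{\mathcal{P}_r}$ does not increase distance to a feasible point, the expansion $\|\bm{U}_{t+1}-\bm{U}^*\|_{\mathrm{F}}^2 \le \|\bm{U}_t + \eta\bm{D}_t - \bm{U}^*\|_{\mathrm{F}}^2$ produces the familiar three terms. The cross term $\trace\!\left(\bm{D}_t^\top(\bm{U}_t-\bm{U}^*)\right)$ I would control by first-order convexity of each $f_i$: since $\trace\!\left((\bm{G}_i^{(t)})^\top(\bm{U}_t-\bm{U}^*)\right) \ge f_i(\bm{U}_t)-f_i(\bm{U}^*)$, the nonnegativity of the weights yields $\trace\!\left(\bm{D}_t^\top(\bm{U}_t-\bm{U}^*)\right) \le -\sum_i \hat{\lambda}_i^{(t)}\left(f_i(\bm{U}_t)-f_i(\bm{U}^*)\right)$. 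The quadratic term I would bound by $\|\bm{D}_t\|_{\mathrm{F}}\le \sum_i \hat{\lambda}_i^{(t)}\|\bm{G}_i^{(t)}\|_{\mathrm{F}}\le L$, using that Lipschitz continuity forces $\|\bm{G}_i^{(t)}\|_{\mathrm{F}}\le L_i\le L$ together with convexity of the norm. Rearranging, telescoping over $t=1,\dots,T$, discarding the terminal nonnegative potential, and substituting $\eta=R/(L\sqrt{T})$ then gives the time-averaged bound $\tfrac{1}{T}\sum_{t=1}^T\sum_i \hat{\lambda}_i^{(t)}\left(f_i(\bm{U}_t)-f_i(\bm{U}^*)\right)=\order(RL/\sqrt{T})$.

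The remaining and genuinely delicate step is to pass from this averaged-over-iterates guarantee to the stated last-iterate bound $\sum_i \bar{\lambda}_i\left(f_i(\bm{U}_T)-f_i(\bm{U}^*)\right)$ with $\bar{\lambda}_i=\tfrac{1}{T}\sum_t \hat{\lambda}_i^{(t)}$. Here I would invoke the descent property of the algorithm: because $\trace\!\left(\bm{D}_t^\top \bm{G}_i^{(t)}\right)\le 0$ for every $i$ (eq.~(\ref{eq:descent})) and the line-search condition~(\ref{eq:back}) enforces a \emph{simultaneous} sufficient decrease across all objectives, each $f_i(\bm{U}_t)$ is monotone non-increasing along the trajectory, so $f_i(\bm{U}_T)\le f_i(\bm{U}_t)$ for all $t\le T$. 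Multiplying by $\hat{\lambda}_i^{(t)}\ge 0$ and averaging in $t$ gives $\sum_i \bar{\lambda}_i\left(f_i(\bm{U}_T)-f_i(\bm{U}^*)\right) \le \tfrac{1}{T}\sum_t\sum_i\hat{\lambda}_i^{(t)}\left(f_i(\bm{U}_t)-f_i(\bm{U}^*)\right)$, at which point the previous paragraph closes the argument.

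I expect two places to demand the most care. The first is the projection step: $\mathcal{P}_r=\{\bm{U}:\bm{U}^\top\bm{U}=\bm{I}_r\}$ is the (nonconvex) Stiefel manifold, so nonexpansiveness of $\Pi_{\mathcal{P}_r}$ is not automatic and must be justified in the regularized, convexified regime in which the theorem lives (Theorem~\ref{thrm:existence} together with the regularization added in~(\ref{eq:multi-multi-opt})). The second is reconciling the fixed step $\eta=R/(L\sqrt{T})$ with the monotonicity argument, which is cleanest under the Armijo search~(\ref{eq:back}); one must verify that this $\eta$ still yields a per-objective decrease after projection, or else derive the last-iterate statement directly. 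Matching the exact constant (the stated $RL/2\sqrt{T}$ versus the $RL/\sqrt{T}$ produced by the crude estimate $\|\bm{D}_t\|_{\mathrm{F}}\le L$) is minor bookkeeping that the optimality identity $\max_i \trace\!\left(\bm{D}_t^\top \bm{G}_i^{(t)}\right)=-\|\bm{D}_t\|_{\mathrm{F}}^2$ for~(\ref{eq:qop}) can be used to sharpen.
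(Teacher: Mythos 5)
Your proposal follows essentially the same route as the paper's proof: first-order convexity applied per objective and weighted by $\hat{\lambda}_i^{(t)}$ to bound $-\trace\bigl(\bm{D}_t^\top(\bm{U}_t-\bm{U}^*)\bigr)$, the standard potential/telescoping argument on $\|\bm{U}_t-\bm{U}^*\|_{\mathrm{F}}^2$ with $\|\bm{D}_t\|_{\mathrm{F}}\le L$, and then per-objective monotone decrease to convert the average-iterate bound into the stated last-iterate bound with the averaged weights $\bar{\lambda}_i$. In fact you are somewhat more careful than the paper, which silently treats the update as $\bm{U}_{t+1}=\bm{U}_t+\eta\bm{D}_t$ (ignoring the Stiefel projection you rightly flag), asserts the monotone decrease of each $f_i$ without invoking the line-search condition as you do, and itself incurs the $RL/\sqrt{T}$-versus-$RL/2\sqrt{T}$ constant discrepancy you anticipated.
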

\begin{proof}
The detailed proof is deferred to Appendix~\ref{app:convex}.
\end{proof}
Theorem~\ref{theorm:convex} indicates that, using the Pareto descent direction, we can achieve an $\epsilon$-accurate Pareto efficient solution with taking $\order\left(\frac{1}{\epsilon^2}\right)$ gradient descent steps. Using~(\ref{eq:convex-bound}), we can bound the average deviation of each objective from its respective value in the Pareto efficient solution of $\bm{U}^*$.

\begin{remark}
We note that Algorithm~\ref{alg:fairpca} is guaranteed to converges to a single Pareto fair subspace, starting from a fixed initial solution $\bm{U}_0$. Using different random starting points, we can find different Pareto fair subspaces and form the Pareto fair frontier of the problem. From an algorithmic point of view, we can not distinguish between different Pareto optimal subspaces, but as discussed by~\cite{kearns2019ethical}, based on the preference of different objectives, we can choose a desirable Pareto fair subspace from the frontier set.
\end{remark}
We note that when the regularization is not added to convexify the main objective,
we  have to deal with non-convex objectives in the optimization problem. In the following theorem, we investigate the convergence of Algorithm~\ref{alg:fairpca} for non-convex objectives that guarantees  the gradient vanishes over iterations.
\begin{theorem}[Nonconvex Convergence] Let $\mathrm{\bm{f}}(\bm{U}) = \left[f_1(\bm{U}),\ldots,f_m(\bm{U}) \right]$ be the multi-objective function to be minimized to find  a fair subspace with respect to $k$ sensitive groups.  Let $\bm{U}_1, \bm{U}_2, \ldots , \bm{U}_T$ be the sequence of solutions generated by Algorithm~\ref{alg:fairpca} updated using descent directions $\bm{D}_1, \bm{D}_2, \ldots , \bm{D}_T$. Then, if we choose the regularization parameter as $\alpha\geq\gamma$, we have the following:
\begin{equation}
    \min_{t=1,2, \ldots, T} \|\bm{D}_t  \|_{\mathrm{F}} \leq  \sqrt{\frac{\mathsf{M}_u - \mathsf{M}_l}{ C T}}, \end{equation}
where $\mathsf{M}_l$ is a lower bound for the values of all objective functions, $\mathsf{M}_u$ is the maximum of the values of all functions at initial point, and $C$ is a constant depending on the smoothness of objectives.
\label{thm:converg}
\end{theorem}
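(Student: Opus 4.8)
The plan is to convert the simultaneous per-objective decrease guaranteed by the Pareto descent direction into a telescoping bound on $\sum_{t} \|\bm{D}_t\|_{\mathrm{F}}^2$, and then extract the minimum. The first step is to establish the sharp descent inequality $\trace\left(\bm{D}_t^\top \bm{G}_i^{(t)}\right) \leq -\|\bm{D}_t\|_{\mathrm{F}}^2$ for every $i \in [m]$. By Lemma~\ref{lemma:descent}, $-\bm{D}_t = \sum_{i} \lambda_i^{(t)} \bm{G}_i^{(t)}$ is the minimum-Frobenius-norm element of the convex hull of the gradients $\left\{\bm{G}_1^{(t)}, \ldots, \bm{G}_m^{(t)}\right\}$. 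The variational characterization of the projection of the origin onto a convex set gives $\trace\left((-\bm{D}_t)^\top(\bm{Z} + \bm{D}_t)\right) \geq 0$ for every $\bm{Z}$ in the hull; taking $\bm{Z} = \bm{G}_i^{(t)}$ yields exactly $\trace\left(\bm{D}_t^\top \bm{G}_i^{(t)}\right) \leq -\|\bm{D}_t\|_{\mathrm{F}}^2$.

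Next I would turn each objective's smoothness into a per-iteration decrease. Under $\alpha \geq \gamma$, the eigen-gap bound makes every regularized objective $f_i$ have Lipschitz-continuous gradient with a uniform constant $L$, so the standard descent lemma applied along the update $\bm{U}_t \mapsto \bm{U}_t + \eta_t \bm{D}_t$ gives
\[
f_i(\bm{U}_t + \eta_t \bm{D}_t) \leq f_i(\bm{U}_t) + \eta_t\, \trace\left(\bm{D}_t^\top \bm{G}_i^{(t)}\right) + \frac{L \eta_t^2}{2}\|\bm{D}_t\|_{\mathrm{F}}^2.
\]
Inserting the descent inequality from the first step collapses this to $f_i(\bm{U}_{t+1}) \leq f_i(\bm{U}_t) - \eta_t\left(1 - \frac{L\eta_t}{2}\right)\|\bm{D}_t\|_{\mathrm{F}}^2$, which for a step size bounded away from $0$ and below $2/L$ reads $f_i(\bm{U}_{t+1}) \leq f_i(\bm{U}_t) - C\|\bm{D}_t\|_{\mathrm{F}}^2$ with $C > 0$ depending only on the smoothness. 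Equivalently, the Armijo condition~(\ref{eq:back}) certifies this sufficient decrease directly: since the backtracking line search terminates with $\eta_t$ bounded below by a constant determined by $L$ and $\beta$, combining~(\ref{eq:back}) with the descent inequality gives $f_i(\bm{U}_{t+1}) \leq f_i(\bm{U}_t) - C\|\bm{D}_t\|_{\mathrm{F}}^2$ simultaneously for all $i$.

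Finally I would telescope. Fixing any index $i$ and summing the per-iteration decrease over $t = 1, \ldots, T$ gives $C\sum_{t=1}^T \|\bm{D}_t\|_{\mathrm{F}}^2 \leq f_i(\bm{U}_1) - f_i(\bm{U}_{T+1}) \leq \mathsf{M}_u - \mathsf{M}_l$, where $f_i(\bm{U}_1) \leq \mathsf{M}_u$ because $\mathsf{M}_u$ is the largest objective value at the initial point and $f_i(\bm{U}_{T+1}) \geq \mathsf{M}_l$ by the uniform lower bound. Bounding the sum below by $T \min_{t}\|\bm{D}_t\|_{\mathrm{F}}^2$ and rearranging yields $\min_t \|\bm{D}_t\|_{\mathrm{F}}^2 \leq (\mathsf{M}_u - \mathsf{M}_l)/(CT)$, and taking square roots gives the claim.

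I expect the main obstacle to be the second step, specifically reconciling the descent lemma with the projection $\Pi_{\mathcal{P}_r}$ onto the (nonconvex) Stiefel manifold: because the feasible set is not convex, the projection is not globally nonexpansive, so the clean smoothness inequality must either be applied to the unprojected iterate and combined with a bound showing the projection does not increase the objective, or the analysis must lean entirely on the Armijo line search~(\ref{eq:back}) evaluated at feasible points. Establishing that $\alpha \geq \gamma$ furnishes a single Lipschitz-gradient constant $L$ uniformly over the trajectory --- and hence a uniform lower bound on the accepted step sizes $\eta_t$ --- is the technical crux that makes $C$ a genuine constant rather than an iteration-dependent quantity.
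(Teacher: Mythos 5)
Your proposal is correct and follows essentially the same route as the paper's proof: the sharp descent inequality $\trace\left(\bm{D}_t^\top \bm{G}_i^{(t)}\right) \leq -\|\bm{D}_t\|_{\mathrm{F}}^2$ (which the paper establishes in the proof of Lemma~\ref{lemma:descent}), a lower bound on the backtracking step sizes derived from smoothness under $\alpha \geq \gamma$, the resulting uniform per-iteration decrease $f_i(\bm{U}_{t+1}) \leq f_i(\bm{U}_t) - C\|\bm{D}_t\|_{\mathrm{F}}^2$, and a telescoping sum bounded by $\mathsf{M}_u - \mathsf{M}_l$. Your closing caveat about the Stiefel projection $\Pi_{\mathcal{P}_r}$ is well taken --- the paper's own proof silently analyzes the unprojected update $\bm{U}_t + \eta_t \bm{D}_t$ and never addresses this point, so flagging it is a fair critique rather than a gap in your argument relative to the paper's.
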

\begin{proof}
The proof can be found in Appendix~\ref{app:nonconvex}.
\end{proof}
An immediate consequence of the above theorem is that the gradient of Pareto descent directions vanishes and convergences to zero and thereby the solutions generated by the algorithm convergence to a  stationary fair subspace. In particular, only $\order\left(\frac{1}{\epsilon^2}\right)$ iterations are required to obtain an $\epsilon$-close fair subspace.  The analysis of Theorem~\ref{thm:converg} follows the standard analysis of gradient descent for non-convex smooth optimization where the obtained bound matches the known achievable convergence rate for the norm of the gradients.  We want to sketch another alternative method that results in the same rate with careful analysis. Specifically, observe that the descent direction can be considered as an inexact gradient from the viewpoint of individual functions with perturbation $\mathrm{\mathbf{D}}_t - \mathrm{\mathbf{G}}_i^{(t)}$. Noting that $\trace\left(\mathrm{\mathbf{D}}_t^\top \mathrm{\mathbf{G}}_j^{(t)}\right) \leq -\left\| \mathrm{\mathbf{D}}_t \right\|_{\text{F}}^2$ as shown in the proof of Lemma~\ref{lemma:descent} and following the standard analysis of convergence of non-convex functions,  we can show that norm of descent directions vanishes as algorithm proceeds, thereby the proposed algorithm can find a stationary point. However, the obtained solution is not guaranteed to be an optimal Pareto due to the non-convexity of the objectives and might be a saddle point.

\subsection{Comparison with other approaches}\label{sec:comp}
As it was discussed, one approach to solve a multi-objective optimization is to make it constrained optimization, in which we keep the main objective and change all other objectives to inequality constraints with parameters $\boldsymbol{\epsilon}$. Hence, constrained optimization is a relaxation of multi-objective optimization, where finding the best constraint parameter ($\boldsymbol{\epsilon}$) for each constraint could be very challenging as discussed in~\cite{donini2018empirical}. It also lacks theoretical guarantees due to the non-convex nature of constraints. Lagrangian method of multipliers is equivalent to constraint optimization problems, but not exactly to multi-objective counterpart. To see this, we note that by applying GD to Lagrangian function, the contribution of the gradient of each individual function, $\bm{G}_i^{(t)}$, is weighted by its Lagrangian multiplier, while in our case the weights are adaptively learned by finding a Pareto decent direction. We note that while~\cite{morgenstern2019fair} improves the requirement of extra dimensions over~\cite{samadi2018price}, it still needs $\lfloor \sqrt{2k+\frac{1}{4}} - \frac{3}{2}\rfloor$ extra dimensions for a $k$-group sensitive feature and has to solve an SDP which has the time complexity of $\order\left(d^{6.5}\log\left(\frac{1}{\epsilon}\right)\right)$ or $\order\left(d^3/\epsilon^2\right)$ with multiplicative weight update. On the other hand, our method enjoys the efficiency of GD with an overhead to solve the quadratic problem over the simplex for finding the descent direction. Also, at each step, we need to project the solution to find the orthonormal bases for the updated solution, which could be done using SVD with an overhead of $\order\left(r^2d\right)$ or more efficiently using variance-reduced SGD~\citep{shamir2015stochastic}. Using vanilla SVD for per-iteration projection brings the overall time complexity of the proposed algorithm to $\order\left(r^2d/\epsilon^2\right)$. We note that the convex formulation in~\cite{olfat2018convex} also requires solving an SDP programming  (e.g.,  ellipsoid method to interior point method), which suffers from high computational cost as well.
 
This is for the first time that we are solving the exact multi-objective problem, rather than its min-max relaxation using SDP in a fairness problem. \cite{morgenstern2019fair} is suggesting that for $k=2$ their approximation is exact, meaning their algorithm will find the fair representation in the exact $r$ dimension they aim to reach. However, in practice, even for $k=2$, we can show that our algorithm can achieve a smaller disparity error, as shown in Figure~\ref{fig:compare}, which indicates that pairwise disparity error can achieve a better subspace in terms of fairness. For $k>2$, they are still solving an inefficient SDP problem to exact same problem we are proposing. Hence, the novelty of our approach lies in solving this problem using gradient descent and ensuring to reach a Pareto stationary point, which even does not require extra dimensions to satisfy fairness. This setting and its proposed gradient descent algorithm to solve it can be applied to other unsupervised and supervised fairness problems. Thus, it could open up new perspectives on all other fairness problems in learning tasks, by advocating optimal trade-offs between main learning objectives and fairness criteria using Pareto efficiency.

\section{Experiment}\label{sec:exp}
In this section, we empirically examine the introduced algorithm for fair PCA with the Adult dataset\footnote{\url{https://archive.ics.uci.edu/ml/datasets/Adult}} and the Credit dataset\footnote{\url{https://archive.ics.uci.edu/ml/datasets/default+of+credit+card+clients}}. The Adult dataset consists of census data to predict whether the income of a person exceeds $50$K per year or not. The Credit dataset contains clients' credit history information to predict whether they would default in the future or not. For PCA, we will omit the label data and work with the rest of it, which contains $14$ features for the Adult dataset, including gender and race, which we consider as sensitive features in this dataset. In the Credit dataset, we will have $23$ features, including sensitive features of sex and marriage. The gender feature in the Adult dataset and sex in the Credit dataset are binary features with two values, namely, Male and Female. Race from the Adult dataset, on the other side, is a \textit{multiple group feature}, with $5$ different groups, including White, Asian-Pac-Islander, Amer-Indian-Eskimo, Black, and Other. Marriage in the Credit dataset is also a multiple group feature with $3$ groups of Single, Married, and Other. 

For the Adult dataset, we use the training dataset, which has $32,561$ number of samples, among which $10,548$ belongs to the Female group and $22,013$ to the Male group. The distribution of samples among race groups are as follows: Black $30,47$, White $27,994$, Asian-Pac-Islander $312$, Amer-Indian-Eskimo $962$, and Other $246$. In the Credit dataset, we have $30,000$ training samples, out of which there are $18,112$ Female and $11,888$ Male samples. The distribution of the Marriage feature is $13,659$ married, $15,964$ single, and $323$ other samples. We first apply the fair PCA method to binary sensitive feature, in which we set the learning rate to $1/\sqrt{t}$, where $t$ is the iteration number. This condition on the learning rate satisfies the maximum decrease condition by backtracking line search in (\ref{eq:back}).

\subsection{Binary sensitive feature}
In the Adult dataset, we observed that the Female group is benefiting from normal PCA on the whole dataset, while the Male group is sacrificing its reconstruction error. Hence, by applying the Fair PCA algorithm, we can perfectly decrease these trade-offs, while suffering an insignificant loss to the total reconstruction error, compared to normal PCA. The results are depicted in Figure~\ref{fig:binary_re}, where the trade-offs and how Fair PCA is addressing them is noticeable.
\begin{figure*}[t!]
    \centering
    \begin{subfigure}[b]{0.31\textwidth}
		\centering
		\includegraphics[width=\textwidth]{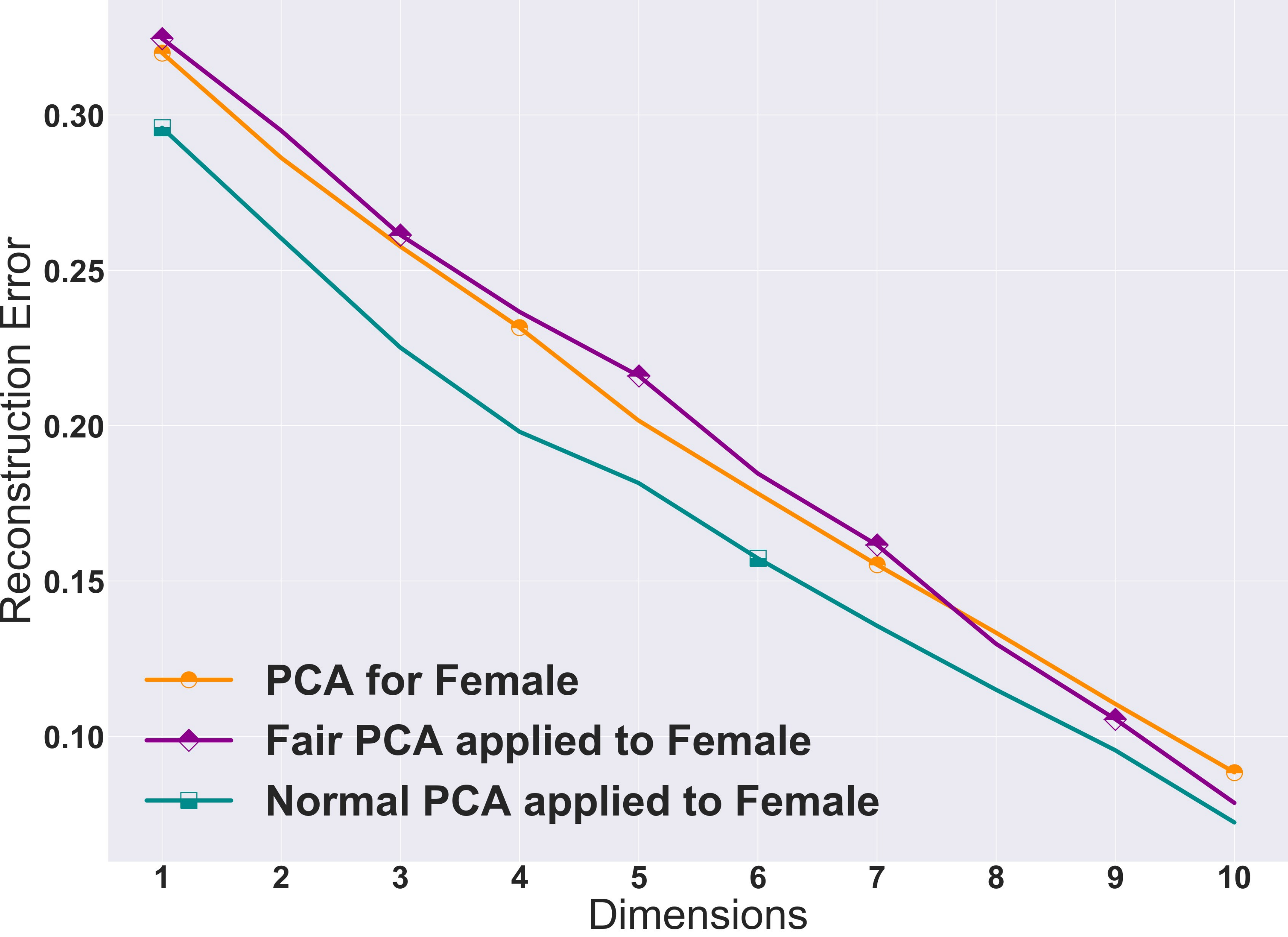}
		\label{fig:demale}
	\end{subfigure}
	\hfill
	\begin{subfigure}[b]{0.31\textwidth}
		\centering
		\includegraphics[width=\textwidth]{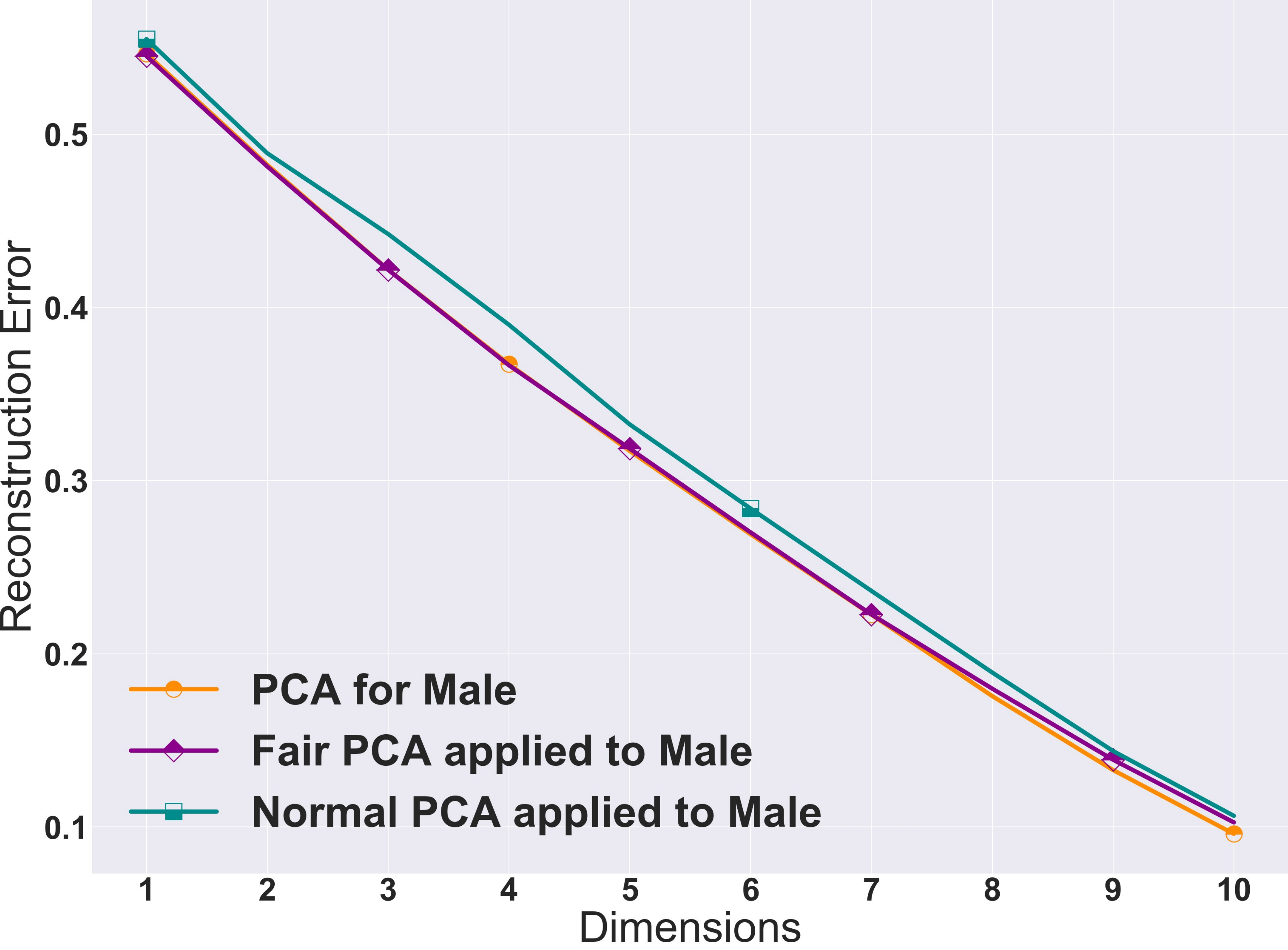}
		\label{fig:male}
	\end{subfigure}
	\hfill
	\begin{subfigure}[b]{0.31\textwidth}
		\centering
		\includegraphics[width=\textwidth]{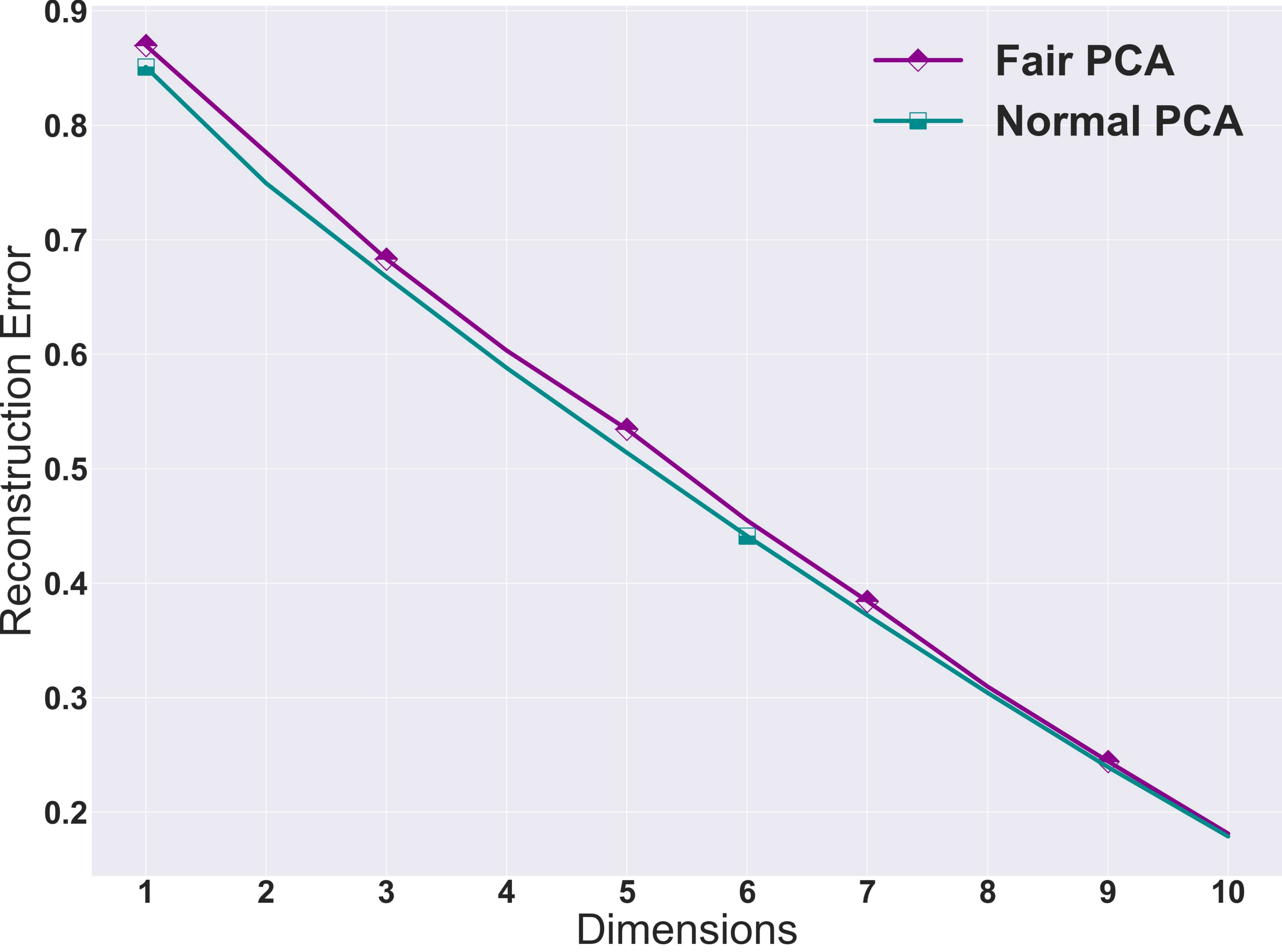}
		\label{fig:binarytotal}
	\end{subfigure}
	\hfill
    \caption{Applying normal PCA and fair PCA to the Adult dataset with gender as its sensitive feature. The first two figures show the reconstruction error of normal PCA (trained on all data) applied to each group, fair PCA (trained on all data) applied to each group, and normal PCA trained on the data of each group individually. The last figure reveals the difference between normal and fair PCA reconstruction loss on all data, which is very tiny and negligible.}
    \label{fig:binary_re}
\end{figure*}
To compare the introduced Pareto fair PCA with algorithms using SDP in~\cite{samadi2018price} and ~\cite{morgenstern2019fair}, we will use the average disparity errors across sensitive groups in both Adult and Credit datasets.
Figure~\ref{fig:compare} shows the average disparity errors of Pareto fair PCA with single and pairwise disparity error objectives, SDP fair PCA, and normal PCA on binary features (gender and sex) of Adult and Credit datasets. First, it reveals that there is a huge gap between normal PCA and fair PCA algorithms in terms of disparity errors, which is indicating that these algorithms are decreasing this disparity error. Second, it shows the superiority of Pareto fair PCA over SDP relaxation methods in both datasets (especially with pairwise objectives), where Pareto fair PCA has a smaller average disparity error close to zero. Also, to show that what is the exact price of fairness that each algorithm pays, we show the total reconstruction loss of our algorithm and other fair PCA methods with respect to normal PCA in Figure~\ref{fig:compare_TRL}. It can be noted that the Pareto fair PCA incurs a slight degradation in total reconstruction loss in exchange for fairness, while this price is much higher in the other state-of-the-art algorithms.
\begin{figure}[t!]
    \centering
	\begin{subfigure}[b]{0.45\textwidth}
		\centering
		\includegraphics[width=\textwidth]{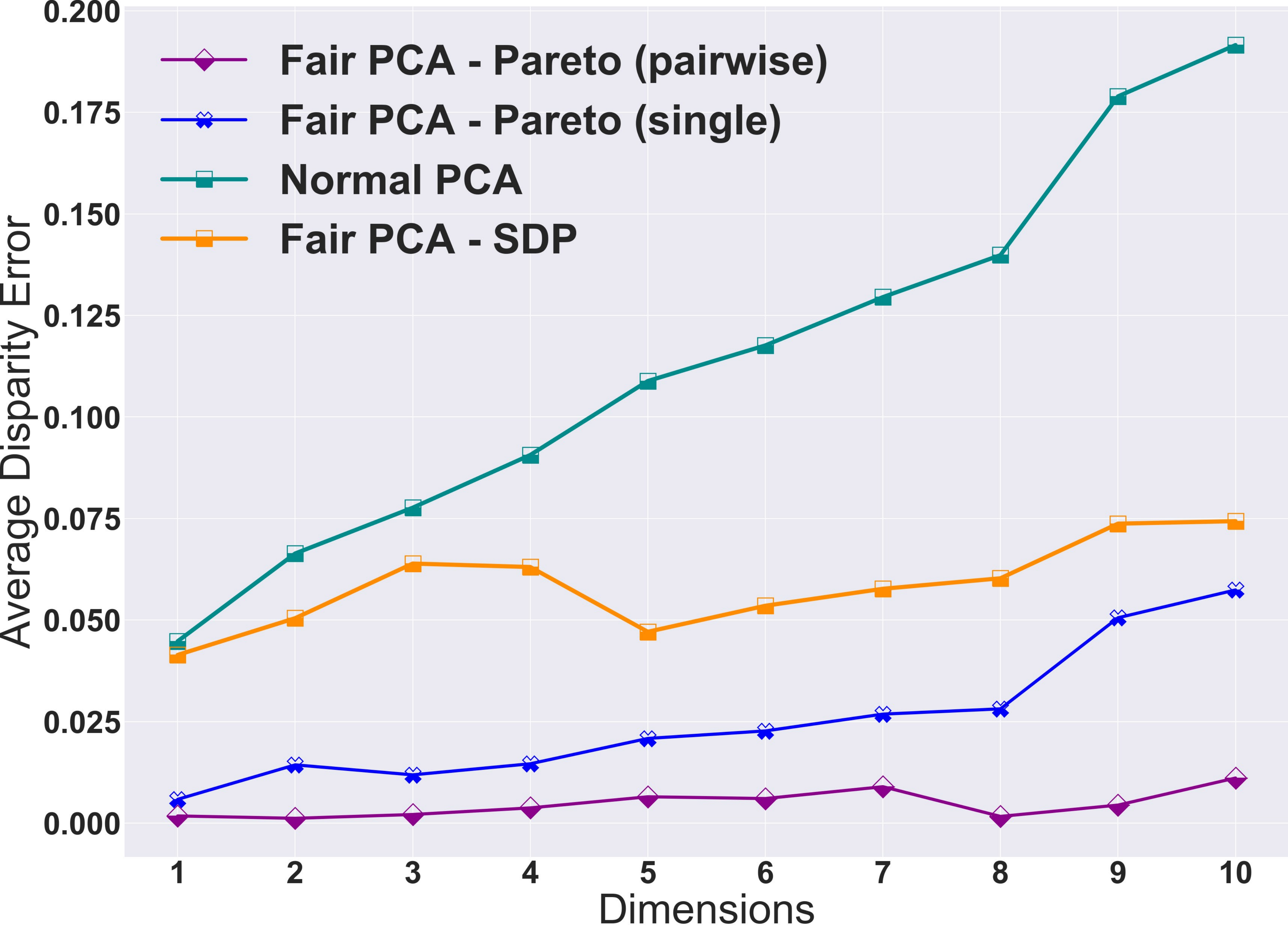}
		\caption{Adult dataset}
		\label{fig:compare_adult}
	\end{subfigure}
    \hfill
	\begin{subfigure}[b]{0.45\textwidth}
		\centering
		\includegraphics[width=\textwidth]{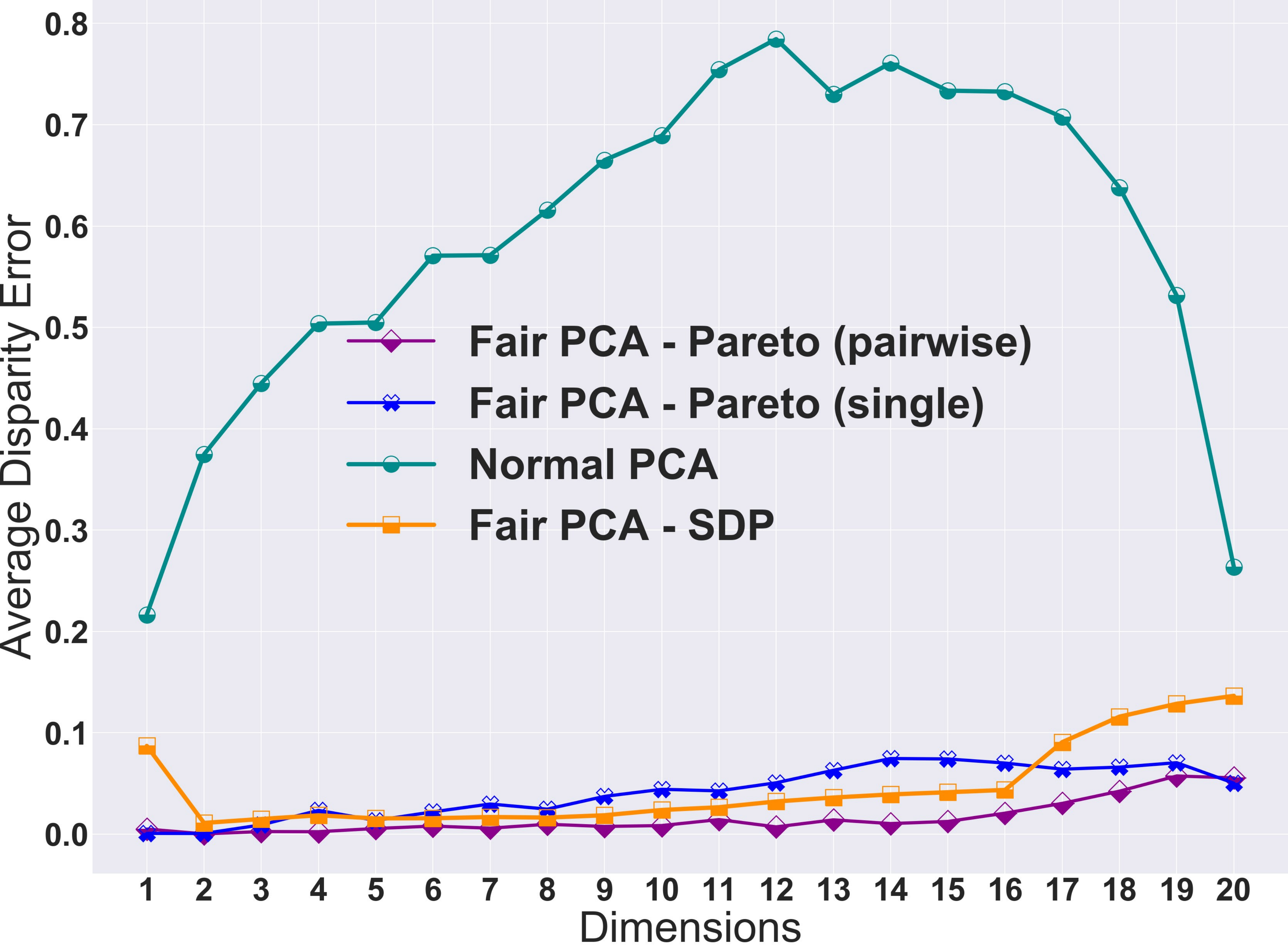}
		\caption{Credit dataset}
		\label{fig:compare_credit}
	\end{subfigure}
    \caption{Comparing Pareto fair PCA algorithm introduced in this paper (pairwise and single disparity error) with fair PCA algorithms using SDP relaxation introduced in~\cite{samadi2018price} and \cite{morgenstern2019fair}. The experiment is on binary features of Adult and  Credit datasets (gender and sex). The average disparity error of algorithms on the Adult dataset clearly shows the superiority of the Pareto fair PCA with pairwise disparity error objectives and then the single disparity error objectives. In the Credit dataset, Pareto fair PCA with pairwise objectives has a slightly better performance with respect to two other methods.}
    \label{fig:compare}
\end{figure}

\begin{figure}[t!]
    \centering
	\begin{subfigure}[b]{0.45\textwidth}
		\centering
		\includegraphics[width=\textwidth]{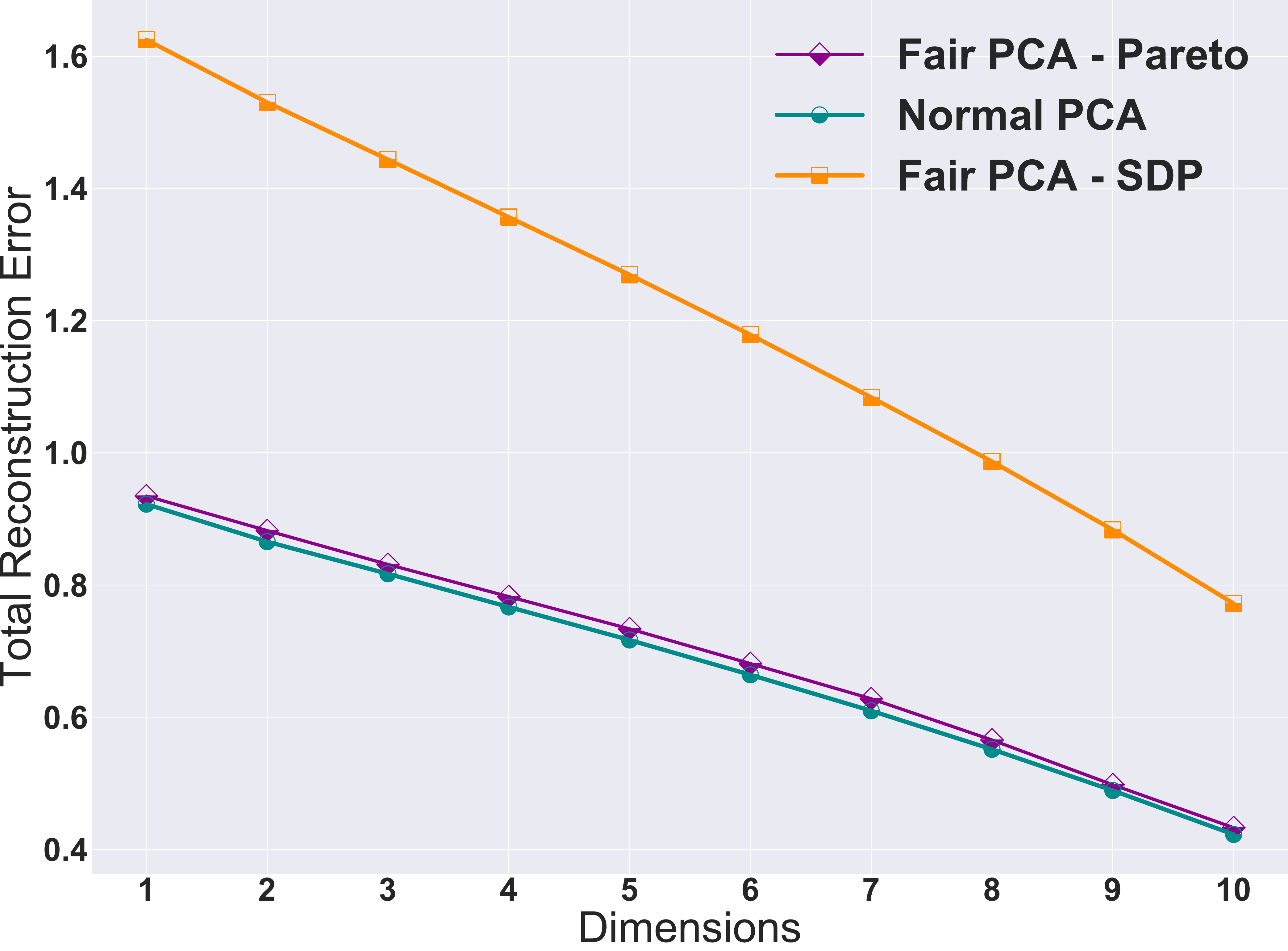}
		\caption{Adult dataset}
		\label{fig:compare_trl_adult}
	\end{subfigure}
    \hfill
	\begin{subfigure}[b]{0.45\textwidth}
		\centering
		\includegraphics[width=\textwidth]{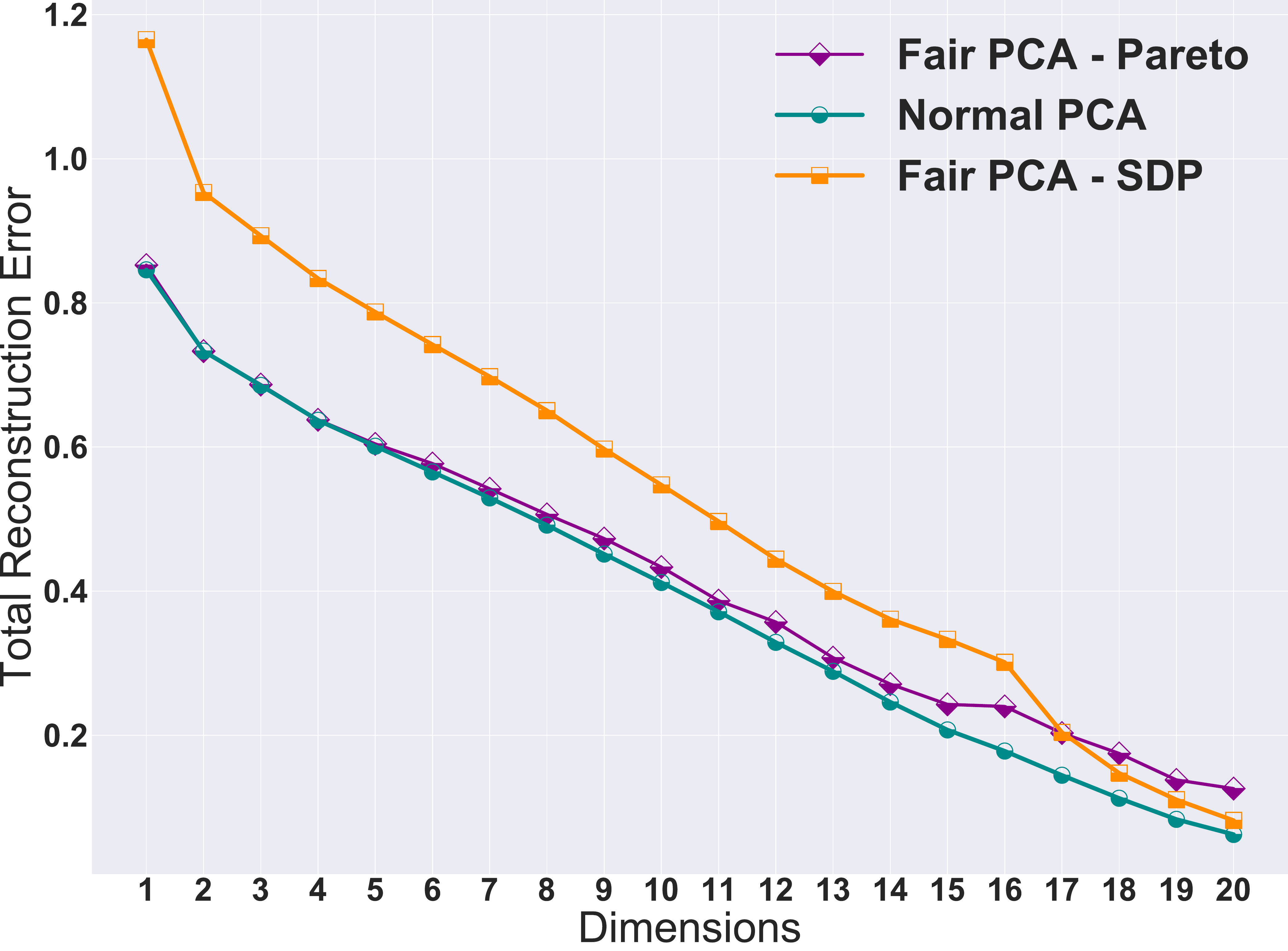}
		\caption{Credit dataset}
		\label{fig:compare_trl_credit}
	\end{subfigure}
    \caption{Comparing the total reconstruction loss of Pareto fair PCA algorithm introduced in this paper (with pairwise objectives), with fair PCA algorithms using SDP relaxation introduced in~\cite{samadi2018price} and \cite{morgenstern2019fair}. The experiment is on binary features of Adult and  Credit datasets (gender and sex). From both figures, it can be inferred that Pareto fair PCA satisfies the fairness objective better, while it incurs a tiny increase in the total reconstruction loss with respect to the normal PCA. On the other hand, Fair PCA with SDP suffers from a huge increase in the total reconstruction loss to satisfy the fairness objective.}
    \label{fig:compare_TRL}
\end{figure}

\subsection{Multiple group sensitive feature}
The proposed Algorithm~\ref{alg:fairpca}, can efficiently generalize to the multiple group sensitive features, by adding pairwise disparity errors of each pair of groups to the objective vector and minimize the overall vector to reach a Pareto optimal or stationary point. However, adding more objectives, introduces more trade-offs, makes the optimization over all objectives more difficult. 

First, we start with the Adult dataset with race as the sensitive feature. In this dataset, race has $5$ categories, makes it a multiple group sensitive feature. The reconstruction error of the Pareto fair PCA and normal PCA is shown in Figure~\ref{fig:multi_re}, where the trade-offs between benefits and sacrifices of different groups are clearly noticeable. The fair PCA  algorithm can superbly decrease these trade-offs for all but one group, with a negligible increase in overall reconstruction loss. Following the same step as in the binary case, we show the disparity error of different groups in Figure~\ref{fig:de-multi_re}, which reveals that fair PCA clearly outperforms normal PCA in most of the groups. Figure~\ref{fig:multi_both} depicts the reconstruction loss and average disparity error of fair and normal PCA. The results indicate that even in a dataset with a multiple group sensitive feature, the increase in the reconstruction loss of fair PCA compared to the normal PCA is slim, while the gap between their disparity errors is huge. This means that normal PCA is unfairly treating different groups in its learned representation subspace.

\begin{figure}[th!]
    \centering
    \begin{subfigure}[b]{0.32\textwidth}
		\centering
		\includegraphics[width=\textwidth]{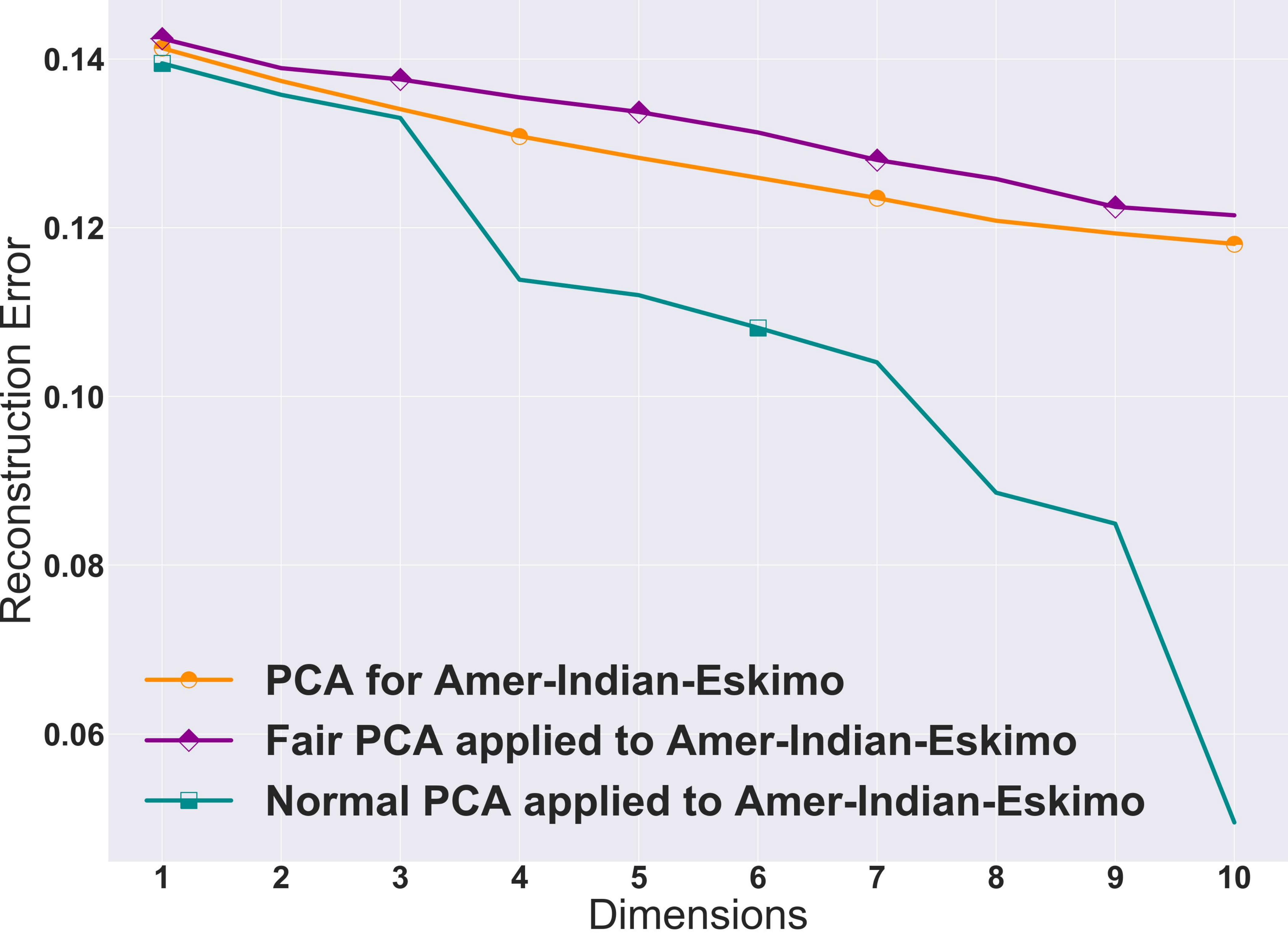}  
		\label{fig:multi-AIE}
	\end{subfigure}
	\hfill
	\begin{subfigure}[b]{0.32\textwidth}
		\centering
		\includegraphics[width=\textwidth]{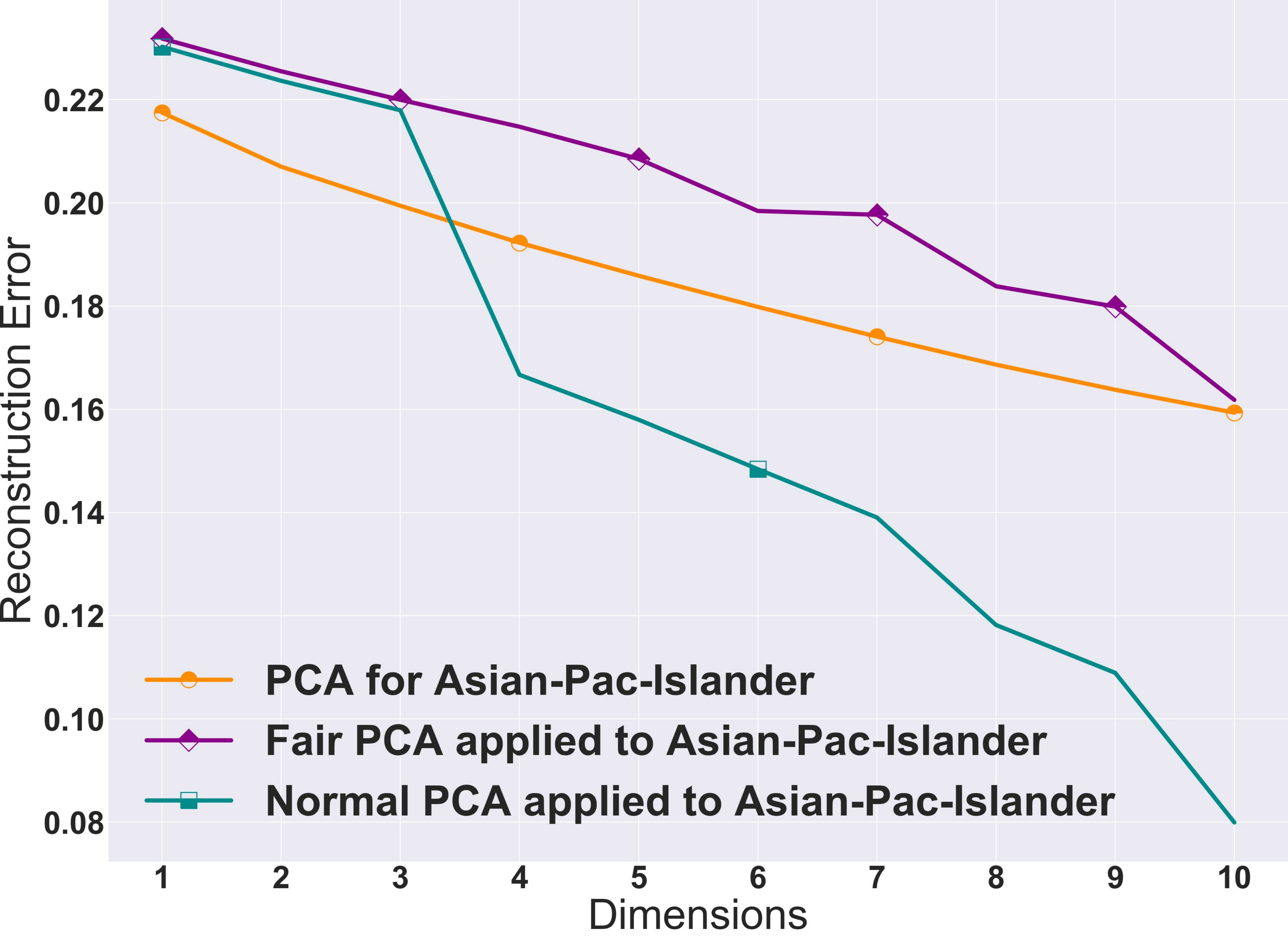}
		\label{fig:multi-API}
	\end{subfigure}
	\hfill
	\begin{subfigure}[b]{0.32\textwidth}
		\centering
		\includegraphics[width=\textwidth]{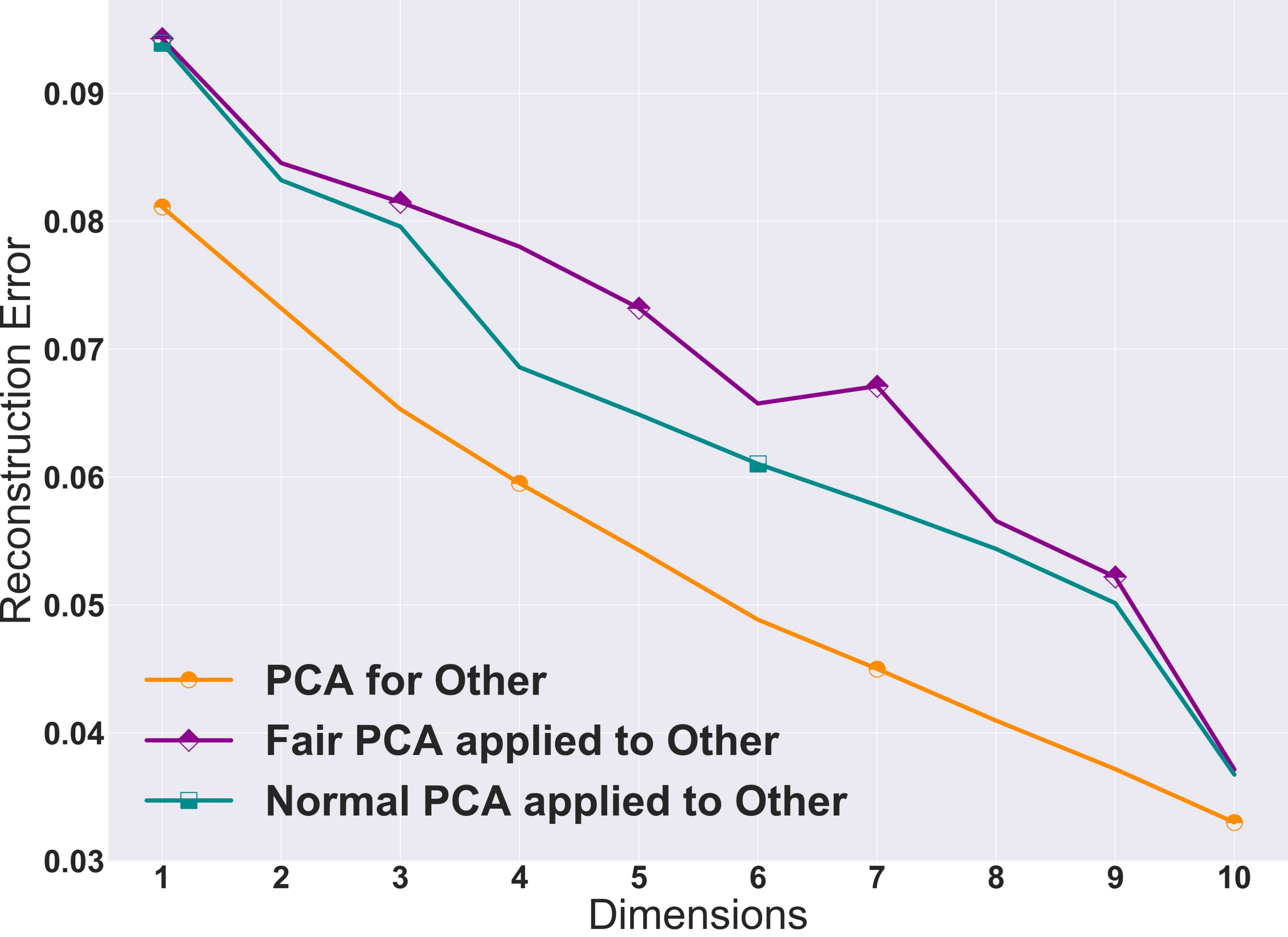}
		\label{fig:multi-other}
	\end{subfigure}
	
	\centering
    \begin{subfigure}[b]{0.32\textwidth}
		\centering
		\includegraphics[width=\textwidth]{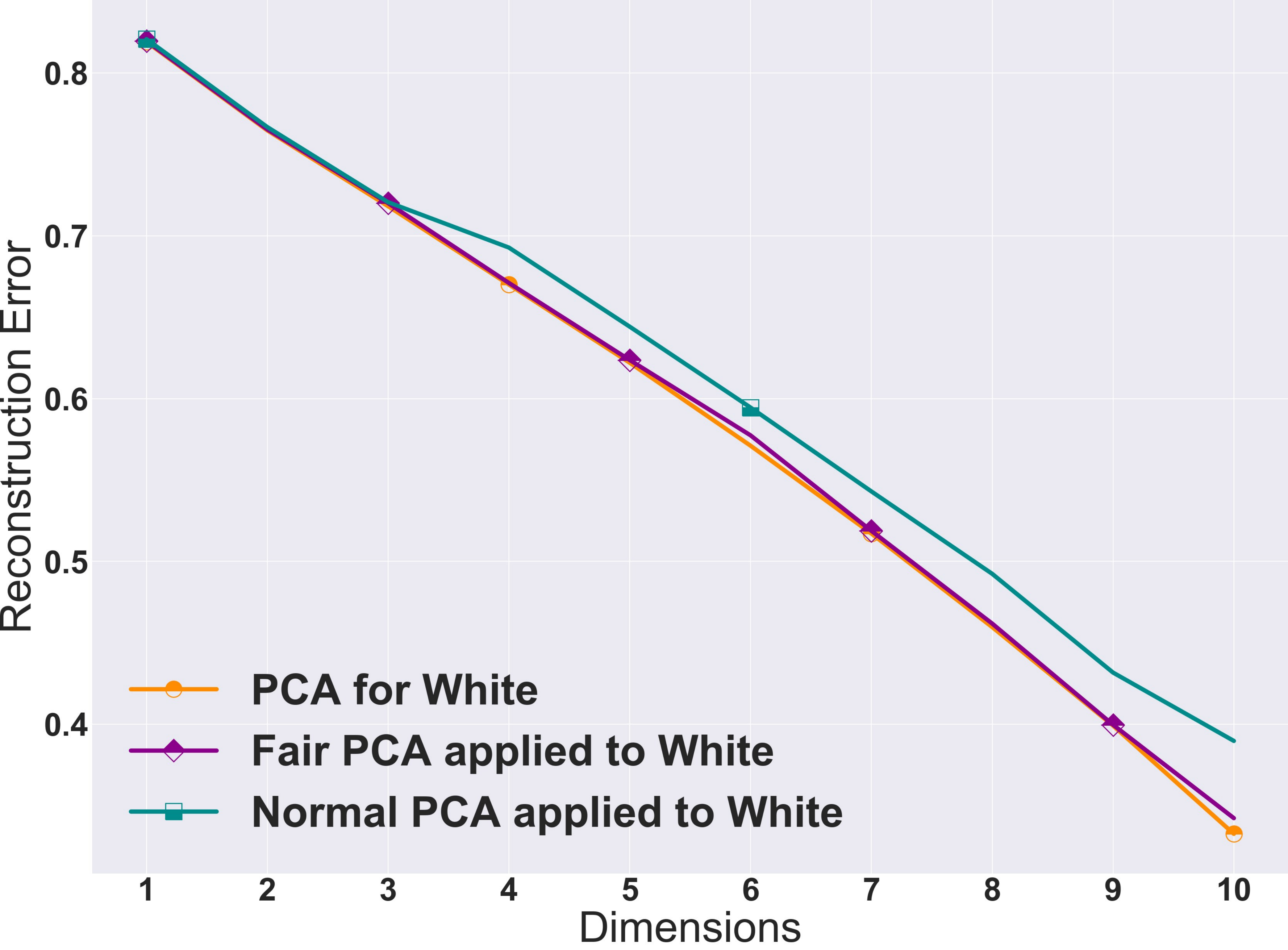}  
		\label{fig:multi-white}
	\end{subfigure}
	\begin{subfigure}[b]{0.32\textwidth}
		\centering
		\includegraphics[width=\textwidth]{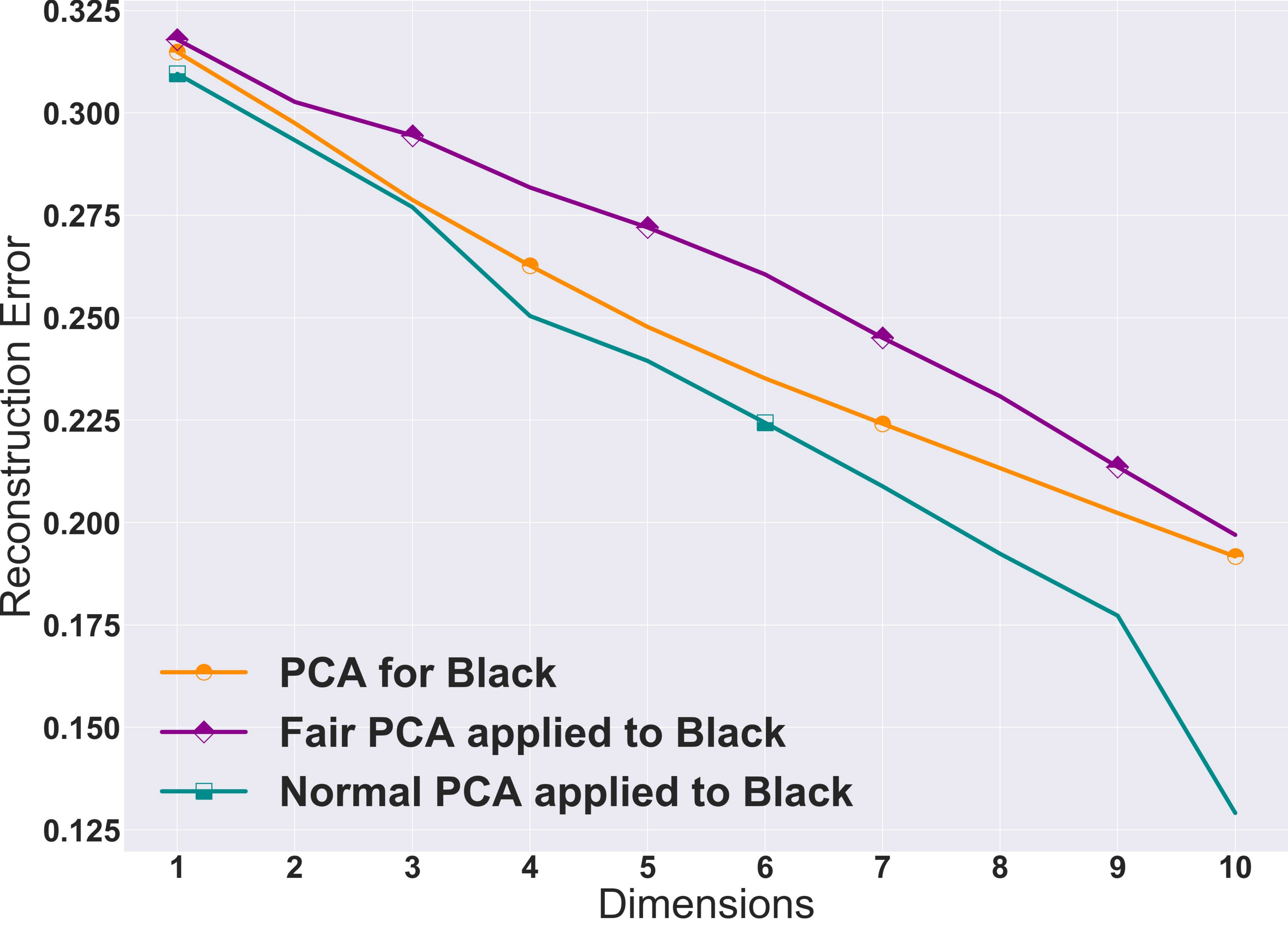}
		\label{fig:multi-black}
	\end{subfigure}
    \caption{Applying normal and fair PCA on the Adult dataset with ``race'' as its sensitive feature. Each plot shows the reconstruction error of the normal PCA (trained on the whole data) applied to each group's data, fair PCA (trained on the whole data) applied to each group's data, and normal PCA trained on each group's data individually. }
    \label{fig:multi_re}
\end{figure}

\begin{figure}[th!]
    \centering
    \begin{subfigure}[b]{0.32\textwidth}
		\centering
		\includegraphics[width=\textwidth]{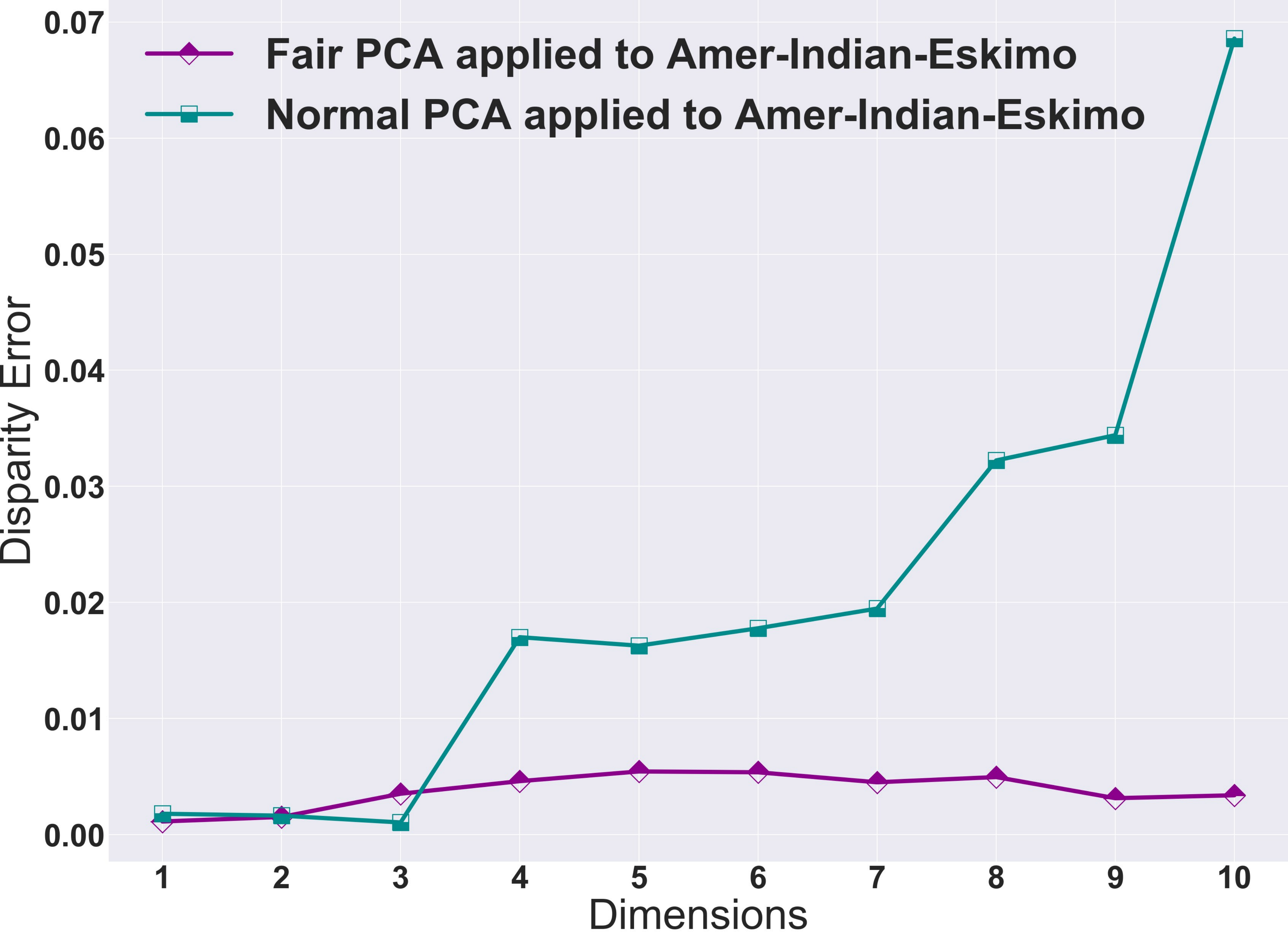}  
		\label{fig:de-multi-AIE}
	\end{subfigure}
	\hfill
	\begin{subfigure}[b]{0.32\textwidth}
		\centering
		\includegraphics[width=\textwidth]{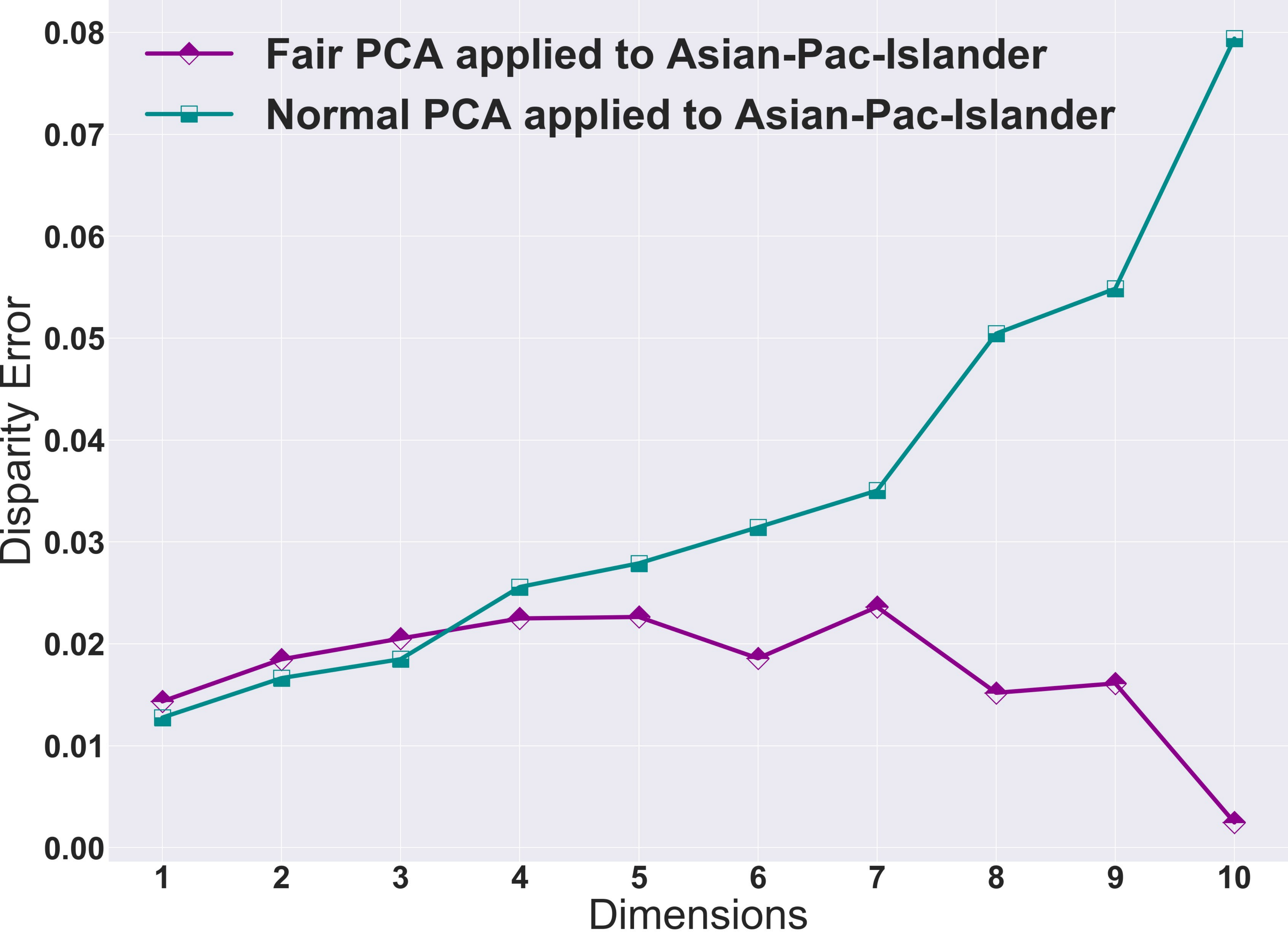}
		\label{fig:de-multi-API}
	\end{subfigure}
	\hfill
	\begin{subfigure}[b]{0.32\textwidth}
		\centering
		\includegraphics[width=\textwidth]{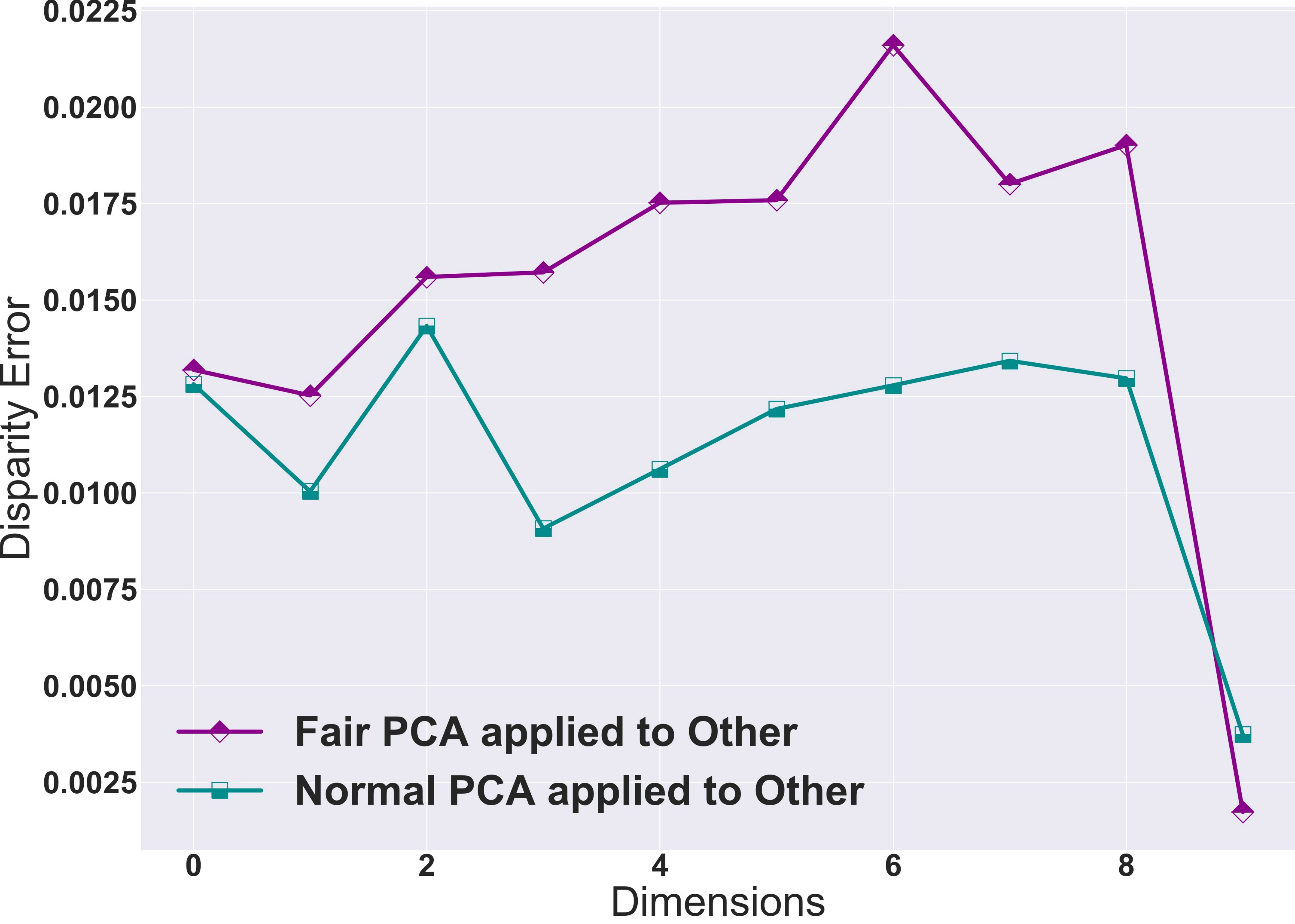}
		\label{fig:de-multi-other}
	\end{subfigure}
	
	\centering
    \begin{subfigure}[b]{0.32\textwidth}
		\centering
		\includegraphics[width=\textwidth]{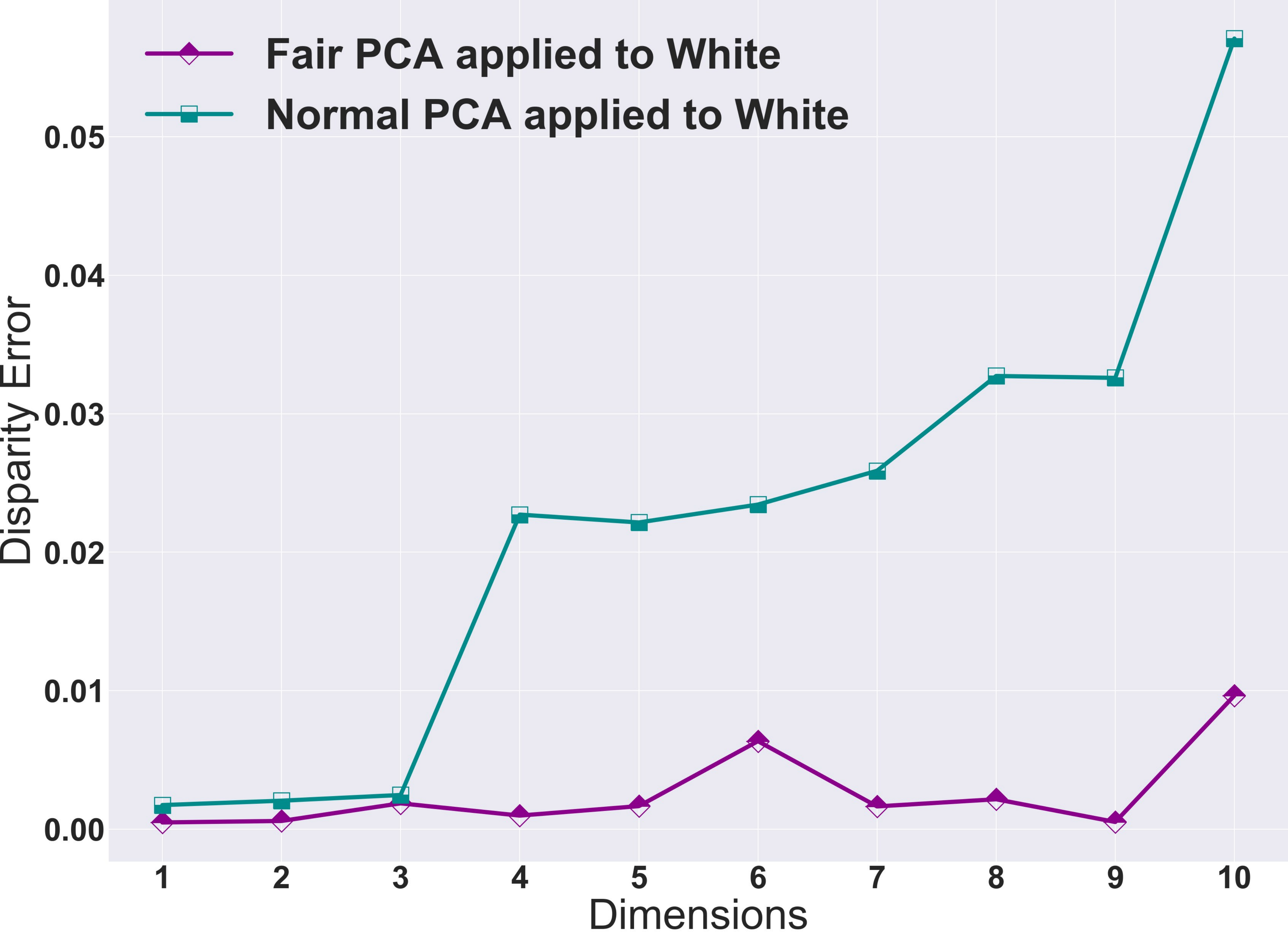}  
		\label{fig:de-multi-white}
	\end{subfigure}
	\begin{subfigure}[b]{0.32\textwidth}
		\centering
		\includegraphics[width=\textwidth]{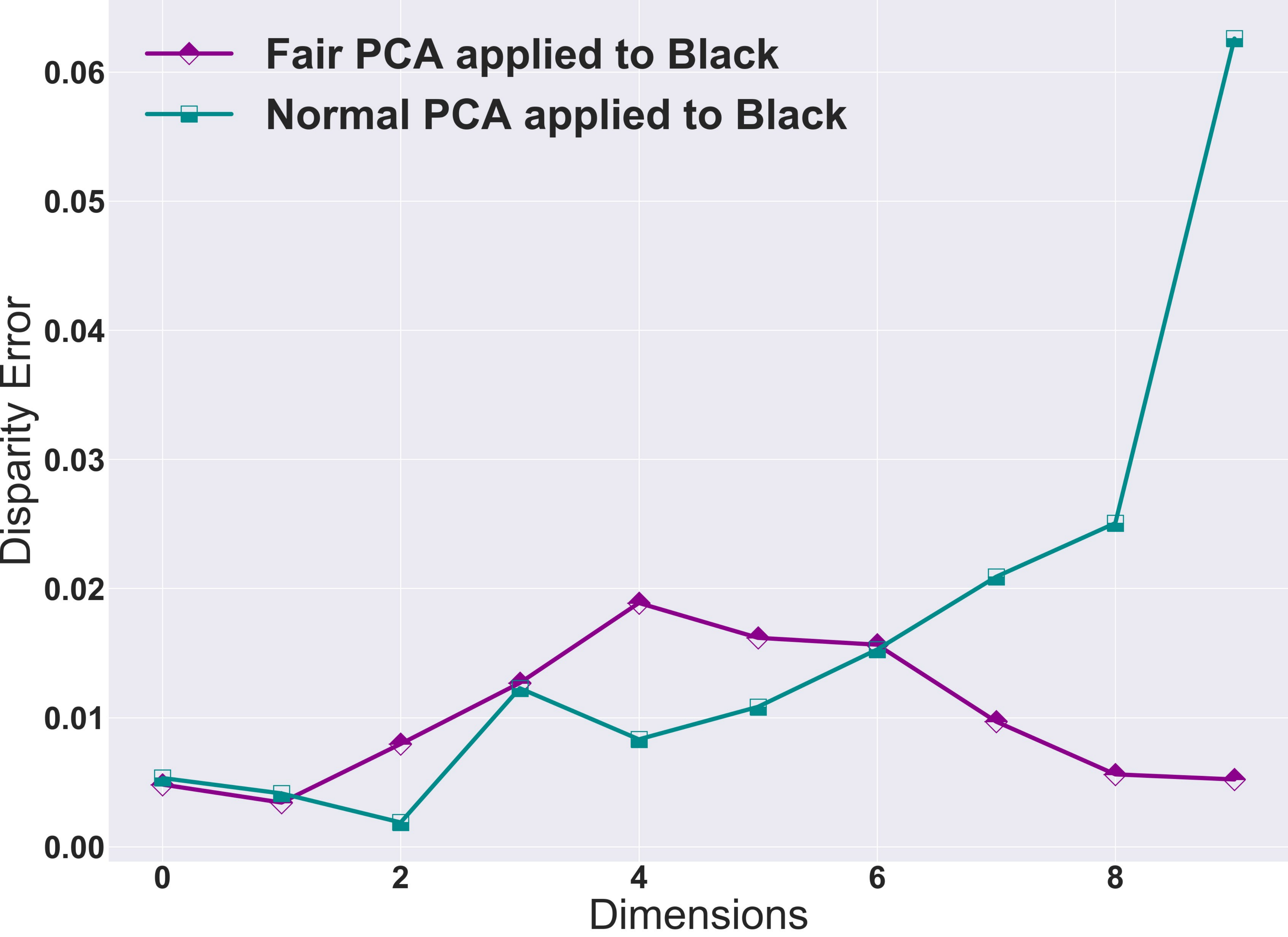}
		\label{fig:de-multi-black}
	\end{subfigure}
    \caption{Disparity error of normal and fair PCA trained on the Adult dataset with ``race'' as its sensitive feature. Each plot depicts the disparity errors of different groups with normal and fair PCA.}
    \label{fig:de-multi_re}
\end{figure}

\begin{figure}[t!]
    \centering
	\begin{subfigure}[b]{0.45\textwidth}
		\centering
		\includegraphics[width=\textwidth]{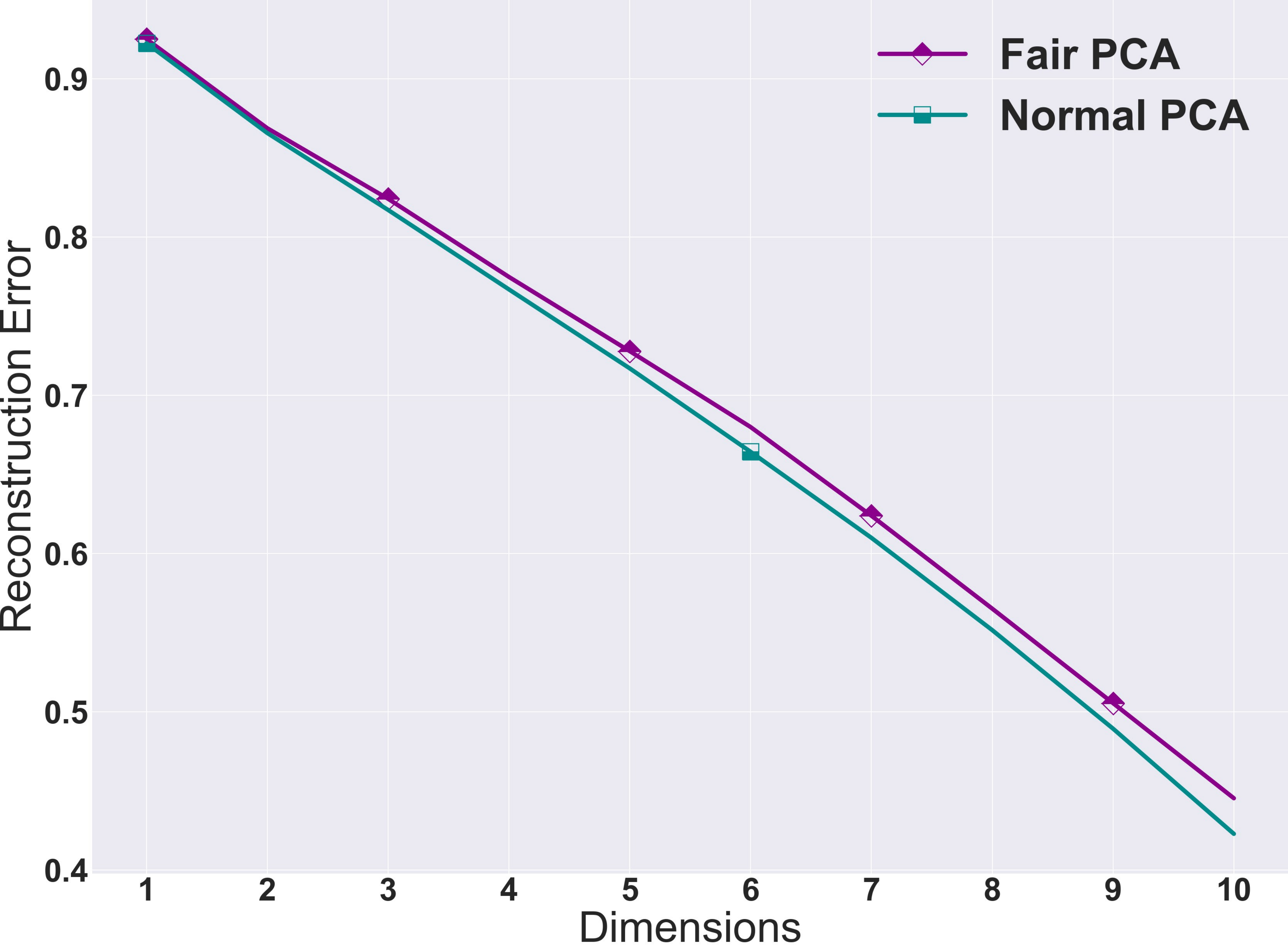}
		\caption{Total Reconstruction Loss}
		\label{fig:multi-total}
	\end{subfigure}
    \hfill
	\begin{subfigure}[b]{0.45\textwidth}
		\centering
		\includegraphics[width=\textwidth]{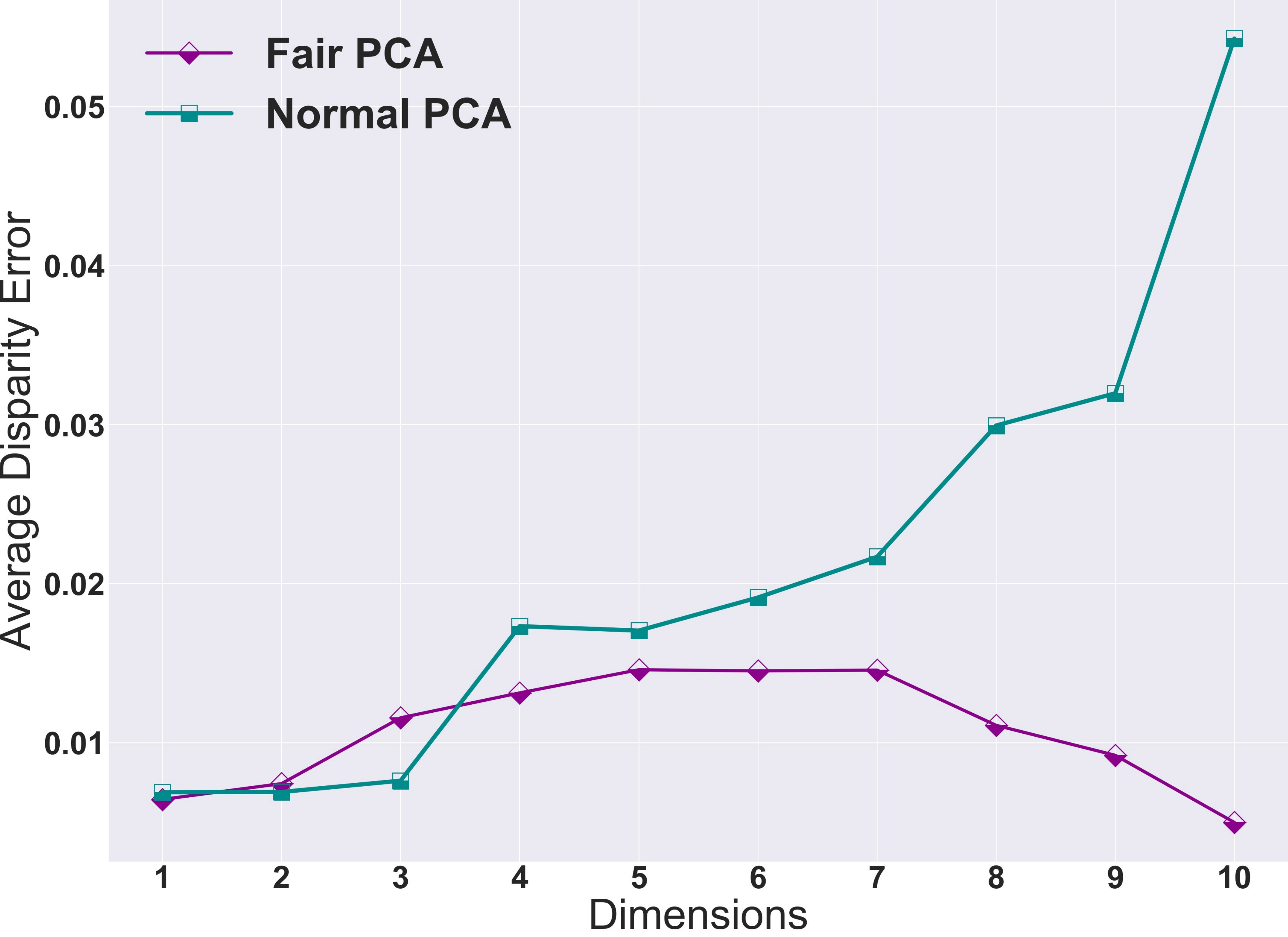}
		\caption{Average Disparity Error}	\label{fig:demulti-average}
	\end{subfigure}
    \caption{Applying normal and fair PCA on the Adult dataset with ``race'' as its multiple group sensitive feature.  Left shows the difference between reconstruction loss of fair and normal PCA, which is infinitesimal. On the other hand, the right plot shows their difference in terms of average disparity error, which is huge and demonstrating the efficacy of fair PCA in addressing fairness even in multiple group sensitive feature cases. }
    \label{fig:multi_both}
\end{figure}

As for the Credit dataset, we also test it on its multiple group sensitive feature, marriage, which has $3$ different groups. The result of reconstruction error on Pareto fair PCA, normal PCA and normal PCA on each group's data individually is depicted in Figure~\ref{fig:multi_re-credit}, where it is clear that fair PCA is very close to each group's PCA (except for the ``other'' group, because the number of samples in that group is too low), while its reconstruction error is very close to that of normal PCA. Also, the disparity error and average disparity error of fair PCA versus normal PCA is shown in Figure~\ref{fig:de-multi_re-credit}, where the superiority of fair PCA is noticeable.

\begin{figure}[th!]
    \centering
    \begin{subfigure}[b]{0.31\textwidth}
		\centering
		\includegraphics[width=\textwidth]{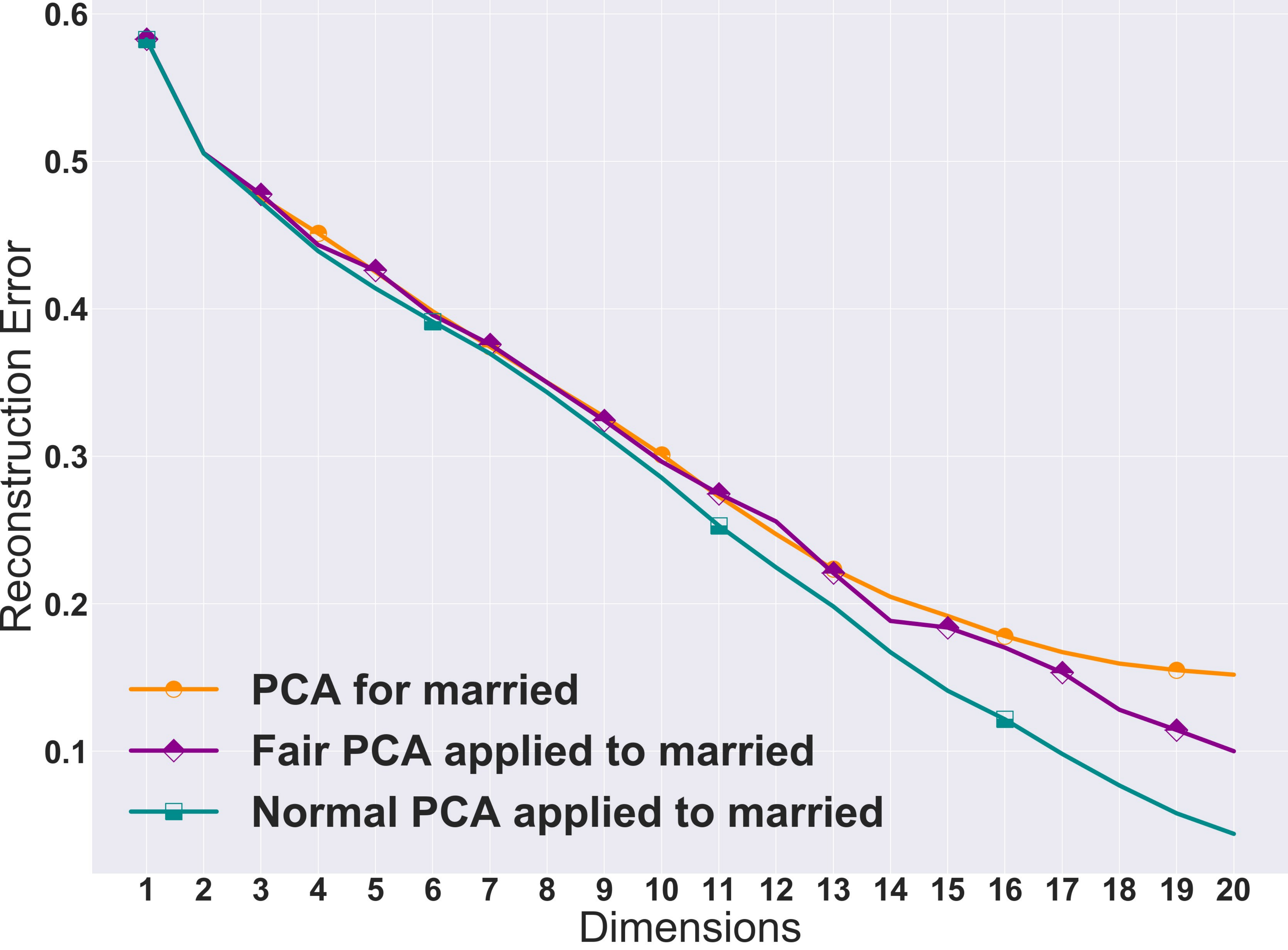}  
		\label{fig:multi-credit-mar}
	\end{subfigure}
	\hfill
	\begin{subfigure}[b]{0.31\textwidth}
		\centering
		\includegraphics[width=\textwidth]{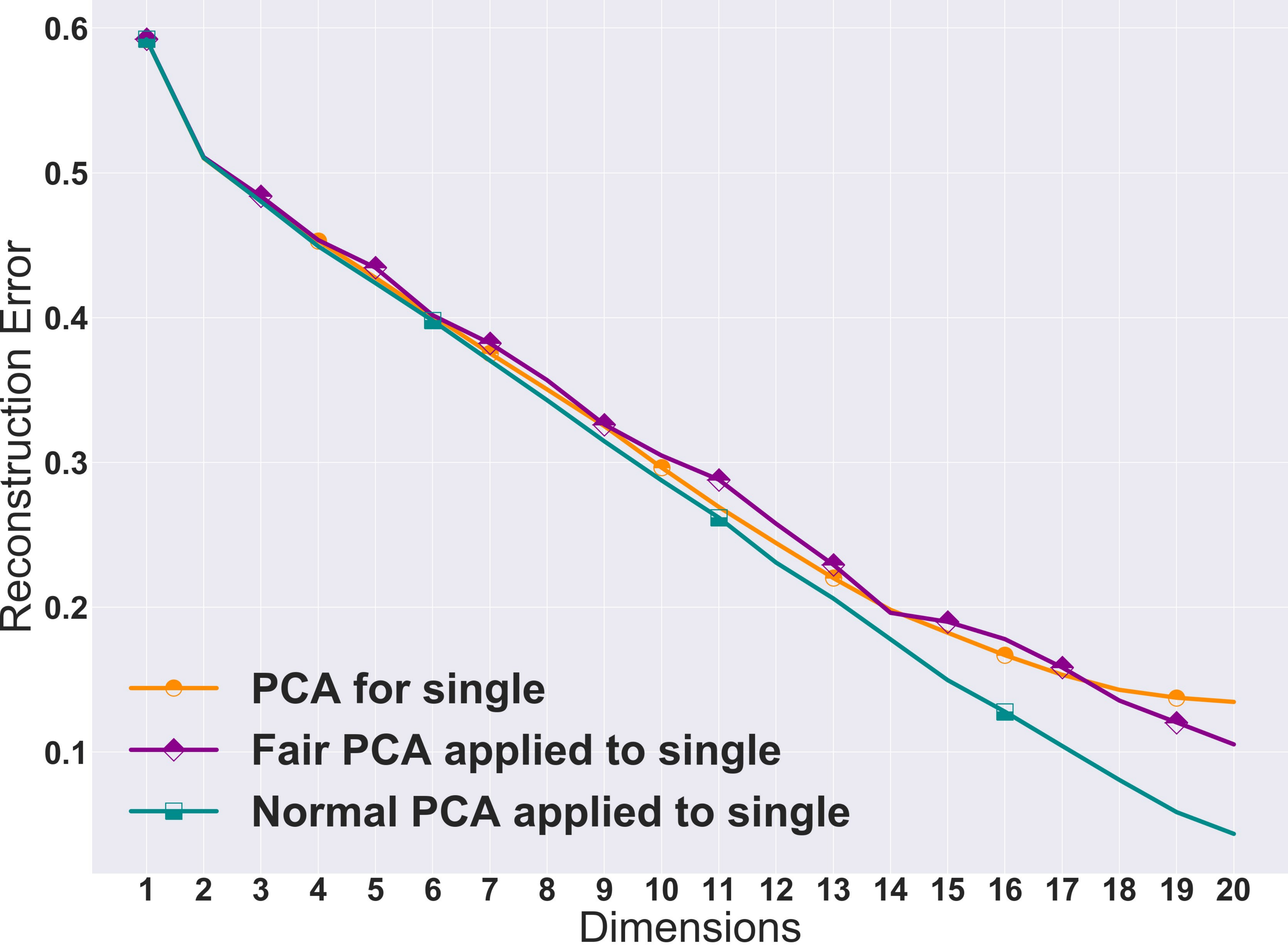}
		\label{fig:multi-credit-sing}
	\end{subfigure}
	\hfill
	\begin{subfigure}[b]{0.31\textwidth}
		\centering
		\includegraphics[width=\textwidth]{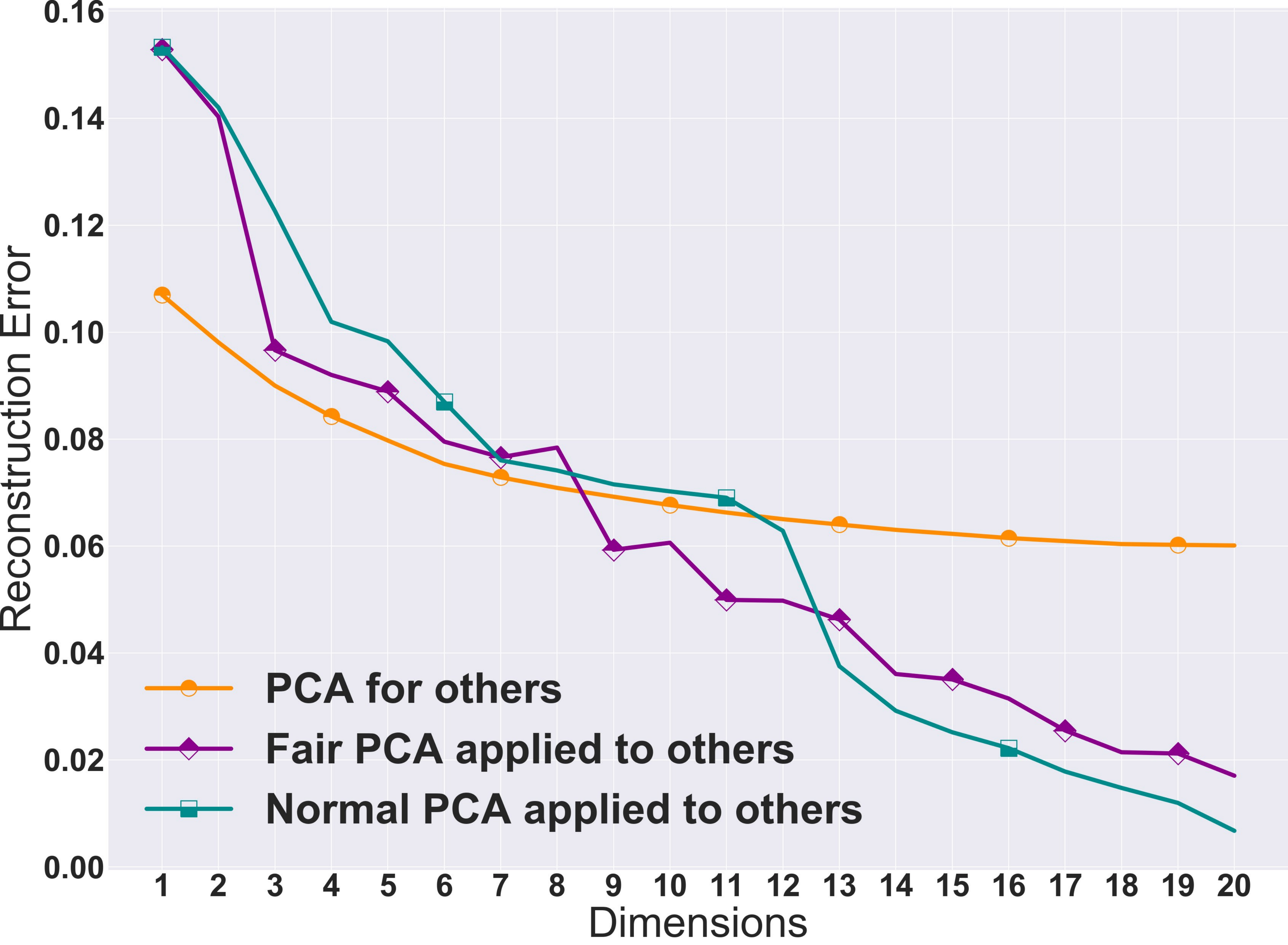}
		\label{fig:multi-credit-other}
	\end{subfigure}
	
	\centering
	\begin{subfigure}[b]{0.31\textwidth}
		\centering
		\includegraphics[width=\textwidth]{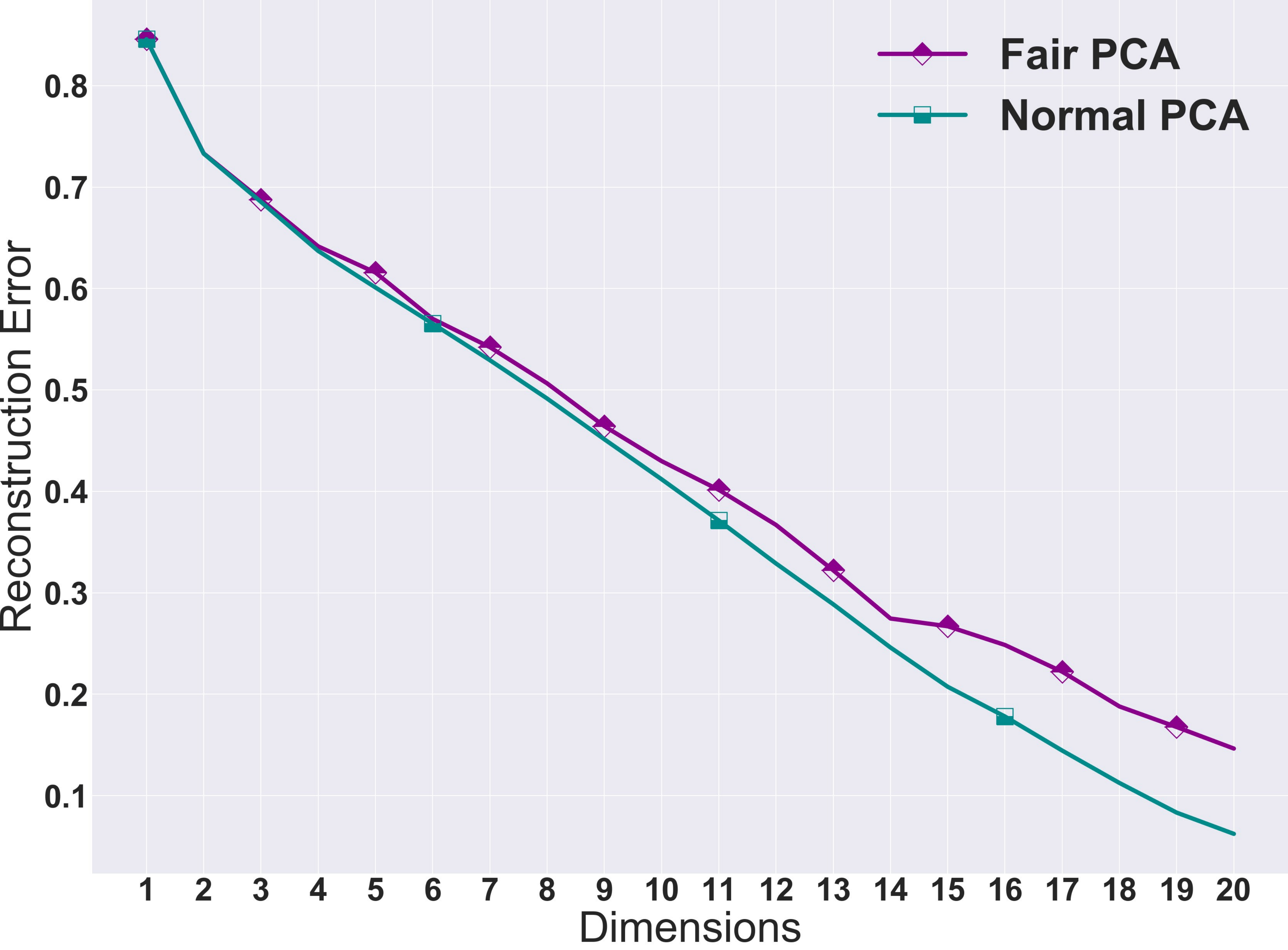}
		\label{fig:multi-credit-total}
	\end{subfigure}
    \caption{Applying normal and fair PCA on the Credit dataset with ``Marriage'' as its sensitive feature. Each plot shows the reconstruction error of the normal PCA (trained on the whole data) applied to each group's data, fair PCA (trained on the whole data) applied to each group's data, and normal PCA trained on each group's data individually. The last figure shows the reconstruction error of normal PCA and fair PCA on this dataset with multiple group sensitive feature of marriage. }
    \label{fig:multi_re-credit}
    \vspace{-0.1cm}
\end{figure}

\begin{figure}[th!]
    \centering
    \begin{subfigure}[b]{0.31\textwidth}
		\centering
		\includegraphics[width=\textwidth]{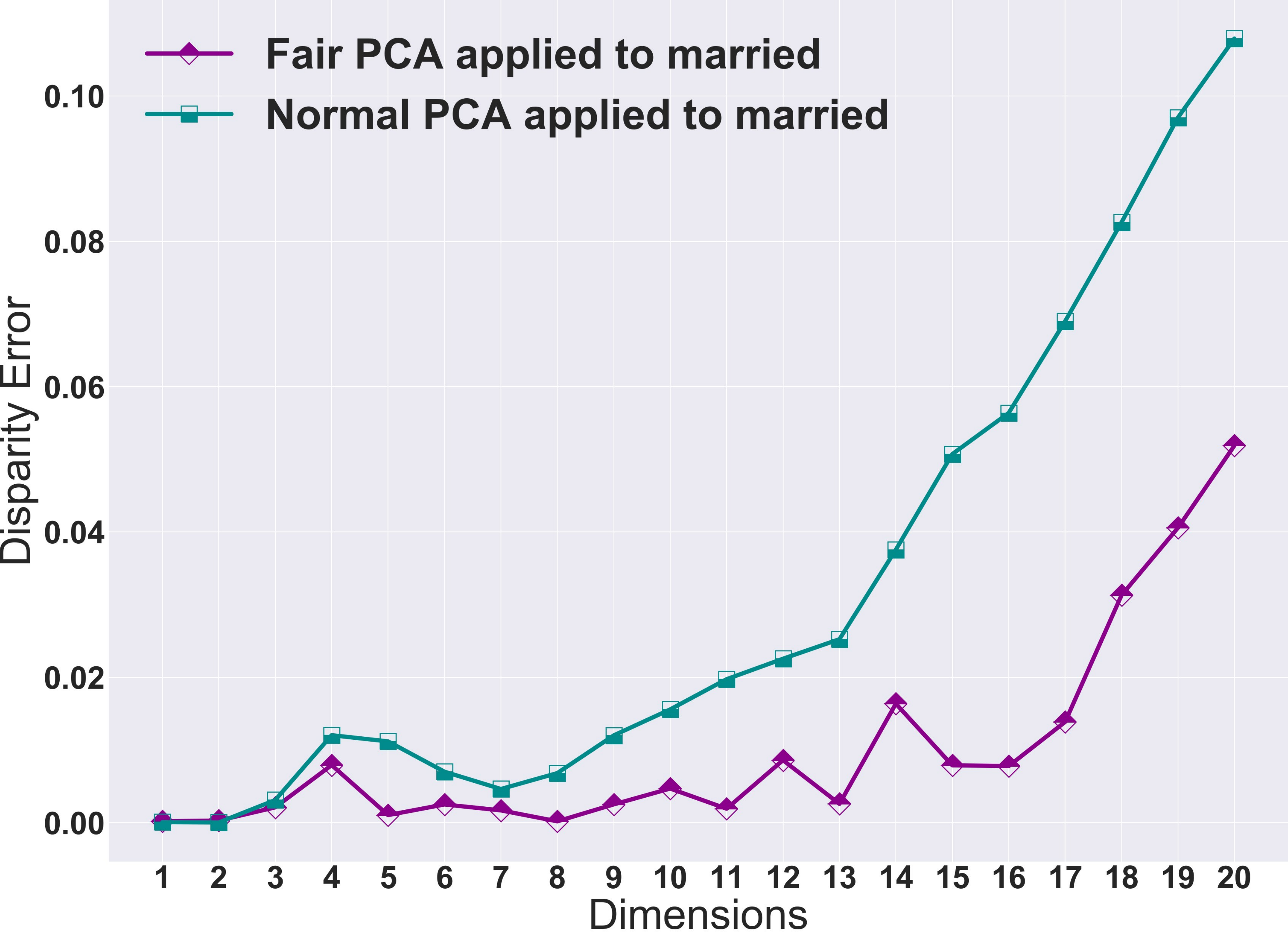}  
		\label{fig:de-multi-credit-mar}
	\end{subfigure}
	\hfill
	\begin{subfigure}[b]{0.31\textwidth}
		\centering
		\includegraphics[width=\textwidth]{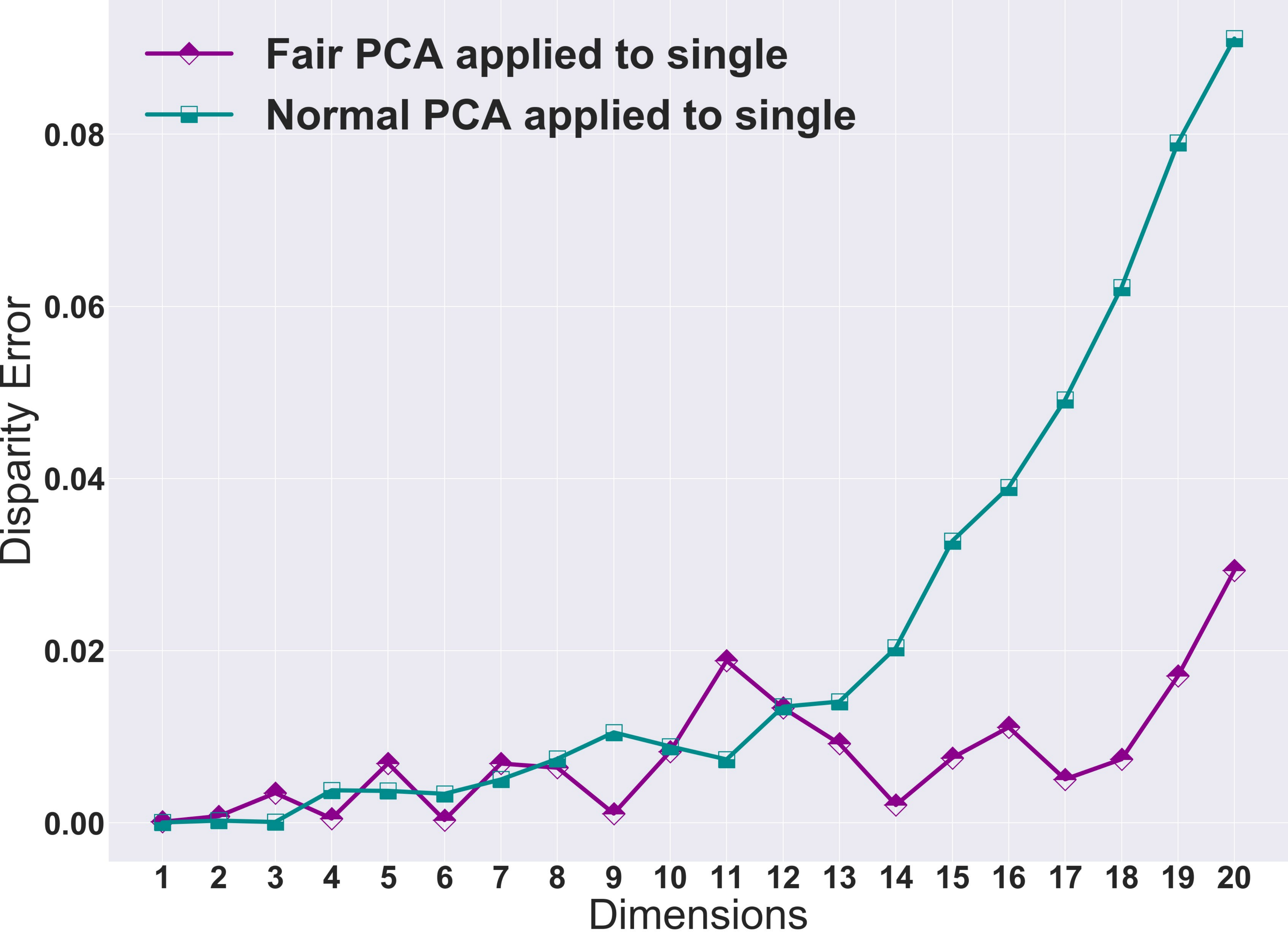}
		\label{fig:de-multi-credit-sing}
	\end{subfigure}
	\hfill
	\begin{subfigure}[b]{0.31\textwidth}
		\centering
		\includegraphics[width=\textwidth]{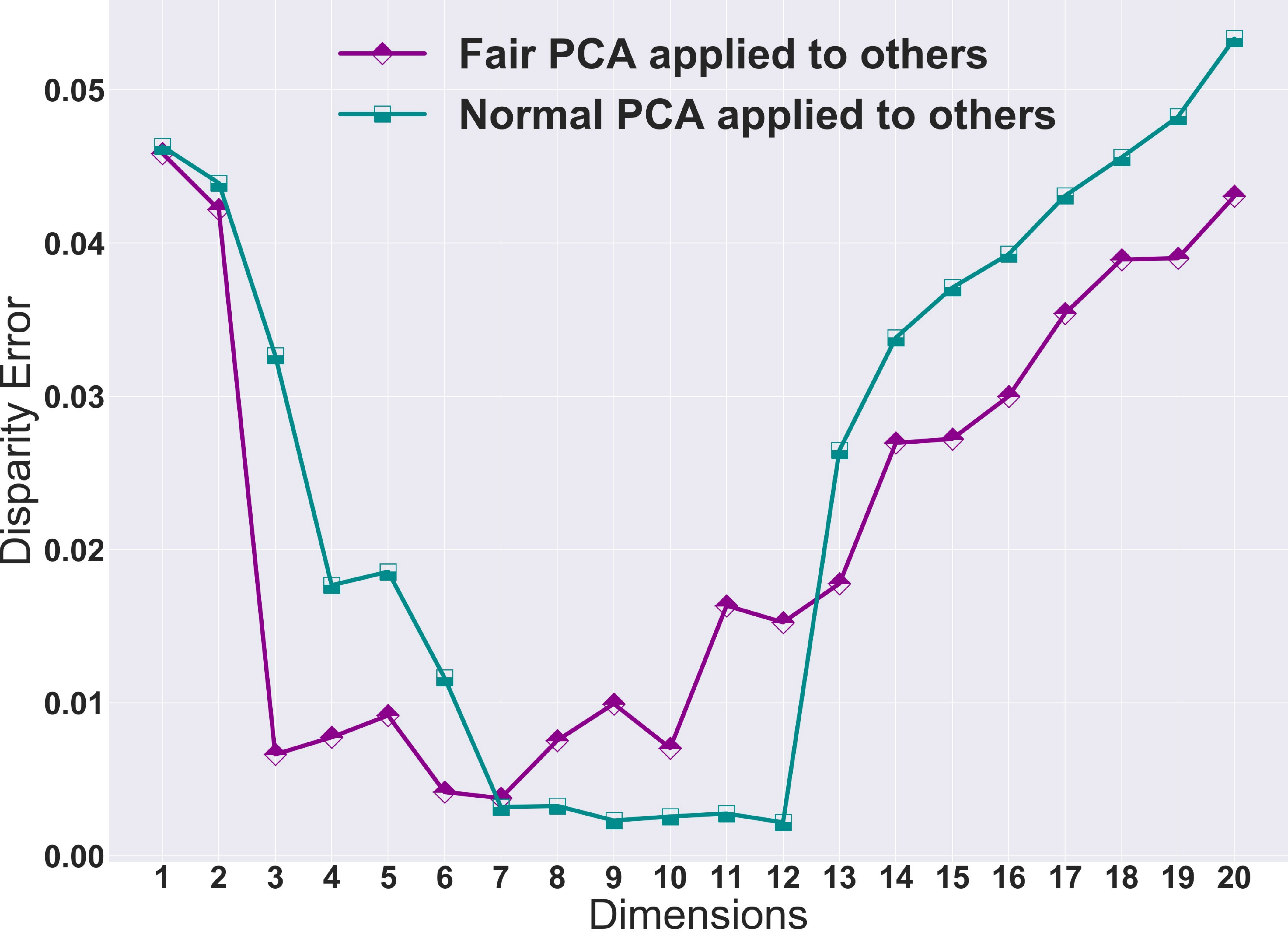}
		\label{fig:de-multi-credit-other}
	\end{subfigure}
	
	\centering
	\begin{subfigure}[b]{0.31\textwidth}
		\centering
		\includegraphics[width=\textwidth]{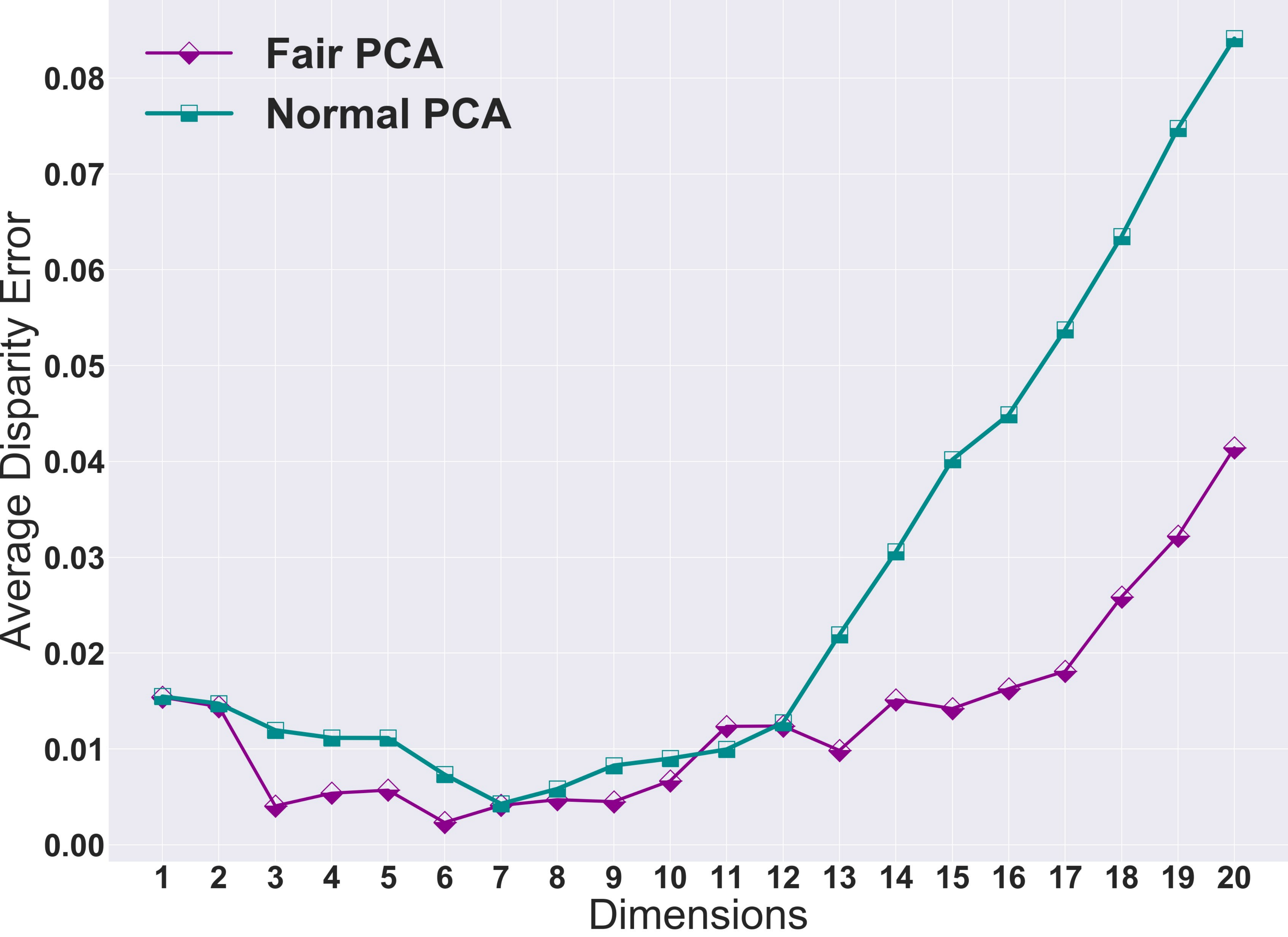}
		\label{fig:demulti-credit-average}
	\end{subfigure}
    \caption{Disparity error of normal and fair PCA trained on the Credit dataset with ``Marriage'' as its sensitive feature. Each plot depicts the disparity errors of different groups with normal and fair PCA. The last figure shows the average of disparity errors across groups.}
    \label{fig:de-multi_re-credit}
\end{figure}

\subsection{Fairness in composition}
Most of the time, when we use a dimension reduction algorithm, it is accompanied by some downstream tasks such as classifiers. Hence, it is important to investigate the effects of our fairness dimension reduction algorithm on those downstream tasks. Here, we empirically examine this effect on a simple classifier. To that end, we use both the Adult and Credit datasets and first reduce the dimension of their feature space to $10$, and then use the new projection to learn a standard linear SVM model. One standard fairness measure in the supervised domain is called Equality of Opportunity~\citep{hardt2016equality}, where the goal is to ensure that the true positive rate among different sensitive features does not differ significantly. For a binary sensitive group~\cite{donini2018empirical} introduced a measure called difference of equality of opportunity, which is $\text{DEO} = | \texttt{TP}_1 - \texttt{TP}_2|$ with $\texttt{TP}_i$ representing true positive rate of the $i$th group in a sensitive feature. This measure shows the gap between the two groups' true positive rates. As can be inferred from Figure~\ref{fig:composition}, applying Pareto fair PCA can boost fairness of the downstream model and dramatically drop the gap between two groups' true positive rates (DEO) with respect to the normal PCA.

\begin{figure}[t!]
    \centering
	\begin{subfigure}[b]{0.32\textwidth}
		\centering
		\includegraphics[width=\textwidth]{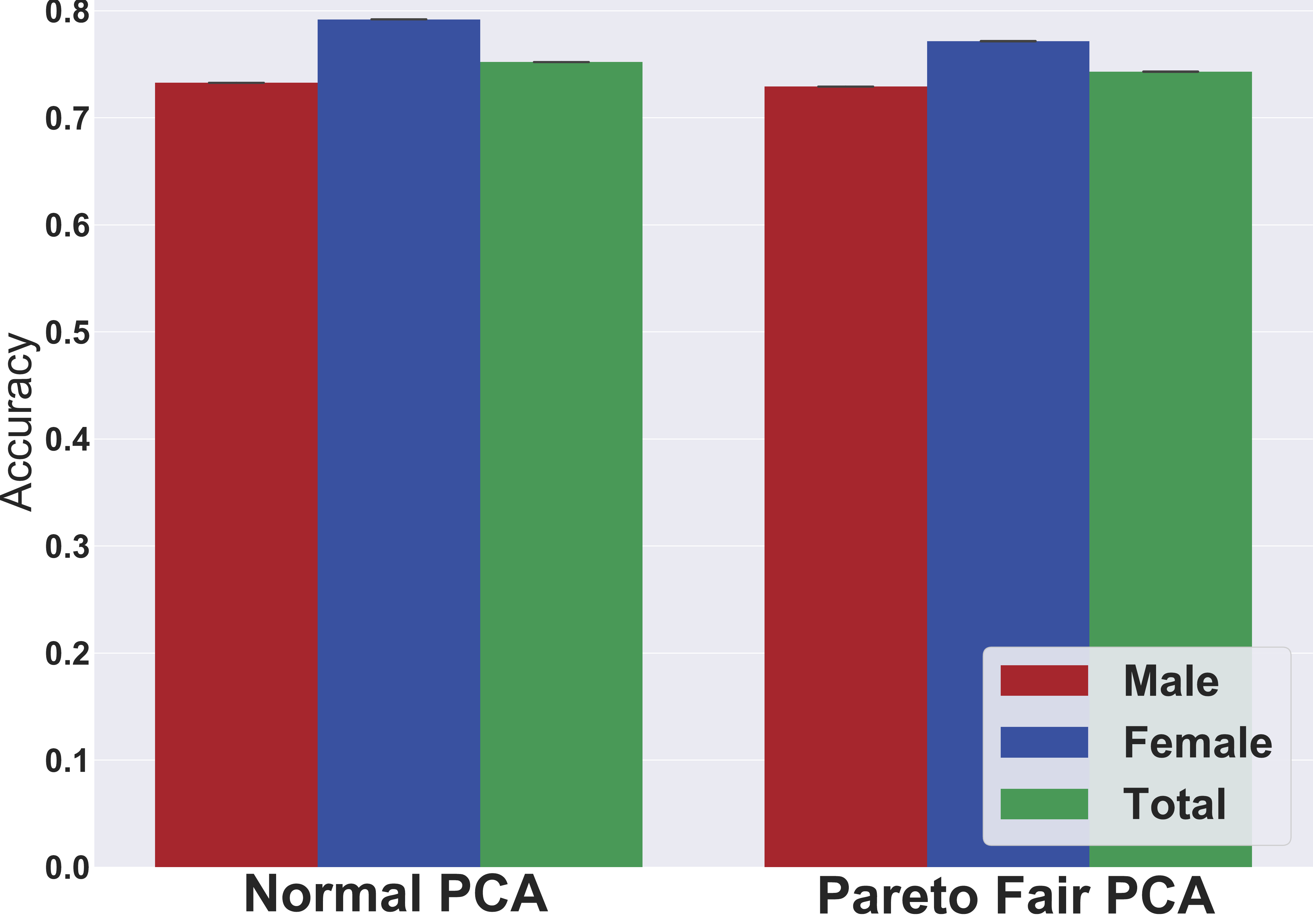}
		\label{fig:compose_adult_acc}
	\end{subfigure}
    \hfill
	\begin{subfigure}[b]{0.32\textwidth}
		\centering
		\includegraphics[width=\textwidth]{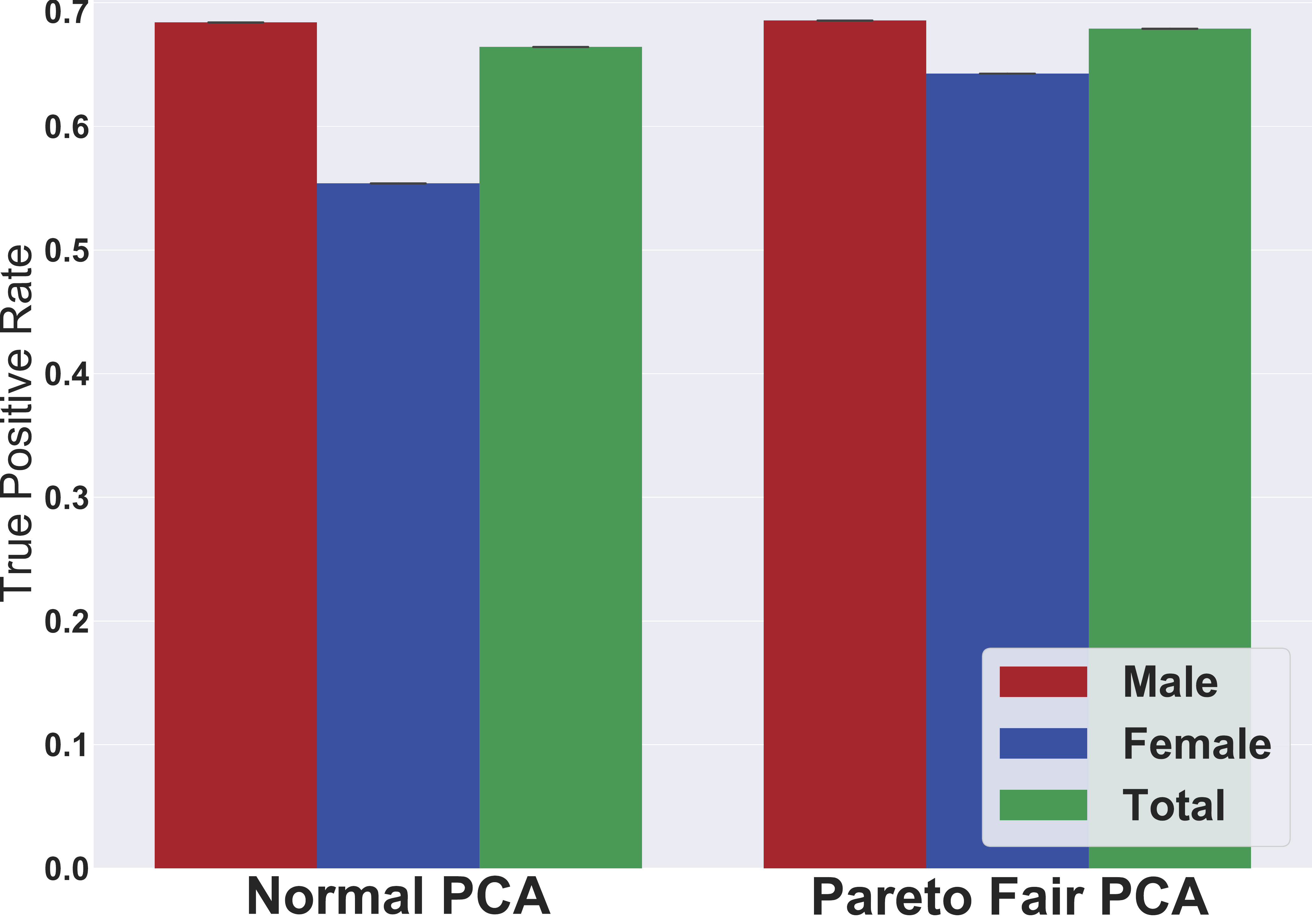}
			\label{fig:compose_adult_tpr}
	\end{subfigure}
	\hfill
	\begin{subfigure}[b]{0.32\textwidth}
		\centering
		\includegraphics[width=\textwidth]{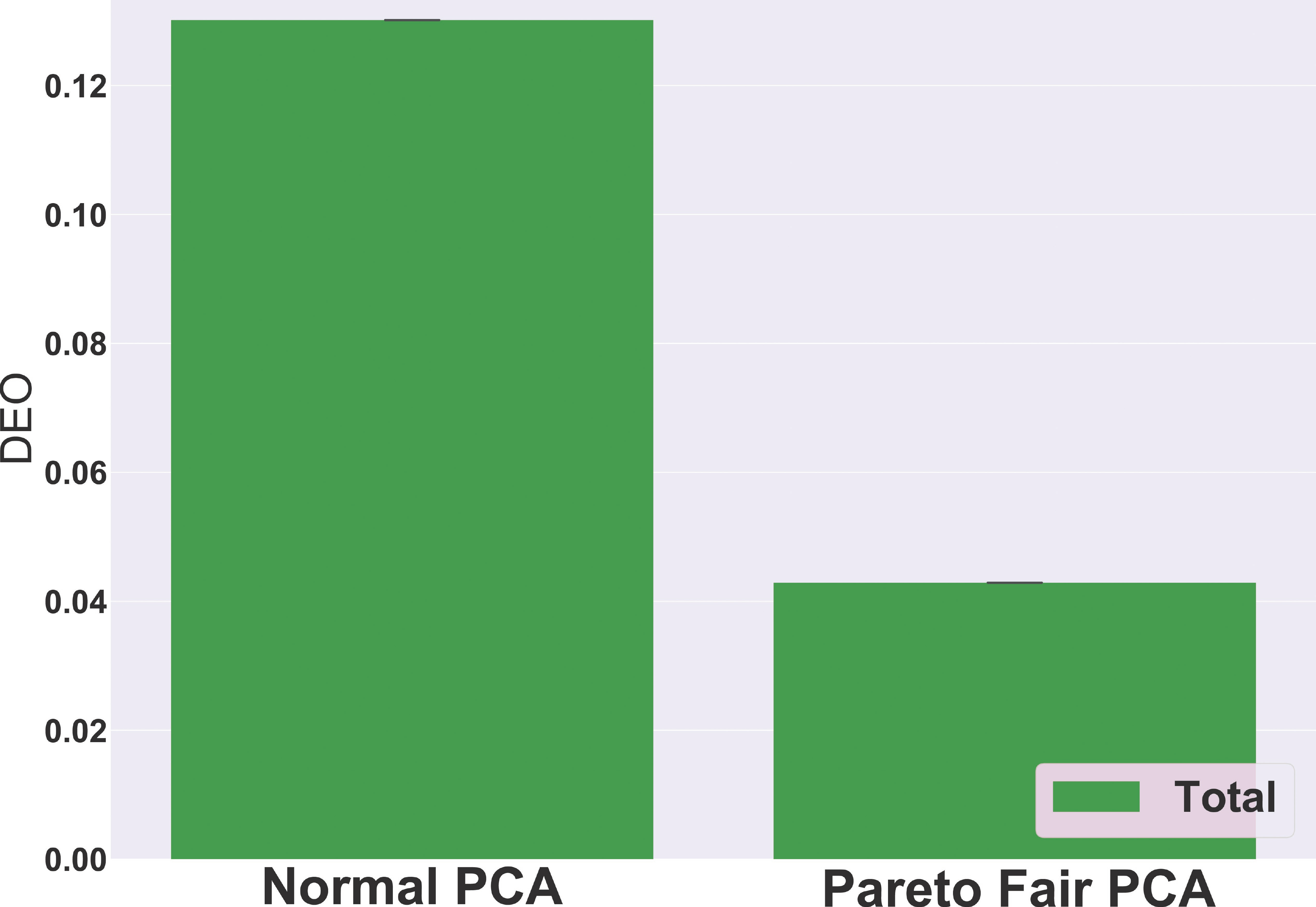}
			\label{fig:compose_adult_deo}
	\end{subfigure}
	
	\setcounter{subfigure}{0}
	\begin{subfigure}[b]{0.32\textwidth}
		\centering
		\includegraphics[width=\textwidth]{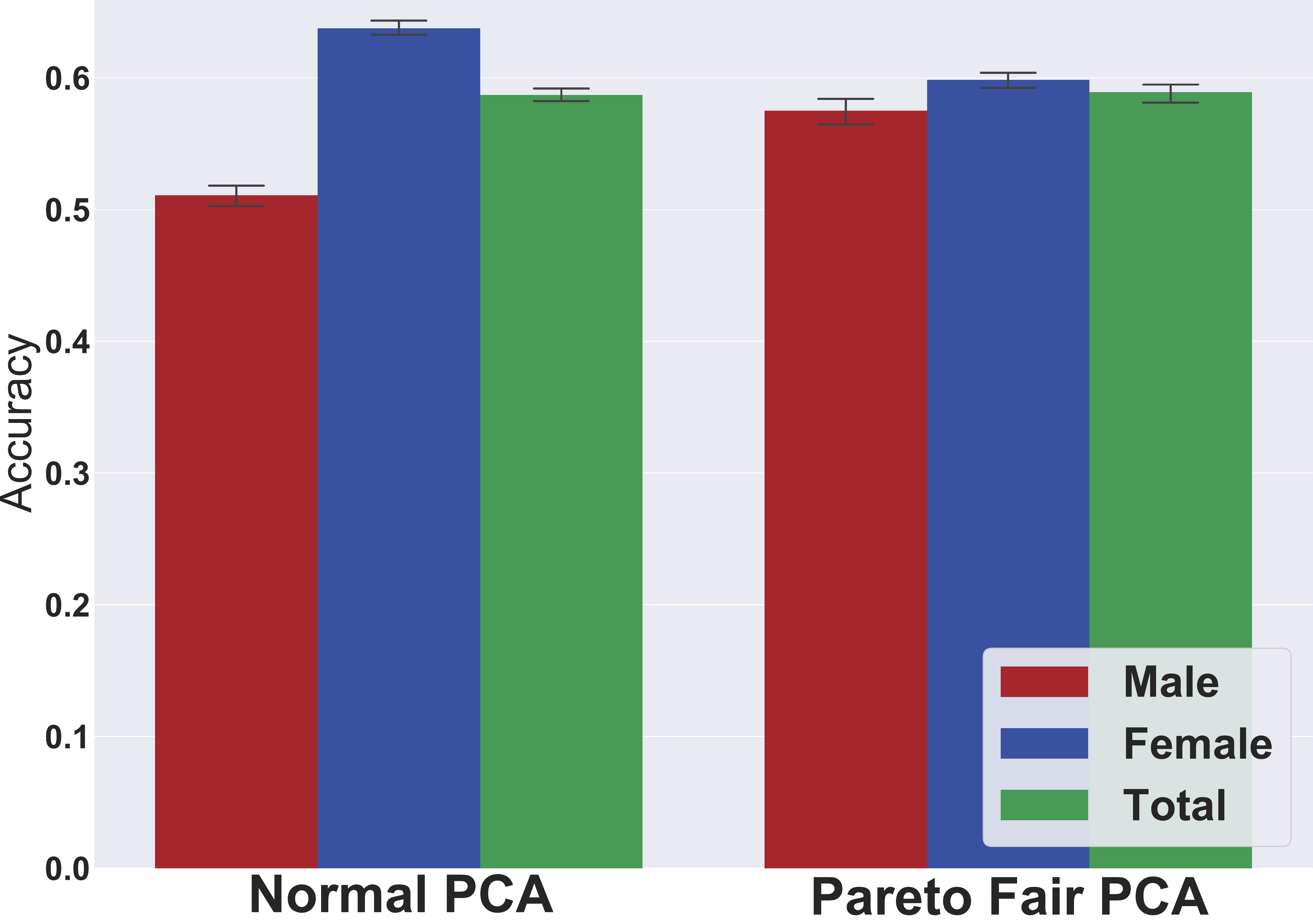}
		\caption{Test Accuracy}
		\label{fig:compose_credit_acc}
	\end{subfigure}
    \hfill
	\begin{subfigure}[b]{0.32\textwidth}
		\centering
		\includegraphics[width=\textwidth]{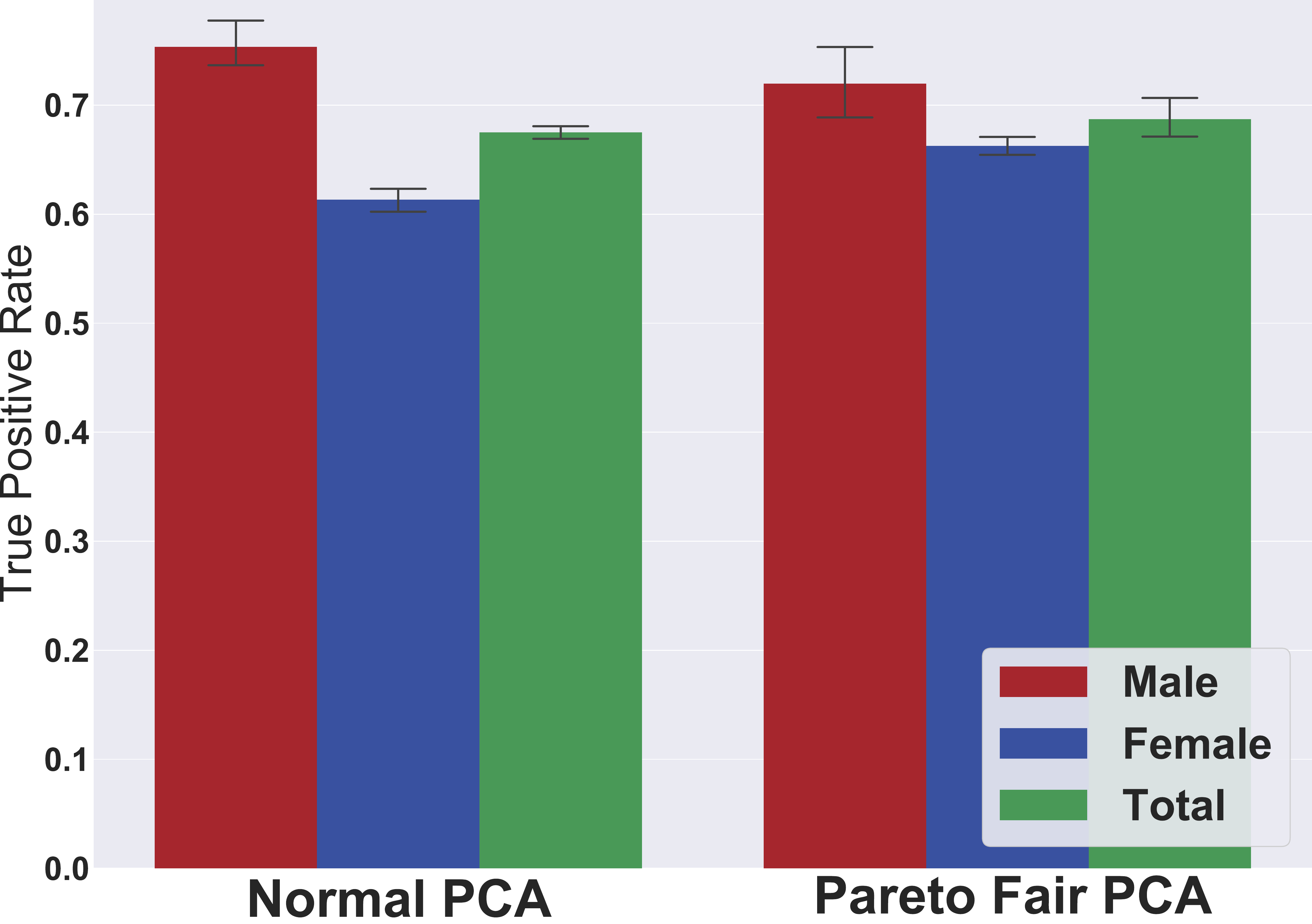}
		\caption{True Positive Rate}	\label{fig:compose_credit_tpr}
	\end{subfigure}
	\hfill
	\begin{subfigure}[b]{0.32\textwidth}
		\centering
		\includegraphics[width=\textwidth]{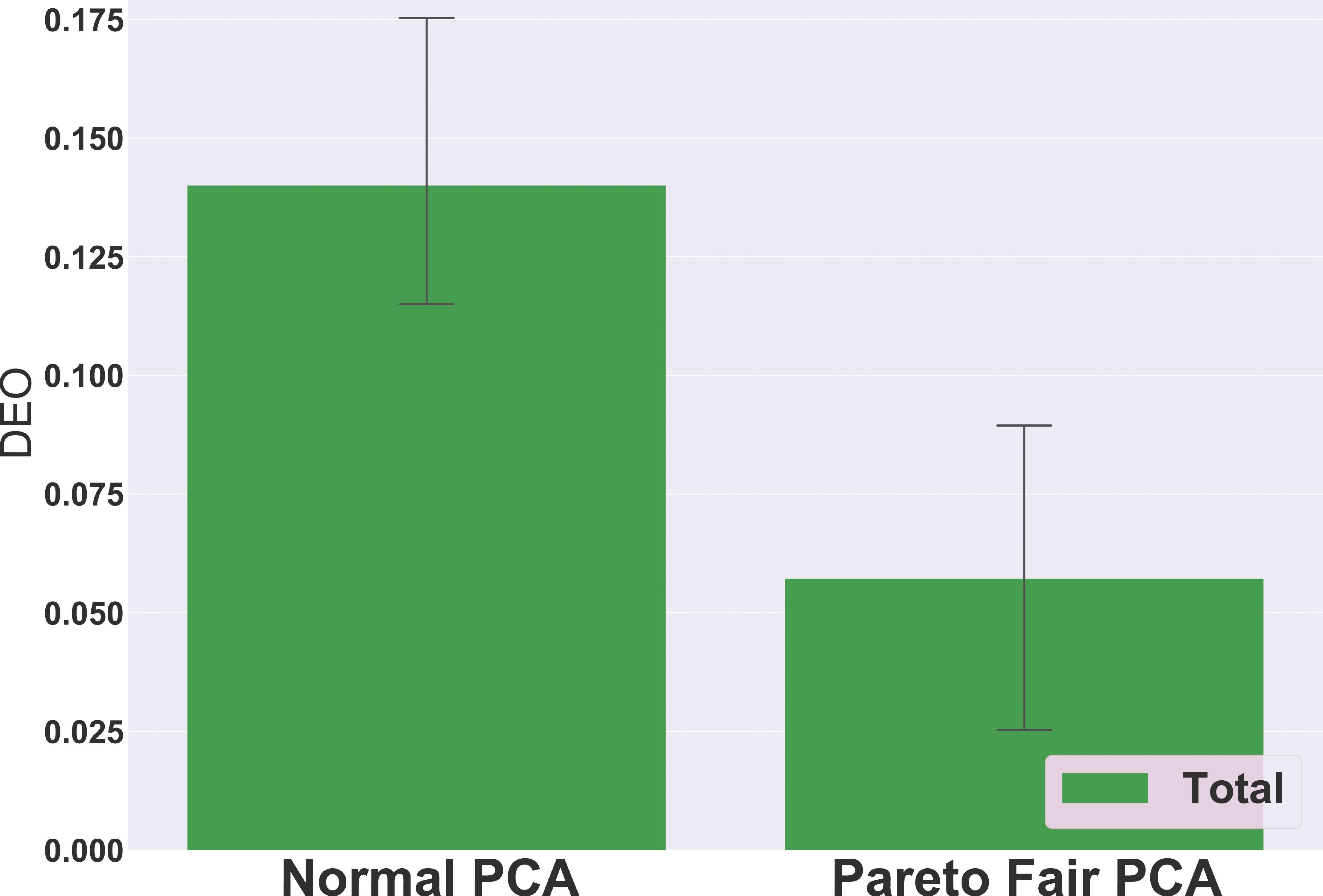}
		\caption{DEO}	\label{fig:compose_credit_deo}
	\end{subfigure}
    \caption{The effect of Pareto fair PCA on a downstream SVM classification task. The first row is using the Adult dataset with gender as its sensitive feature, and the second row is using the Credit dataset with sex as its sensitive feature. In both datasets, we reduce their feature space dimension to $10$ once using normal PCA and once using Pareto fair PCA. Then use the new representation to learn a linear SVM. Column (a) is the accuracy among different groups, (b) is the true positive rate, and (c) is the DEO introduced by~\cite{donini2018empirical} (the lower is better), all on the test dataset. It clearly can be noted that applying Pareto fair PCA can reduce the gap between true positive rates of different groups and enhance the fairness of downstream models. }
    \label{fig:composition}
\end{figure}
\section{Conclusion}\label{sec:con}
In this paper, we cast the fairness problem in dimension reduction algorithms such as PCA as a multi-objective programming. Unlike supervised learning, there is not a clear definition of fairness in unsupervised learning tasks. Thus, we use the notion of balancing between sacrifices and benefits each sensitive group makes or enjoys to define a fairness metric for this problem. These sacrifices or benefits are the consequence of finding the optimal subspace over the whole data rather than using only each protected group's data. Hence, the notion of fairness is to have an equal contribution from each group to the overall reconstruction loss with respect to the reconstruction loss they have on the subspace learned by their own data. This introduces a trade-off between these contributions and overall reconstruction loss.
We propose an efficient multi-objective optimization procedure that can guarantee the convergence to a Pareto stationary point, which has an efficient trade-off between these objectives.
This paper also introduces some interesting problems worthy of future investigations. First, the generalization of the proposed disparity error and pairwise disparity error as fairness metrics in other dimension reduction algorithms and, also, other unsupervised learning tasks. Moreover, it is interesting to investigate the stochastic version of the proposed algorithm and its convergence analysis since finding a descent direction where gradients are noisy might be a challenging task. Also, as noted before, the existing methods, including the one proposed in the present work, require learning a local optimal projection subspace for each group before learning the global fair subspace. One interesting direction is to extend these works to efficiently learn all subspaces together while preserving the fairness of the global subspace. Finally, a thorough theoretical investigation of the composition effects of the proposed fairness measure on downstream tasks such as classification is an interesting open problem.

\newpage

\bibliographystyle{plainnat}
\bibliography{references}

\clearpage

\appendix
\section{Proof of Theorem~\ref{thrm:existence}}
\label{sec:thrm:existence}

\begin{proof}
Consider the following constrained optimization problem:
\begin{equation}
\begin{aligned}
& \sup
& & \sum_{i\in \left[m\right]} \epsilon_i \\
& \text{subject to}
& & f_i\left(\bm{U}\right) +  \epsilon_i = f_i\left(\widetilde{\bm{U}}\right), \; i \in \left[m\right], \\
& & & \epsilon_i \geq 0, \; i \in \left[m\right], \\
\label{eq:const-opt-exist}
\end{aligned}
\vspace{-0.5cm}
\end{equation}
where $\widetilde{\bm{U}}$ is any feasible subspace. By assuming that $f_i(.)$ is convex for $i \in \left[m\right]$, if there is no finite maximum value for this optimization, then the set of proper Pareto optimal solutions is empty.
The main immediate implication of this theorem is that if the objectives are bounded, then a Pareto optimal solution exists for this optimization problem. More specifically, if the solution of this optimization is the objective value of zero, then the $\widetilde{\bm{U}}$ is a Pareto optimal solution.
To prove this theorem, we consider $\bm{U}^*$ to be a proper Pareto optimal solution to the problem (\ref{eq:const-opt-exist}), then there exists a vector $\boldsymbol{\lambda} \in \mathbb{R}^m_{+}$, such that the point $\bm{U}^*$ is a Pareto optimal solution to the problem:
\begin{equation}
    \underset{\bm{U}}{\arg\min}\sum_{i\in \left[m\right]} \lambda_if_i\left(\bm{U}\right)
    \label{eq:ex0}
\end{equation}
Then, from the Pareto optimality we have for every feasible $\bm{U}$:
\begin{equation}
    \sum_{i\in \left[m\right]} \lambda_i\left[f_i\left(\bm{U}\right) - f_i\left(\bm{U}^*\right)\right] \geq 0
    \label{eq:ex1}
\end{equation}
By setting $\bm{U} = \widetilde{\bm{U}}$, we can write:
\begin{equation}
    \sum_{i\in \left[m\right]} \lambda_i\left[f_i\left(\widetilde{\bm{U}}\right) - f_i\left(\bm{U}^*\right)\right] = M^\dagger \geq 0
    \label{eq:ex2}
\end{equation}
Also, from the optimization problem (\ref{eq:const-opt-exist}) since there is not a finite maximum objective value available, for every $\widehat{M} \geq 0$ we can find a $\widehat{\bm{U}}$ such that:
\begin{equation}
    \sum_{i\in \left[m\right]} \left[f_i\left(\widetilde{\bm{U}}\right) - f_i\left(\widehat{\bm{U}}\right)\right] \geq \widehat{M}
    \label{eq:ex3}
\end{equation}
Then, if we set $\lambda_{\text{min}} = \min\left\{\lambda_1,\ldots,\lambda_m\right\}$, we have:
\begin{align}
    \lambda_{\text{min}}\widehat{M} &\leq  \lambda_{\text{min}}  \sum_{i\in \left[m\right]} \left[f_i\left(\widetilde{\bm{U}}\right) - f_i\left(\widehat{\bm{U}}\right)\right] \nonumber\\
    & =    \sum_{i\in \left[m\right]} \lambda_{\text{min}} \left[f_i\left(\widetilde{\bm{U}}\right) - f_i\left(\widehat{\bm{U}}\right)\right] \nonumber\\
    & \leq \sum_{i\in \left[m\right]} \lambda_i \left[f_i\left(\widetilde{\bm{U}}\right) - f_i\left(\widehat{\bm{U}}\right)\right]
\end{align}
If the $\widehat{\bm{U}}$ is chosen to satisfy $ \lambda_{\text{min}}\widehat{M} = M^\dagger$, then we have:
\begin{align}
    \sum_{i\in \left[m\right]} \lambda_i\left[f_i\left(\widetilde{\bm{U}}\right) - f_i\left(\bm{U}^*\right)\right] &\leq \sum_{i\in \left[m\right]} \lambda_i \left[f_i\left(\widetilde{\bm{U}}\right) - f_i\left(\widehat{\bm{U}}\right)\right] \nonumber\\
    \sum_{i\in \left[m\right]} \lambda_i f_i\left(\widehat{\bm{U}}\right) &\leq \sum_{i\in \left[m\right]} \lambda_i f_i\left(\bm{U}^*\right) ,
\end{align}
which contradicts the assumption of Pareto optimality of $\bm{U}^*$, and hence, Pareto optimal set is empty.
\end{proof}
\section{Proof of Lemma~\ref{lemma:descent}}\label{app:lemma}
\begin{proof}
The proof is straightforward and directly follows from   KKT optimally conditions for  problem~(\ref{eq:qop_constrained}), however, we show the derivation  here for completeness.  

First, we note that the minmax optimization problem introduced in (\ref{eq:descent}) to find the descent direction $\mathrm{\mathbf{D}}_t$, can be rewritten as the following equivalent  constrained optimization problem:
\begin{equation}
\begin{aligned}
  (\mathrm{\mathbf{D}}_t, \epsilon_t) =  \arg&\underset{\mathrm{\mathbf{D}} \in \mathbb{R}^{d\times r},\epsilon \in \mathbb{R}_{+}}{\min} \;\; \epsilon + \frac{1}{2}\left\| \mathrm{\mathbf{D}}\right\|_{\text{F}}^2, \\
    &\textrm{s.t.} \; \trace\left(\mathrm{\mathbf{D}}^\top\mathrm{\mathbf{G}}_i^{(t)}\right) \leq \epsilon, \;\; \forall \; 1 \leq i \leq m.
\end{aligned}\label{eq:qop_constrained}
\end{equation}
Forming  the  Lagrangian  of the constrained problem as follows
 \begin{equation}
     \mathcal{L}\left( \mathrm{\mathbf{D}}, \epsilon; \lambda_i \right) = \frac{1}{2}\left\| \mathrm{\mathbf{D}}\right\|_{\text{F}}^2 + \epsilon + \sum_{i=1}^m \lambda_i \left(\trace\left(\mathrm{\mathbf{D}}^\top\mathrm{\mathbf{G}}_i^{(t)}\right) - \epsilon  \right),
 \end{equation}
 and writing  the KKT conditions gives:
\begin{align}\label{eq:kkt1}
        \frac{\partial \mathcal{L}}{\partial \mathrm{\mathbf{D}} } =& \mathrm{\mathbf{D}} + \sum_{i=1}^m \lambda_i^{(t)} \mathrm{\mathbf{G}}_i^{(t)} = 0& \\\label{eq:kkt2}
        \frac{\partial \mathcal{L}}{\partial \epsilon } =& 1 - \sum_{i=1}^m \lambda_i = 0& \\\label{eq:kkt3}
        \frac{\partial \mathcal{L}}{\partial \lambda_i } =& \trace\left(\mathrm{\mathbf{D}}^\top\mathrm{\mathbf{G}}_i^{(t)}\right) - \epsilon = 0&  \forall\; 1 \leq i \leq m \\\label{eq:kkt4}
        &\lambda_i \left( \trace\left(\mathrm{\mathbf{D}}^\top\mathrm{\mathbf{G}}_i^{(t)}\right) - \epsilon  \right) = 0&  \forall\; 1 \leq i \leq m \\\label{eq:kkt5}
        &\lambda_i \geq 0&  \forall\; 1 \leq i \leq m
\end{align}
From~(\ref{eq:kkt1}) we have the following that holds for the descent direction:
\begin{equation}\label{eq:descent-def}
    \mathrm{\mathbf{D}}_t = -\sum_{i=1}^m \lambda_i^{(t)} \mathrm{\mathbf{G}}_i^{(t)},
\end{equation}
where $\boldsymbol{\lambda}^{(t)}=\left[\lambda_1^{(t)},\ldots,\lambda_m^{(t)}\right]^\top$ belongs to $\Delta_m$--  the $m$-dimensional simplex. By plugging  these conditions back to the main problem,  the dual problem can be simplified as:
\begin{equation}\label{eq:qop2}
    \hat{\boldsymbol{\lambda}}^{(t)} = \arg \underset{\boldsymbol{\lambda} \in \Delta_m}{\min} \frac{1}{2} \left\| \sum_{i=1}^m \lambda_i\mathrm{\mathbf{G}}_i^{(t)} \right\|_{\text{F}}^2.
\end{equation}
By solving the dual problem, which is a quadratic programming and using~(\ref{eq:descent-def}) we can find the descent direction from optimal dual variables.

Next, we need to show that the obtained direction  is either $\bm{0}$ or a descent direction to all objectives. If the point $\bm{U}_t$ is a Pareto stationary point, then it means that we cannot find a direction that can decrease all the objectives, without increasing one. Hence, there is no such a $\mathrm{\mathbf{D}}$ that $\trace\left( \mathrm{\mathbf{D}}^\top\mathrm{\mathbf{G}}_i^{(t)}\right) \leq 0$ for all $1 \leq i \leq m$, unless $\mathrm{\mathbf{D}} = \bm{0}$. For points that are not Pareto stationary, consider the following quadratic optimization for every $1 \leq j \leq m$:
\begin{equation}\label{eq:qop3}
    \arg\underset{\beta\in[0,1]}{\min} \frac{1}{2}\left\| (1-\beta)\mathrm{\mathbf{G}}_j^{(t)} - \beta \mathrm{\mathbf{D}}_t\right\|_{\text{F}}^2.
\end{equation}
We can see that this optimization problem is equivalent to the optimization problem in (\ref{eq:qop2}), with $\lambda_i = \beta\hat{\lambda}_i$ for $1\leq i \leq m, \; i\neq j$, and $\lambda_j = 1 - \beta(1-\hat{\lambda}_j)$. This means that the optimum of the quadratic optimization in (\ref{eq:qop3}) happens at $\beta=1$. Then by using the first order optimally condition at optimum point we get:
\begin{equation}\label{eq:bound_trace}
\begin{aligned}
    2\left\| \mathrm{\mathbf{G}}_j^{(t)} +  \mathrm{\mathbf{D}}_t \right\|_{\text{F}}^2 - 2 \trace\left( \left( \mathrm{\mathbf{G}}_j^{(t)} \right)^\top \left(\mathrm{\mathbf{G}}_j^{(t)} +  \mathrm{\mathbf{D}}_t\right)\right) \leq 0\\
    2\left\| \mathrm{\mathbf{G}}_j^{(t)} +  \mathrm{\mathbf{D}}_t \right\|_{\text{F}}^2 - 2 \trace\left( \left( \mathrm{\mathbf{G}}_j^{(t)} 
    +\mathrm{\mathbf{D}}_t - \mathrm{\mathbf{D}}_t\right)^\top \left(\mathrm{\mathbf{G}}_j^{(t)} +  \mathrm{\mathbf{D}}_t\right)\right) \leq 0\\
    \trace\left(\mathrm{\mathbf{D}}_t^\top \left(  \mathrm{\mathbf{G}}_j^{(t)} +  \mathrm{\mathbf{D}}_t\right) \right) \leq 0\\
     \trace\left(\mathrm{\mathbf{D}}_t^\top \mathrm{\mathbf{G}}_j^{(t)}\right) \leq -\left\| \mathrm{\mathbf{D}}_t \right\|_{\text{F}}^2
     \end{aligned}
\end{equation}
which clearly shows that $\mathrm{\mathbf{D}}_t$ is a descent direction for all objectives.
\end{proof}

\section{Proof of Theorem~\ref{theorm:convex}}\label{app:convex}
To prove the Theorems~\ref{theorm:convex} and~\ref{thm:converg}, we first need to show that by properly choosing the regularization parameter $\alpha$ our objectives are smooth. Recall that, our goal is to solve the following multi-objective optimization problem with non-convex components:
\begin{equation}
\label{eqn-g}
    \bm{\mathrm{f}}(\bm{U}) = \left[f_1(\bm{U}), \ldots, f_m(\bm{U})\right]
\end{equation} 
where $ m = 1 + {k \choose 2}$ with $k$ being the number of groups in the sensitive feature.  Also, recall that in the case of fair PCA, we have $f_1(\bm{U}) =   -\frac{1}{2}\trace\left( \bm{U}^\top\bm{X}^\top\bm{X}\bm{U}\right)$ is the overall reconstruction loss, and $f_i(\bm{U}), i = 2, 3, \ldots, m$ are disparity errors for  pair of groups. In what follows we use $\|\cdot\|$ and $\|\cdot\|_{\text{F}}$ to denote the spectral and Frobenius norms of a matrix, respectively.

To prove the theorem,  we first show that all the the individual objective functions are smooth with bounded gradient (i.e., $\bm{\mathrm{f}}(\cdot)$ is component-wise smooth), conditioned that the regularization parameter $\alpha$ satisfies $\alpha \geq \underset{i,j \in [k]}{\max} \gamma_d\left(\bm{X}_i^\top\bm{X}_i \right) - \gamma_1\left(\bm{X}_j^\top\bm{X}_j\right)$ (recall that $\gamma_d(\cdot)$ is the smallest eigenvalue of input PSD matrix). To this end, we follow   the definition of the smooth functions, i.e., $\|\nabla f(\bm{U}) - \nabla f(\bm{V})\|_{\text{F}} \leq L \|\bm{U} - \bm{V}\|_{\text{F}}^2$. 

In particular, for $f_1(\bm{U})$ we have:
\begin{equation*}
    \begin{aligned}
    \|\nabla f_1(\bm{U}) - \nabla f_1(\bm{V}) \|_{\text{F}} &= \| \bm{X}^{\top}\bm{X}\bm{U} - \bm{X}^{\top}\bm{X}\bm{V}\|_{\text{F}} \\
    & \stackrel{\text{\ding{192}}}{\leq} \| \bm{X}^{\top}\bm{X}\| \|\bm{U} - \bm{V}\|_{\text{F}} \\
    & \stackrel{\text{\ding{193}}}{\leq} \gamma_{\max}(\bm{X}^{\top}\bm{X}) \|\bm{U} - \bm{V}\|_{\text{F}},
    \end{aligned}
\end{equation*}
where  the first inequality \ding{192} follows from the fact that for any two matrices $\bm{A}$ and $\bm{B}$ it holds that $\|\bm{A} \bm{B}\|_{\text{F}} \leq \|\bm{A}\| \|\bm{B}\|_{\text{F}}$, and \ding{193} follows from the definition of spectral norm. The above inequality indicates that the objective corresponding to the overall reconstruction error is smooth with parameter $\gamma_{\max}(\bm{X}^{\top}\bm{X})$.

To show the smoothness of disparity errors,  for simplicity, we only focus on one of the objectives between a single pair of sensitive features, say $s_i, s_j$,  as the argument easily generalizes to other objectives/pairs due to symmetry. We also  drop the subscript from function and use  $f(\bm{U})$ to denote the regularized disparity error between groups $s_i$ and $s_j$ defined as
\begin{equation}\label{eq:objective-smoothness}
\begin{aligned}
      f(\bm{U}) &=  \mathcal{E}_i\left(\bm{U}\right) - \mathcal{E}_j\left(\bm{U}\right) + \frac{\alpha}{2} \| \bm{U}\|_{\text{F}}^2 \; \\
      &= \mathcal{L}_i(\bm{U}) - \mathcal{L}_i(\bm{U}_i^*) - \mathcal{L}_j(\bm{U}) + \mathcal{L}_j(\bm{U}_j^*) +  \frac{\alpha}{2} \| \bm{U}\|_{\text{F}}^2 \\
      & = -\frac{1}{2}\trace\left( \bm{U}^\top\bm{X}_i^\top\bm{X}_i\bm{U}\right) + \frac{1}{2}\trace\left( \bm{U}^\top\bm{X}_j^\top\bm{X}_j\bm{U}\right)
      + \frac{\alpha}{2} \| \bm{U}\|_{\text{F}}^2 \\
      & \quad + \frac{1}{2}\trace\left( {\bm{U}_i^*}^\top\bm{X}_i^\top\bm{X}_i\bm{U}_i^*\right) -\frac{1}{2}\trace\left( {\bm{U}_j^*}^\top\bm{X}_j^\top\bm{X}_j\bm{U}_j^*\right)
           \end{aligned}
\end{equation}

Following the definition of smoothness, we have
\begin{equation*}
    \begin{aligned}
    & \|\nabla f(\bm{U}) - \nabla f(\bm{V})\|_{\text{F}} \\
    &= \|\bm{X}_i^{\top}\bm{X}_i \bm{U} - \bm{X}_j^{\top}\bm{X}_j \bm{U} + \alpha\bm{U} - \bm{X}_i^{\top}\bm{X}_i \bm{V} + \bm{X}_j^{\top}\bm{X}_j \bm{V} - \alpha\bm{V} \|_{\text{F}} \\
    &= \left\| \left(\bm{X}_i^{\top}\bm{X}_i - \bm{X}_j^{\top}\bm{X}_j + \alpha \bm{I} \right) \left(\bm{U} - \bm{V}\right)\right\|_{\text{F}}\\
    &\leq \| \bm{X}_i^{\top}\bm{X}_i - \bm{X}_j^{\top}\bm{X}_j + \alpha \bm{I} \|_{2} \|\bm{U} - \bm{V}\|_{\text{F}}
    \end{aligned}
\end{equation*}
Again, we can further upper bound the right hand side by using the definition of the spectral norm of a matrix:
\begin{equation}
\begin{aligned}
    \| \bm{X}_i^{\top}\bm{X}_i - \bm{X}_j^{\top}\bm{X}_j + \alpha \bm{I} \|_2 &= \sup_{\bm{v} \in \mathbb{S}^{d-1}} \bm{v}^{\top} \left( \bm{X}_i^{\top}\bm{X}_i - \bm{X}_j^{\top} \bm{X}_j + \alpha \bm{I} \right) \bm{v} \\
    &\leq \gamma_{\max}\left( \bm{X}_i^{\top}\bm{X}_i \right) - \gamma_{\min}\left( \bm{X}_j^{\top}\bm{X}_j \right) + \alpha
\end{aligned}
\end{equation}
where $\mathbb{S}^{d-1} = \{\bm{x} \in \mathbb{R}^d \; |\; \|\bm{x}\|_2 = 1\}$ is the sphere in $d$ dimensions.

As a result, as long as the regularization parameter $\alpha$ satisfies the following condition 
\textcolor{black}{\[\alpha > \max\left\{0, \gamma_{\max}\left( \bm{X}_j^{\top}\bm{X}_j \right) - \gamma_{\min}\left( \bm{X}_i^{\top}\bm{X}_i \right)\right\},\] }
the disparity error objective between groups $s_i$ and $s_j$ is smooth with smoothness parameter $\gamma_{\max}\left( \bm{X}_i^{\top}\bm{X}_i \right) - \gamma_{\min}\left( \bm{X}_j^{\top}\bm{X}_j \right) + \alpha > 0$. By symmetry, we can derive the smoothness condition for other pairs of groups as well, which results in the following condition on the regularization parameter:
$$\alpha \geq \underset{i,j \in [k]}{\max} \gamma_{\min}\left(\bm{X}_i^\top\bm{X}_i \right) - \gamma_{\max}\left(\bm{X}_j^\top\bm{X}_j\right)$$
to satisfy the smoothness of all objectives $f_i(\cdot), i=2, \ldots, m$. We note that one can use different regularization parameters for each pair   depending on the eigen-gap between their covariance matrices as well.

We now turn to prove the convergence rate of the proposed algorithm to a Pareto fair subspace in general case as stated in~(\ref{eqn-g}), where we assume that the individual loss functions $f_i(\bm{U}), i=1,2, \ldots, m$ satisfy Lipschitz continuous  gradient condition (smoothness) with smoothness parameters $L_i, i=1,2, \ldots, m$. We also use $L$ to denote the maximum smoothness parameter, i.e., $L = \max_{i=1, 2, \ldots, m} L_i$.

\begin{proof}
The proof begins by first bounding the difference in function values of each objective $f_i\left(\bm{U}_{t}\right) -
f_i(\bm{U}^*)$, individually, following the  convexity assumption:
\begin{equation*}
    f_i(\bm{U}_{t}) - f_i(\bm{U}^*) \leq \trace \left(\left(\mathrm{\mathbf{G}}_i^{(t)}\right)^\top\left(\bm{U}_{t} - \bm{U}^*\right)\right)
\end{equation*}
Then we can multiply both sides by $\hat{\lambda}_i$ and sum over $i$:
\begin{equation*}
    \begin{aligned}
    \sum_{i=1}^m \hat{\lambda}_i^{(t)} \left(f_i(\bm{U}_{t}) - f_i(\bm{U}^*)\right) & \leq \sum_{i=1}^m \hat{\lambda}_i^{(t)} \trace \left(\left(\mathrm{\mathbf{G}}_i^{(t)}\right)^\top\left(\bm{U}_{t} - \bm{U}^*\right)\right)\\
    &  =  \trace \left(\left(\sum_{i=1}^m \hat{\lambda}_i^{(t)} \mathrm{\mathbf{G}}_i^{(t)}\right)^\top\left(\bm{U}_{t} - \bm{U}^*\right)\right)\\
    & = -\trace \left(\mathrm{\mathbf{D}}_t^\top\left(\bm{U}_{t} - \bm{U}^*\right)\right)\\
    & = \frac{1}{\eta} \trace\left( \left(\bm{U}_{t} - \bm{U}_{t+1}\right)^\top\left(\bm{U}_{t} - \bm{U}^*\right)\right)\\
    & = \frac{1}{2\eta} \left(\lVert \bm{U}_{t} - \bm{U}^* \rVert^2_{\text{F}} + \lVert \bm{U}_{t} - \bm{U}_{t+1}
\rVert^2_{\text{F}} - \lVert \bm{U}_{t+1} - \bm{U}^* \rVert^2_{\text{F}} \right)  \\
    & = \frac{1}{2\eta} \left(\lVert \bm{U}_{t} - \bm{U}^* \rVert^2_{\text{F}} - \lVert \bm{U}_{t+1} - \bm{U}^*
\rVert^2_{\text{F}} \right) + \frac{\eta}{2} \lVert\mathrm{\mathbf{D}}_t \rVert^2_{\text{F}}  \\
    &\stackrel{\text{\ding{192}}}{\leq} \frac{1}{2\eta} \left(\lVert \bm{U}_{t} - \bm{U}^* \rVert^2_{\text{F}} - \lVert \bm{U}_{t+1} - \bm{U}^*\rVert^2_{\text{F}} \right) + \frac{\eta L^2}{2}  \\
\end{aligned}
\end{equation*}
where \ding{192} follows from the smoothness assumption and definition of $L$. By summing up above inequality for all iterations  $t=1,2,\ldots, T$ gives:
\begin{align}
   \nonumber\sum_{t=1}^{T} \sum_{i=1}^m \hat{\lambda}_i^{(t)} \left(f_i(\bm{U}_{t}) - f_i(\bm{U}^*)\right) &\leq  \frac{1}{2\eta} \sum_{t=1}^{T} \left(\lVert
\bm{U}_{t} - \bm{U}^* \rVert^2_{\text{F}} - \lVert \bm{U}_{t+1} - \bm{U}^* \rVert^2_{\text{F}}  + \frac{\eta L^2}{2}\right) \notag \\\nonumber
   & = \frac{1}{2\eta} \left(\lVert \bm{U}_{1} - \bm{U}^* \rVert^2_{\text{F}} - \lVert \bm{U}_{T} - \bm{U}^{*}
\rVert^2_{\text{F}} \right) + \frac{\eta L^2 T}{2} \\\label{eq:lastconv}
   &\leq \frac{1}{2\eta} \lVert \bm{U}_{1} - \bm{U}^* \rVert^2_\text{F} + \frac{\eta L^2 T}{2} 
\end{align}
For the left hand side, since the $f_i(\bm{U}_{t})$ is a decreasing function by increasing $t$, we can bound it by:
\begin{align}
    \nonumber\sum_{t=1}^{T} \sum_{i=1}^m \hat{\lambda}_i^{(t)} \left(f_i(\bm{U}_{t}) - f_i(\bm{U}^*)\right) & \geq \sum_{i=1}^{m}\left(\sum_{t=1}^T \hat{\lambda}_i^{(t)}\right) \left(f_i(\bm{U}_{T}) - f_i(\bm{U}^*)\right) \\\label{eq:alpha_avg}
    & = \sum_{i=1}^{m} T\cdot \bar{\lambda}_i \left(f_i(\bm{U}_{T}) - f_i(\bm{U}^*)\right),
\end{align}
where $\bar{\lambda}_i = \frac{1}{T}\sum_{t=1}^T \hat{\lambda}_i^{(t)}$. By plugging (\ref{eq:alpha_avg}) back into (\ref{eq:lastconv}) we have:
\begin{equation}
    \sum_{i=1}^{m} \bar{\lambda}_i \left(f_i(\bm{U}_{T}) - f_i(\bm{U}^*)\right) \leq \frac{1}{2\eta T} R^2 + \frac{\eta L^2}{2}, 
\end{equation}
where $\lVert \bm{U}_{1} - \bm{U}^* \rVert^2_{\text{F}} = R^2$. By setting $\eta=\frac{R}{L\sqrt{T}}$, the convergence inequality reduces to:
\begin{equation}
    \sum_{i=1}^{m} \bar{\lambda}_i \left(f_i(\bm{U}_{T}) - f_i(\bm{U}^*)\right) \leq \frac{R L}{2\sqrt{T}}, 
\end{equation}
We note that by setting $\beta = \sqrt{T}/R$, the sufficient decrease condition in (\ref{eq:back}) is satisfied if the backtacking is employed.
\end{proof}

\section{Proof of Theorem~\ref{thm:converg}}\label{app:nonconvex}

\begin{proof}[Proof of Theorem~\ref{thm:converg}]
The proof proceeds using the smoothness condition. In particular, for a smooth function $f:\mathbb{R}^{d \times r} \mapsto \mathbb{R}$ with smoothness parameter $L$  it holds that (descent lemma),
\[ f(\bm{V}) \leq f(\bm{U}) + \trace\left(\nabla f(\bm{U}), \bm{V} - \bm{U}\right) + \frac{L}{2} \| \bm{V} - \bm{U}\|_{\text{F}}^2. \]

From the backtracking line search we can find the learning rate at each step that gives us the maximum decrease. To that end, we will start from $\frac{1}{2}$ and decrease it each time by half until all the objective have a maximum decrease defined in (\ref{eq:back}). Thus, if an $\eta$ satisfies the condition the one step before that, $2\eta^*$, there is at least one objective not satisfying the condition. For instance, we consider the $i$th objective does not satisfy the condition with $2\eta^*$:
 \begin{equation}\label{eq:maxdec}
     f_i\left(\bm{U}_t + \left(2\eta^*\right)\mathrm{\mathbf{D}}_t\right) \geq f_i(\bm{U}_t) + \beta \left(2\eta^*\right) \trace \left( \mathrm{\mathbf{D}}_t^\top\mathrm{\mathbf{G}}_i^{(t)}\right)
\end{equation}
Now from the Lipschitz continuity of the function as:
\begin{align}\nonumber
    \nonumber f_i\left(\bm{U}_{t+1}\right) &\leq f_i(\bm{U}_t) + \trace\left(\nabla f_i(\bm{U}_t)^\top \left(\bm{U}_{t+1} - \bm{U}_t\right)\right) + \frac{L_i}{2} \left\| \bm{U}_{t+1} - \bm{U}_t\right\|_{\text{F}}^2 \\
    f_i\left(\bm{U}_t + \left(2\eta^*\right)\mathrm{\mathbf{D}}_t\right) & \leq f_i(\bm{U}_t) + \left(2\eta^*\right)\trace\left(\nabla f_i(\bm{U}_t)^\top \mathrm{\mathbf{D}}_t\right) + \frac{L_i\left(2\eta^*\right)^2}{2} \| \mathrm{\mathbf{D}}_t\|_{\text{F}}^2,\label{eq:lipschitz}
\end{align}
We proceed by combining (\ref{eq:maxdec}) with this Lipschitz continuity (\ref{eq:lipschitz}) inequality which results in:
\begin{equation}
    \trace \left( \mathrm{\mathbf{D}}_t^\top\mathrm{\mathbf{G}}_i^{(t)}\right) \geq \frac{-L_i\eta^*}{1-\beta} \| \mathrm{\mathbf{D}}_t\|_{\text{F}}^2,
\end{equation}
Also, from (\ref{eq:bound_trace}), we note that the left hand side  term has a upper bound of $-\| \mathrm{\mathbf{D}}_t\|_{\text{F}}^2$,  implying
\begin{equation}\label{eq:eta_bound}
    \eta^* \geq \frac{1-\beta}{L_i},
\end{equation}
which we can replace $L_i$ with $L_{\text{max}} = \underset{1\leq i \leq m}{\max} L_i$, to obtain the lower bound on learning rate, that is $\eta_t \geq C_1 = \min\{1,\frac{1-\beta}{L_{\text{max}}}\}$. Hence, by the choice of learning rate, we know that at every step, we have the maximum decrease for every objective $1\leq i \leq m$:
\begin{equation*}
    \begin{aligned}
        f_i\left(\bm{U}_{t+1} \right) &\leq f_i(\bm{U}_t) + \beta \eta_t \trace \left( \mathrm{\mathbf{D}}_t^\top\mathrm{\mathbf{G}}_i^{(t)}\right)\\
         &\stackrel{\text{\ding{192}}}{\leq} f_i(\bm{U}_t) -
\beta \eta_t\| \mathrm{\mathbf{D}}_t\|_{\text{F}}^2 \\
    &\stackrel{\text{\ding{193}}}{\leq} f_i(\bm{U}_t) -
\beta C_1\| \mathrm{\mathbf{D}}_t\|_{\text{F}}^2 
    \end{aligned}
\end{equation*}
where \ding{192} comes from Lemma~\ref{lemma:descent} and (\ref{eq:bound_trace}), and  \ding{193} follows from the bound on $\eta_t$ in (\ref{eq:eta_bound}). 

Summing up the last inequality for all iterations  $t=1,\ldots,T$ and setting $C=\beta C_1$, we obtain:
\begin{equation}
\label{eqn-33}
        \sum_{t=1}^T C\| \mathrm{\mathbf{D}}_t\|_{\text{F}}^2 \leq  \sum_{t=1}^T \left(f_i\left(\bm{U}_{t} \right) - f_i\left(\bm{U}_{t+1} \right) \right), 
\end{equation}
In~(\ref{eqn-33}),  the left hand side is greater than $\underset{t=1, 2, \ldots, T}{\min} CT \| \mathrm{\mathbf{D}}_t\|_{\text{F}}^2$; and the right hand telescopes and can be upper bounded  by $\mathsf{M}_u - \mathsf{M}_l$, where $\mathsf{M}_u=\underset{i=1, 2, \ldots, m }{\max}f_i(\bm{U}_1)$ is the maximum value among objective functions at starting point and $\mathsf{M}_l$ is the lower bound on all objectives. Then the inequality becomes:
\begin{equation}
    \min_{t=1,2, \ldots, T} \|\mathrm{\mathbf{D}}_t  \|_{\text{F}} \leq  \sqrt{\frac{\mathsf{M}_u - \mathsf{M}_l}{ C T}} \end{equation}
indicating that the proposed algorithm convergence to a Pareto stationary point (a point where the descent direction is $\bm{0}$ and none of the objectives can be further improved). 
\end{proof}

\end{document}